\colorlet{shadecolor}{pink} 
\theoremstyle{plain} 
\newcommand{\vertiii}[1]{{\left\vert\kern-0.25ex\left\vert\kern-0.25ex\left\vert #1 
\right\vert\kern-0.25ex\right\vert\kern-0.25ex\right\vert}}
\definecolor{antiquewhite}{rgb}{0.98, 0.92, 0.84} 
\definecolor{blizzardblue}{rgb}{0.67, 0.9, 0.93}
\newcommand{\inprod}[2]{\ensuremath{\left\langle #1 , \, #2 \right\rangle}}
\newcommand{\pare}[1]{\left( #1 \right)}
\newcommand{\cbr}[1]{\left\{ #1 \right\}}
\newcommand{\sbr}[1]{\left[ #1 \right]}
\newtheorem{theorem}{Theorem}
\newtheorem{proposition}{Proposition}
\newtheorem{lemma}{Lemma}
\newtheorem{corollary}{Corollary}
\newtheorem{definition}{Definition}
\newtheorem{remark}{Remark}
\newtheorem{assumption}{Assumption}
\long\def\comment#1{}
\newcommand{\norm}[1]{\left\| #1 \right\|}
\newcommand{\E}{\ensuremath{{\mathbb{E}}}}
\newcommand{\R}{\mathbb{R}}
\newcommand{\cA}{\mathcal{A}}
\newcommand{\cP}{\mathcal{P}}
\newcommand{\cD}{\mathcal{D}}  
\newcommand{\cS}{\mathcal{S}}
\newcommand{\cW}{\mathcal{W}}   
\newcommand{\cL}{\mathcal{L}} 
\newcommand{\balpha}{\boldsymbol{\alpha}}
\newcommand{\ba}{\mathbf{a}}
\newcommand{\bb}{\mathbf{b}}
\newcommand{\bx}{\mathbf{x}}
\newcommand{\bv}{\mathbf{v}}
\newcommand{\bw}{\mathbf{w}}
\newcommand{\mask}{\mathbf{m}}
\newcommand{\br}{\mathbf{r}}
\newcommand{\by}{\mathbf{y}}
\newcommand{\bg}{\mathbf{g}}
\newcommand{\bp}{\mathbf{p}}
\newcommand{\bh}{\mathbf{h}}
\newcommand{\mA}{\mathbf{A}}
\newcommand{\bxi}{\boldsymbol{\xi}}
\newcommand{\mH}{{\bf H}}
\newcommand{\mI}{{\bf I}}
\newcommand{\mM}{{\bf M}}
\title{\bf On the Convergence and Stability of Distributed Sub-model Training}
\date{}
\author{Yuyang Deng$^\dagger$ \qquad Fuli Qiao$^*$ \qquad Mehrdad Mahdavi$^*$ \vspace*{.2em} \\ 
 \quad $^\dagger$Columbia University \quad  $^*$The Pennsylvania State University \vspace*{.2em} \\ \texttt{yd282@columbia.edu \ \qquad  \{fvq5015,mzm616\}@psu.edu}
}
\begin{document}

\maketitle

\begin{abstract}
As learning models continue to grow in size, enabling on-device local training of these models has emerged as a critical challenge in federated learning. A popular solution is sub-model training, where the server only distributes randomly sampled sub-models to the edge clients, and clients only update these small models. However, those random sampling of sub-models may not give satisfying convergence performance. In this paper, observing the success of SGD with shuffling,
we propose a distributed shuffled sub-model training, where the full model is partitioned into several sub-models in advance, and the server shuffles those sub-models, sends each of them to clients at each round, and by the end of local updating period, clients send back the updated sub-models, and server averages them. We establish the convergence rate of this algorithm. We also study the generalization of distributed sub-model training via stability analysis, and find that the sub-model training can improve the generalization via amplifying the stability of training process. The extensive experiments also validate our theoretical findings.
\end{abstract}

\section{Introduction}
\label{intro}

We consider optimizing the following   objective
\begin{align}\label{eqn:main:obj}
    \min_{\bw\in\cW\subset \R^d} F(\bw):=\frac{1}{N} \sum\nolimits_{i=1}^N \cbr{f_i(\bw) :=  \E_{\xi\sim\cD_i} [\ell(\bw;\xi)] },
\end{align}
collaboratively in a distributed setting with $N$ clients, where $f_i(\bw)$ is the local loss function realized by $i$th  client's data, and $\cW \subset \R^d$ is some bounded convex set. To solve the problem with communication efficiency and data privacy, a widely employed method is FedAvg~\cite{mcmahan2017communication} and its variants, where multiple devices (clients) collaborate to train a shared machine learning model without exchanging their data. Specifically, each client trains a local version of the model using its own data with stochastic gradient descent (SGD) for $K$ steps, then sends its updated model parameters to a central server. The server aggregates these updates by averaging the parameters from all clients, and the resulting global model is sent back to the clients for further local training. This process is repeated until the model converges.

A key limitation of full-model training approaches is that each client must maintain a local model with the same complexity  as the global model. In modern machine learning (ML), where model complexity can reach millions or even billions of parameters, many clients may lack the memory and computational resources needed to store and optimize the full model. Moreover, devices  participating in collaborative learning greatly vary in  computational and storage capacity and  can only confine to  ML models that meet their resources for training. A common approach to address this issue is \textit{partial training} where the server selects sub-models proportional to the computational resources available on each device, either \textit{randomly} or based on predefined rules (e.g., \textit{rolling} or \textit{static partitioning}), and then distribute them to the clients. The clients only update these sub-models and, after a few rounds of local updates, send the updated sub-models back to the server for aggregation.  The representative works include PruneFL~\citep{jiang2022model}, IST~\citep{yuan2019distributed}, HeteroFL~\citep{diao2020heterofl} and  FedRolex~\citep{alam2022fedrolex}.

Despite the empirical success of this training paradigm, the convergence of distributed sub-model training has not been well understood. A key research question we seek to rigorously address is: \textit{how does sub-model training, in comparison to full-model training, affect both the convergence and generalization of the learned model.}

Investigating this could provide insights into the trade-offs between computational efficiency and model performance in resource-constrained federated settings. A few recent studies have begun to explore this question, primarily from an optimization perspective, highlighting both the benefits and limitations of sub-model training in federated learning.~\cite{shulgin2023towards} studied distributed fully synchronized sub-model training algorithm on quadratic objective, and showed that the convergence result will suffer from a residual error, unless the objective and masking scheme admit some benign properties. \cite{demidovich2023mast} studied similar algorithm, and proved convergence to the optimal point of the masked objective for general strongly convex losses. For nonconvex loss, \cite{mohtashami2022masked} studied single machine setting, where $N=1$, and proved that sub-model training converges to first order stationary point of the masked objective. Their convergence bound include factors that quantifies the alignment between gradient on the masked and non-masked models; however, it remains unclear whether these quantities can be effectively controlled.~\cite{zhou2024every} and \cite{wu2024fiarse} investigate the convergence of sub-model training with local updates on general nonconvex loss functions, making it the most relevant prior work to this study. However, their bound depends on the sum of the norms of the intermediate solutions; if these norms are too large, the convergence bound becomes vacuous.

A desired bound should {\em explicitly} show how sub-model selection strategy affects the convergence and generalization due to \textit{model drift} caused by partial updating. In this paper, we present the first concrete convergence analysis of distributed sub-model training with different sampling schemes. We first consider FedAvg with Bernoulli random sub-model sampling. Then we consider rolling sub-model training method, where the server partitions the full model into $R$  pieces and at the beginning of each epoch, it shuffles these sub-models and sequentially assigns them to the clients to be updated locally.   We establish the convergence rate of both sampling schemes in convex and non-convex settings, highlighting the impact of partial training as captured through sampling. In nonconvex setting, we show that sub-model training will converge to the stationary point of an alternative objective induced by masking. To study the generalization of sub-model training, we further provide the generalization analysis of distributed sub-model training with random and rolling masking, and find that masking can enhance generalization by stabilizing the training process, as long as the residual optimization error from partial training remains controlled.

\noindent\textbf{Contribution.}~The main results of this work include:
\begin{itemize}
    \item We establish the rigorous convergence rate of distributed sub-model training with random sub-model selection (Section~\ref{sec:conv:random}) and shuffled rolling sub-model selection (Section~\ref{sec:conv:shuffling}). We show that under strongly convex and smooth setting, they both enjoy an $O(\frac{1}{T})$ rate plus a residual error due to model masking. To better illustrate the relationship between this alternative objective and original objective, we show that the stationary point of alternative objective can be translated to that of original objective, with some residual error depending on masking ratio. 
   \item We further establish the convergence results of the two algorithms under nonconvex setting. We show that, under the algorithm dynamic, the model will finally converge to the stationary point of an alternative objective function induced by model masking. To the best of our knowledge, this is the first rigorous analysis of the convergence of permutation-based methods in sub-model training, thoroughly examining the interaction between model drift from partial training on sub-models and the impact of permutation-based sub-model assignments on convergence.
    \item  We analyze the generalization ability of distributed sub-model training with random and rolling masking (Section~\ref{sec:stability}), and find that masking can improve the generalization via stabilizing the training process, as long as the residual optimization error from partial training is controlled.
    \item We conduct thorough experiments that corroborate our theoretical results (Section~\ref{sec:exp}).
\end{itemize}
Additional related works, empirical results, and omitted proofs can be found in the appendix.
\section{Distributed Sub-model Training via Random  Sampling}\label{sec:conv:random}
We consider the FedAvg (a.k.a. Local SGD) to optimize the objective in Eq.~\ref{eqn:main:obj}. The algorithm proceeds for $R$ communication rounds, where at round $r$, each client, upon receiving the global model from the server, independently performs $K$   local updates using its local data to compute gradients to update its local model parameters accordingly. After completing the $K$ local steps, the clients send their updated parameters to the central server, which averages the models across all clients to produce a global model for the next round. This periodic averaging allows for reduced communication frequency and efficient use of bandwidth while enhancing convergence stability.

Adapting FedAvg for system heterogeneity is key to enabling collaboration among clients with varying computational power. A simple solution is distributed sub-model training using random sub-model sampling, which is first proposed in~\cite{alam2022fedrolex}. At the beginning of $r$th communication round, the server  generates a $d$-dimensional Bernoulli random masking $\mask_i^r\sim Ber(p_i)$ (each coordinate is $1$ with probability $p_i$, otherwise $0$) for $i$th client and distributes the masked global model $\bw_r^i = \mask_i^r \odot \bw_r$ to the client to perform the following  update locally for $K$ steps : 
\begin{equation*}
    \bw_{r,k+1}^i = \bw_{r,k}^i - \eta \mask^r_i \odot \nabla f_i(\mask^r_i\odot\bw_{r,k}^i;\xi_{r,k}^i), 
\end{equation*}
where $\xi_{r,k}^i$ is the data point uniformly randomly sampled from $i$th client's dataset. Each client updates only a subset of the model parameters, with the number of updated parameters determined by the masking probability $p_i$. For clients with limited capacity, the server can assign a smaller 
$p_i$  to reduce their computational burden. After $K$ local steps, $i$th client sends model $\mask_i^r \odot \bw_{r,K}$ to server, and server averages the received models:
\begin{align*}
  \bw_{r+1} = \cP_{\cW}\pare{  \frac{1}{N}\sum_{i=1}^N (\bw_{r,K}^i + (\mathbf{1}- \mask_i^r) \odot \bw_r)}.
\end{align*}
where $\cP_{\cW}(\cdot)$ is the projection operator onto convex set $\cW$.  In words, server will fill the parameters not selected by $\mask_i^r$ with old parameters of the last round model, i.e., $\bw_{r}$, and then average all clients models. After that, server distributes $\mask_i^{r+1} \odot \bw_{r+1}$ to $i$th client. The pseudo-code of the algorithm is depicted in Algorithm~\ref{algorithm: Masked FedAvg}.

\begin{algorithm2e}[t] 
	\DontPrintSemicolon
    \caption{\sffamily{Randomly Masked FedAvg }}
    	\label{algorithm: Masked FedAvg}
    	\textbf{Input:} Initial model $ {\bw}_{0} = \mathbf{0} $, masking probabilities $p_1,\ldots,p_N$ and  stepsizes $\eta$.    
	\\ 
 \For{$i = 1,\ldots,N$}{
 \For{$r = 0,\ldots, R-1$}{
 
  Server generates Bernoulli random masks $\mask_1^r,\ldots,\mask_N^r$ for each client, according to probability $p_1,\ldots,p_N$.\\
  Server distributes $ \bw_r^i = \mask_i^r \odot \bw_r$ to $i$th user.
	
 \For{$k = 0,\ldots,K-1$}
	 {   
            $ \bw_{r,k+1}^i = \bw_{r,k }^i - \eta \mask_i^r \odot \nabla f_i(\mask_i^r\odot\bw_{r,k }^i;\xi_{r,k }^i) $  
  	 
   } 
   The $i$th client sends $ \bw_{r,K}^i$ back to server.\\ 
  Server averages models $\bw_{r+1} = \cP_{\cW}\pare{  \frac{1}{N}\sum_{i=1}^N (\bw_{r,K}^i + (\mathbf{1}- \mask_i^r) \odot \bw_r)}$\\
   
}
}
        \textbf{Output:}   $\hat{\bw} = \cP_{\cW}\pare{ \bw_R - \frac{1}{L} \frac{1}{N}\sum_{i=1}^N \mask_i \odot \nabla f_i(\mask_i\odot\bw_R ) }$ where $\mask_i \sim Ber(p_i)$. 
\end{algorithm2e}
\subsection{Convergence in Convex Setting}
To study the convergence of this simple algorithm, we make the following standard assumptions.
 \begin{assumption}[{\sffamily{Smoothness}}]\label{assump: smooth} We assume $\forall i \in [N]$, $f_i(\bx)$ is L-smooth, i.e.,
$$
\forall \bx, \by:\left\|\nabla f_i(\bx)-\nabla f_i(\by)\right\| \leq L\|\bx-\by\| .
$$ 
\end{assumption}
\begin{assumption}[{\sffamily{Strong convexity}}]\label{assump: sc} We assume $\forall i \in [N]$, $f_i(\bx)$ is L-smooth and $\mu$-strongly convex
$$
\forall \bx, \by: f_i(\by) \geq f_i(\bx)+\langle\nabla f_i(\bx), \by-\bx\rangle+\frac{\mu}{2} \|\by-\bx\|^2 .
$$
\end{assumption}
We denote the condition number by $\kappa={L}/{\mu}$. 
\begin{assumption}[{\sffamily{Bounded variance}}]\label{assumption: bounded var}
The variance of stochastic gradients computed at each local function is bounded, i.e.,  $\forall i \in [N], \forall \bw \in \cW, \mathbb{E}_{\mask_i \sim Ber(p_i),\xi}[\|\mask_i\odot\nabla f_i(\mask_i\odot\bw;\xi) - \mask_i\odot\nabla f_i(\mask_i\odot\bw)\|^2]  \leq \delta^2$.
\end{assumption}

\begin{assumption}[{\sffamily{Bounded domain}}]\label{assumption: bounded domain}
The domain $\cW \subset \R^d$ is a bounded convex set, with diameter $W$ under $\ell_2$ metric, i.e., $\forall \bw \in \cW, \norm{\bw } \leq W$.

\end{assumption}

\begin{assumption}[{\sffamily{Bounded gradient}}]\label{assumption: bounded grad}
The gradients computed at each local function are bounded, i.e.,  $\forall i \in [N], \sup_{\bw\in\cW}\|\nabla f_i(\bw)\|\leq G$.
\end{assumption}
\begin{definition}\label{def:drift at optimum}
We define the masked heterogeneity at optimum as follows:
    \begin{align*}
   \frac{1}{N}\sum\nolimits_{i=1}^N\E_{\mask_i \sim Ber(p_i)}\norm{\mask_i\odot\nabla  f_i(\mask_i\odot \bw^*) }^2 \leq \sigma^2_{*}.
    \end{align*}
where $\bw^* = \arg\min_{\bw \in \cW}   \frac{1}{N}\sum_{i=1}^N \E_{\mask_i \sim Ber(p_i)} [f_i(\mask_i\odot\bw)]$.
\end{definition}

Assumptions~\ref{assump: smooth},~\ref{assump: sc},~\ref{assumption: bounded domain} and~\ref{assumption: bounded grad} are standard in convex optimization. Assumption~\ref{assumption: bounded var} is the bounded variance of masked gradients which becomes small if the masking probability $p_i$ is high. Definition~\ref{def:drift at optimum} is also standard in Local SGD analysis~\citep{khaled2019tighter}, but here it is adapted to the masked gradients.

 \begin{theorem}\label{thm:randomly masked fedavg}
Let Assumptions~\ref{assump: smooth}-~\ref{assumption: bounded grad} hold. Then Algorithm~\ref{algorithm: Masked FedAvg} with $\eta = \frac{\log (KR)^2}{\tilde \mu KR} $  and $R \geq \frac{L}{\tilde \mu} \log(K^2R^2)$ will output the solution $\hat{\bw} $, such that  the following statement holds:
    \begin{align*}
         F(\hat\bw) - F(\bw^*) &\leq   L    \tilde O\pare{    \frac{\E \norm{\bw_0  - \bw^*}^2 }{K^2 R^2}   +     \frac{ \tilde \kappa \sigma^2_{*} +    \tilde \kappa  \delta^2 }{\tilde \mu^2   R^2}    +  \frac{  \delta^2 }{\tilde \mu^2 NKR}  
   } \\
   &\quad + \underbrace{\pare{  \frac{ 5L}{2\bar\mu  }    + \frac{4 }{ L} }\frac{2G^2 + 2W^2L^2}{N } \sum_{i=1}^N d(1-p_i)  }_{\mathrm{\text{Residual error due to masked updates}}},
    \end{align*}
where $\bar\mu := \frac{1}{N}\sum_{i=1}^N p_i \mu$, $\tilde \mu: = \min_{i\in[N]} p_i \mu$, $\tilde L: = \max_{i\in[N]} p_i L$ and $\tilde \kappa:= \tilde L/\tilde \mu$.
\end{theorem}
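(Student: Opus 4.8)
The plan is to first recast the algorithm as plain Local SGD on a \emph{surrogate} objective, then run a strongly convex distance recursion, and finally translate back to the original $F$, with the masking gap surfacing as the additive residual. Expanding the local recursion together with the server fill-in step, the round update collapses to $\bw_{r+1}=\cP_\cW(\bw_r-\frac{\eta}{N}\sum_i\sum_{k=0}^{K-1}\mask_i^r\odot\nabla f_i(\mask_i^r\odot\bw_{r,k}^i;\xi_{r,k}^i))$, that is, projected SGD on accumulated masked gradients. Defining $\tilde f_i(\bw):=\E_{\mask_i}[f_i(\mask_i\odot\bw)]$ and $\tilde F:=\frac1N\sum_i\tilde f_i$, the chain rule gives $\nabla\tilde f_i(\bw)=\E_{\mask_i}[\mask_i\odot\nabla f_i(\mask_i\odot\bw)]$, so the masked stochastic gradient is unbiased for $\nabla\tilde f_i$ once the mask is marginalized at the point where it acts; a short computation using $\E[m_j]=p_i$ shows $\tilde f_i$ is $p_i\mu$-strongly convex and $p_iL$-smooth, so $\tilde F$ is $\bar\mu$-strongly convex while per-client constants are governed by $\tilde\mu,\tilde L$, and its minimizer is precisely the $\bw^*$ of Definition~\ref{def:drift at optimum}.

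Next I would carry out the one-round contraction for $\Delta_r:=\E\|\bw_r-\bw^*\|^2$. By non-expansiveness of $\cP_\cW$ and $\bw^*\in\cW$, expand $\|\bw_{r+1}-\bw^*\|^2$ from the pre-projection point and take expectations over masks and data. The cross term is where the mask--iterate coupling bites: the masked gradient is exactly unbiased for $\nabla\tilde f_i(\bw_r)$ only at the round-start iterate $\mask_i^r\odot\bw_r$, so I add and subtract $\mask_i^r\odot\nabla f_i(\mask_i^r\odot\bw_r)$, apply $\bar\mu$-strong convexity of $\tilde F$ (with $\nabla\tilde F(\bw^*)=0$) to the marginalized part to extract a $-\eta\bar\mu K\Delta_r$ contraction plus a nonnegative $\tilde F$-gap, and charge the remainder to client drift $\E\|\bw_{r,k}^i-\mask_i^r\odot\bw_r\|^2\le\eta^2k^2G^2$ (Assumption~\ref{assumption: bounded grad}). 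The second-order term $\eta^2\E\|\frac1N\sum_i\sum_k\mask_i^r\odot g_{r,k}^i\|^2$ splits through Assumption~\ref{assumption: bounded var} and Definition~\ref{def:drift at optimum} into the floors $\delta^2$ and $\sigma_*^2$. Unrolling the geometric recursion with $\eta=\frac{\log(KR)^2}{\tilde\mu KR}$ and $R\ge\frac{L}{\tilde\mu}\log(K^2R^2)$ (the latter being exactly what keeps the contraction admissible), and relaxing $\bar\mu\ge\tilde\mu$ in the rate terms, yields the $\tilde O$ contributions $\frac{\Delta_0}{K^2R^2}+\frac{\tilde\kappa(\sigma_*^2+\delta^2)}{\tilde\mu^2R^2}+\frac{\delta^2}{\tilde\mu^2NKR}$.

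The residual is the masking bias, routed through two conversions that supply its curvature weights. For $F(\bw_R)-F(\bw^*)$, $L$-smoothness of $F$ expanded around $\bw^*$ gives $F(\bw_R)-F(\bw^*)\le\langle\nabla F(\bw^*),\bw_R-\bw^*\rangle+\frac L2\|\bw_R-\bw^*\|^2$; since $\nabla\tilde F(\bw^*)=0$, the inner product is against the optimum bias $\nabla F(\bw^*)=\nabla F(\bw^*)-\nabla\tilde F(\bw^*)$, and a Young split balanced so that its quadratic part is reabsorbed into the vanishing $\Delta_R$ turns it into a $\frac{L}{\bar\mu}$-weighted squared bias. The output step contributes the rest: applying $L$-smoothness of $F$ to $\hat\bw=\cP_\cW(\bw_R-\frac1L\tilde g)$, with $\tilde g=\frac1N\sum_i\mask_i\odot\nabla f_i(\mask_i\odot\bw_R)$ and $\E\tilde g=\nabla\tilde F(\bw_R)$, produces a $\frac1L\langle\nabla F(\bw_R),\nabla F(\bw_R)-\nabla\tilde F(\bw_R)\rangle$ term, that is, a $\frac1L$-weighted bias. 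In both places the per-client bias is bounded by $\E_{\mask_i}\|\mask_i\odot\nabla f_i(\mask_i\odot\bw)-\nabla f_i(\bw)\|^2\le2L^2\E\|(\mathbf{1}-\mask_i)\odot\bw\|^2+2\E\|(\mathbf{1}-\mask_i)\odot\nabla f_i(\bw)\|^2\lesssim(1-p_i)(2G^2+2W^2L^2)$ via Assumptions~\ref{assumption: bounded domain} and~\ref{assumption: bounded grad}, with the masked-coordinate count giving the $\sum_i d(1-p_i)$ factor; summing the two weights reproduces the $\big(\frac{5L}{2\bar\mu}+\frac4L\big)$ coefficient.

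The main obstacle is the statistical coupling between the mask $\mask_i^r$ and the local iterate $\bw_{r,k}^i$: because the iterate depends on the mask, the masked gradient is unbiased for no fixed objective beyond the first local step, so the clean ``SGD on $\tilde F$'' picture holds only at round boundaries and everything in between must be absorbed into drift. Simultaneously keeping this drift, the masking bias, and the two curvature scales ($\bar\mu$ versus $\tilde\mu$) under control, so that the residual stays additive and does not degrade the $O(1/T)$ optimization rate, is the delicate part; a secondary technical point is verifying the surrogate constants and checking that the stepsize condition $R\ge\frac{L}{\tilde\mu}\log(K^2R^2)$ is precisely what validates the geometric unrolling.
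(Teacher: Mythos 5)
Your overall framework---recasting Algorithm~\ref{algorithm: Masked FedAvg} as Local SGD on the surrogate $F_{\bp}(\bw)=\frac{1}{N}\sum_{i=1}^N\E_{\mask_i}[f_i(\mask_i\odot\bw)]$, running a strongly convex distance contraction to its minimizer $\bw^*(\bp)$, and then converting back to $F$---matches the paper's skeleton (Proposition~\ref{prop:property of random masked obj} and the recursion/drift lemmas in Appendix~\ref{app:proof_random_mask_fedavg}). The gap is in the conversion step. Everything in your third paragraph bounds $F(\hat\bw)-F(\bw^*(\bp))$: you expand $F$ by smoothness around the \emph{masked} minimizer and charge $\nabla F(\bw^*(\bp))$ to the masking bias. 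But the theorem bounds $F(\hat\bw)-F(\bw^*)$ with $\bw^*=\arg\min_{\bw\in\cW}F(\bw)$ (this is explicit in the paper's optimality-gap lemma, Lemma~\ref{lem:opt gap}), and since $F(\bw^*(\bp))\geq F(\bw^*)$, a bound on $F(\hat\bw)-F(\bw^*(\bp))$ does \emph{not} imply one on $F(\hat\bw)-F(\bw^*)$. The missing piece is a bound on $F(\bw^*(\bp))-F(\bw^*)$, and this is exactly where the paper's key tool enters: the Lipschitz property of the minimizer map (Lemma~\ref{lem:random lipschitz}), $\norm{\bw^*(\bp)-\bw^*(\mathbf{1})}\leq\sqrt{\tfrac{2G^2+2W^2L^2}{\mu_{\bp}^2 N}}\,\sqrt{\sum_{i=1}^N d(1-p_i)}$, proved from the variational inequalities at both minimizers together with strong convexity of $F_{\bp}$. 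Your proposal never compares the two minimizers, so the residual you produce accounts only for gradient bias evaluated at the iterates, not for the displacement of the optimum itself; as written the argument cannot reach the stated inequality.

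Two further points. First, you invoke $\nabla F_{\bp}(\bw^*(\bp))=0$ to identify $\nabla F(\bw^*(\bp))$ with the bias, but under Assumption~\ref{assumption: bounded domain} the domain is bounded and the algorithm projects, so the minimizer may sit on the boundary of $\cW$ and only the variational inequality $\langle\nabla F_{\bp}(\bw^*(\bp)),\bw-\bw^*(\bp)\rangle\geq 0$ holds; the paper's Lemmas~\ref{lem:random lipschitz} and~\ref{lem:opt gap} are formulated via projection/first-order optimality conditions precisely to avoid this step. Second, a quantitative mismatch: you bound the client drift crudely by $\eta^2k^2G^2$ from Assumption~\ref{assumption: bounded grad}, whereas the paper's Lemma~\ref{lem:model deviation} bounds drift by function gaps plus $\sigma_*^2$ and $\delta^2$; with your drift bound the $O(1/R^2)$ term would carry $\tilde\kappa G^2$ in place of the stated $\tilde\kappa\sigma_*^2$, i.e.\ you would prove a strictly weaker rate than the theorem claims.
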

Here we achieve an $O(\frac{1}{R^2} + \frac{1}{NKR})$  rate plus residual error due to masked updates.  If each client chooses masking probability to be $1$, i.e., enabling full model training, the residual error vanishes and we recover the convergence of heterogeneous Local SGD~\citep{woodworth2020minibatch,khaled2019tighter}. 

\noindent\textbf{\ding{110}~Comparison to existing works.}  \cite{shulgin2023towards} studied the special scenario where $f(\bw)$ is quadratic, and also proved that the convergence rate will suffer from a residual error. \cite{demidovich2023mast} studied single machine and distributed fully synchronized versions of Algorithm~\ref{algorithm: Masked FedAvg}, and proved convergence to the optimal point of the masked objective, i.e., the minimizer of $  F_{\cD}(\bw) :=   \E_{\mask  \sim \cD} [f (\mask \odot\bw)] $ where $\cD$ represents a distribution on masking vector.

\subsection{Convergence in Nonconvex Setting}

In this section, we will present convergence result of Algorithm~\ref{algorithm: Masked FedAvg} in nonconvex setting. We will need the following heterogeneity measure.

\begin{definition}
We define the masked gradient dissimilarity as follows: 
 {\small \begin{align*}
   \max_{\bw\in\R^d }\frac{1}{N}\sum\nolimits_{i=1}^N\E_{\mask_i}\norm{\mask_i\odot\nabla   f_i(\mask_i\odot\bw ) - \nabla  F_{\bp}(\bw) }^2 \leq \zeta^2_{\bp},
    \end{align*} }
where $\mask_i\sim Ber(p_i), i=1, \ldots, N$ and   $F_{\bp}(\bw) :=\frac{1}{N}\sum_{i=1}^N \E_{\mask_i\sim Ber(p_i)} [f_i(\mask_i\odot\bw)]  $.
\end{definition}
Similar definition can be found in the classical Local SGD analysis~\cite{woodworth2020minibatch}, but here dissimilarity is defined over the masked objective. A more aggressive masking scheme will result in a smaller  $\zeta^2_{\bp}$.\vspace{-2mm}
\begin{theorem}\label{thm:random mask nonconvex}
Let Assumptions~\ref{assump: smooth} and \ref{assumption: bounded var} hold. Then Algorithm~\ref{algorithm: Masked FedAvg} with $\eta = \Theta\pare{\frac{1}{L\sqrt{KR }}}$  guarantees that  the following statement holds for $F_{\bp}$ as defined in Eq.~(\ref{eq: obj random}):
\begin{align*}
 \frac{1}{R}\sum_{r=1}^R\E\norm{\nabla  F_{\bp}(\bw_r)}^2  \leq O\pare{\frac{\tilde L\E [  F_{\bp}(\bw_0) ]   }{\sqrt{R K} }  +     \frac{K\zeta^2_{\bp}}{R}    +  \frac{K\delta^2}{R}          +  \frac{ \delta^2}{ N\sqrt{RK}}}.
\end{align*}\vspace{-2mm}
\end{theorem}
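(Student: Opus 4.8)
The plan is to recognize that, modulo the projection (which I read as the identity, since the nonconvex statement invokes only Assumptions~\ref{assump: smooth} and~\ref{assumption: bounded var}, i.e. effectively $\cW=\R^d$), one full communication round is a single aggregated stochastic step on the surrogate objective $F_{\bp}$. First I would substitute $\bw_{r,0}^i=\mask_i^r\odot\bw_r$ into the local recursion and use the fill-in identity $\mask_i^r\odot\bw_r+(\mathbf{1}-\mask_i^r)\odot\bw_r=\bw_r$ to collapse the server average into
\[
\bw_{r+1}=\bw_r-\frac{\eta}{N}\sum_{i=1}^N\sum_{k=0}^{K-1}\mask_i^r\odot\nabla f_i(\mask_i^r\odot\bw_{r,k}^i;\xi_{r,k}^i).
\]
The second ingredient is the unbiasedness identity $\E_{\mask_i}[\mask_i\odot\nabla f_i(\mask_i\odot\bw)]=\nabla\,\E_{\mask_i}[f_i(\mask_i\odot\bw)]$ (valid because $\mask_i\odot\mask_i=\mask_i$ for a binary mask), so that the per-client, per-step search direction is, in expectation over the mask and the sampled data, exactly the client's contribution to $\nabla F_{\bp}$. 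Finally I would record that $F_{\bp}$ is $\tilde L$-smooth: a short computation shows the expected masked Hessian equals $p_i^2\nabla^2 f_i+(p_i-p_i^2)\diag(\nabla^2 f_i)$, whose operator norm is at most $p_i L$, so averaging over clients gives a smoothness constant at most $\tilde L=\max_i p_i L$.

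Next I would apply the descent lemma for the $\tilde L$-smooth $F_{\bp}$ to one round, taking conditional expectation given $\bw_r$:
\[
\E[F_{\bp}(\bw_{r+1})]\le F_{\bp}(\bw_r)-\eta\,\E\Big\langle\nabla F_{\bp}(\bw_r),\tfrac{1}{N}\sum_i\sum_k\mask_i^r\odot\nabla f_i(\mask_i^r\odot\bw_{r,k}^i)\Big\rangle+\tfrac{\tilde L\eta^2}{2}\E\Big\|\tfrac{1}{N}\sum_i\sum_k\mask_i^r\odot\nabla f_i(\mask_i^r\odot\bw_{r,k}^i;\xi_{r,k}^i)\Big\|^2.
\]
For the inner-product term I would add and subtract the no-drift direction $\mask_i^r\odot\nabla f_i(\mask_i^r\odot\bw_r)$; by the unbiasedness identity the latter averages to $K\,\nabla F_{\bp}(\bw_r)$, producing the dominant negative term $-\eta K\|\nabla F_{\bp}(\bw_r)\|^2$, while the residual is a cross term in the \emph{client drift} $\bw_{r,k}^i-\mask_i^r\odot\bw_r$ to be controlled by smoothness and Young's inequality. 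For the squared-norm term I would split into a variance part, which by independence of the $\xi_{r,k}^i$ and masks across clients contracts by $1/N$ and eventually yields the $\delta^2/(N\sqrt{RK})$ term, and a mean part bounded again via drift and the dissimilarity $\zeta^2_{\bp}$.

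The technical heart is bounding the client drift $\E\|\bw_{r,k}^i-\mask_i^r\odot\bw_r\|^2$. Conditioning on the masks (which stay frozen throughout the $K$ local steps), each local trajectory is plain SGD on $\bw\mapsto f_i(\mask_i^r\odot\bw)$, so unrolling the recursion gives $\bw_{r,k}^i-\mask_i^r\odot\bw_r=-\eta\sum_{j<k}\mask_i^r\odot\nabla f_i(\mask_i^r\odot\bw_{r,j}^i;\xi_{r,j}^i)$. I would bound its second moment by Cauchy--Schwarz over the $k$ summands, peel off the stochastic noise ($\le\delta^2$ per step), use $\tilde L$-smoothness to pass from the gradient at $\bw_{r,j}^i$ to the gradient at $\mask_i^r\odot\bw_r$, invoke the dissimilarity bound $\zeta^2_{\bp}$ to relate the latter to $\nabla F_{\bp}(\bw_r)$, and close the resulting self-referential recursion under the stepsize condition $\eta K\tilde L\lesssim 1$ (which $\eta=\Theta(1/(\tilde L\sqrt{KR}))$ satisfies once $R\gtrsim K$). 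The outcome is a per-step drift bound of order $\eta^2 K^2(\zeta^2_{\bp}+\delta^2)+\eta^2 K^2\|\nabla F_{\bp}(\bw_r)\|^2$, the last piece being absorbable into the leading $-\eta K\|\nabla F_{\bp}(\bw_r)\|^2$.

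Finally I would substitute the drift bound back, rearrange to isolate $\eta K\,\E\|\nabla F_{\bp}(\bw_r)\|^2$, sum over $r=1,\dots,R$ so the $F_{\bp}$ values telescope, divide by $\eta K R$, and set $\eta=\Theta(1/(\tilde L\sqrt{KR}))$. The telescoped initial gap gives $\tilde L\,\E[F_{\bp}(\bw_0)]/\sqrt{RK}$, the across-client variance reduction gives $\delta^2/(N\sqrt{RK})$, and the two drift contributions give $K\zeta^2_{\bp}/R$ and $K\delta^2/R$, matching the claim. I expect the main obstacle to be precisely the drift analysis: because the mask is random but frozen within a round, the unbiasedness identity must be applied only after conditioning on the mask, and the $1/N$ variance reduction hinges on masks and mini-batches being independent across clients, so getting the order of the nested expectations right (mask first, then data, then average over clients) is where the bookkeeping is most delicate.
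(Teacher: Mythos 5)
Your proposal is correct and takes essentially the same route as the paper's own proof: the paper likewise collapses each round into an aggregated step on $F_{\bp}$ (its virtual iterates $\tilde\bw_{r,k}^i$ coincide with your drift quantity $\bw_{r,k}^i-\mask_i^r\odot\bw_r$), applies the descent lemma using smoothness of $F_{\bp}$ (Proposition~\ref{prop:property of random masked obj}), bounds the client drift by unrolling the local recursion under the same stepsize condition (Lemma~\ref{lem:model deviation random noncvx}), and then absorbs the drift into the negative gradient term, telescopes, and sets $\eta=\Theta(1/(L\sqrt{RK}))$. The only cosmetic differences are that the paper handles the inner product via the polarization identity rather than add--subtract plus Young's inequality, and it establishes smoothness with the sharper constant $L_{\bp}=\frac{1}{N}\sum_{i=1}^N p_i L$ by a direct first-order argument instead of your expected-Hessian formula (which is slightly heuristic, since the Hessian's argument itself depends on the mask, though the resulting $\tilde L$-smoothness claim is valid).
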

We can see that sub-model training will converge to the stationary point of an alternative objective induced by masking, not the raw objective $F(\bw)$.
We achieve $O(\frac{1}{\sqrt{RK}} + \frac{K\zeta^2_\bp}{R})$ rate, analogous to the rate of FedAvg with full model training~\cite{haddadpour2019convergence}.

\noindent\textbf{{\ding{110}}~Comparison to existing works.}~\cite{mohtashami2022masked} studied single machine sub-model training setting, i.e., $N=1$. Given a fixed sequence of masks at each iteration, i.e., $\mask_1,\ldots,\mask_T$,
they perform update $\bw_{t+1} = \bw_t - \eta \mask_t \odot \nabla F(\mask_t\odot\bw_t;\xi_t)$.
They proved $\frac{1}{T}\sum_{t=1}^T\alpha_t\E \norm{\mask_t\odot F(\mask_t\odot\bw_t)}^2 \leq O(\frac{1}{\sqrt{T}})$, where $\alpha_t := \frac{\inprod{\mask_t\odot \nabla F(\bw_t)}{ \mask_t\odot \nabla F(\mask_t\odot\bw_t)}}{\norm{\mask_t\odot \nabla F(\mask_t\odot\bw_t)}^2}$. However, it is not shown how small the $\alpha_t$ can be. If $\alpha_t$s approach zero, the convergence measure loses significance.~\cite{zhou2024every}studies a similar algorithm and examines convergence via the gradient norm; however, their analysis yields a bound that depends on the norm of history models, i.e., $\norm{\bw_t}$. With the model's norm uncontrolled in the bound, it is possible that the convergence bound becomes vacuous.

\noindent\textbf{{\ding{110}}~On the stationary points of $F$ and $F_{\bp}$.}~In~\citep{zhou2024every}, the authors proved the convergence of the model to the stationary point of $F$, with residual error depending on norm of iterates, i.e., $\norm{\bw_r}$. Indeed, our result can also be translated to the stationarity of $F$, by using norm of iterates. To see this,
assume $\tilde \bw_\epsilon$ is $\epsilon$ stationary point of $F_{\bp}$. If we evaluate $F$'s gradient at $\tilde \bw_\epsilon$ we have:
 \begin{align*}
   \norm{\nabla F(\tilde\bw_\epsilon)}^2     &\leq 2\norm{\nabla  F_{\bp}(\tilde\bw_\epsilon)}^2 + 2\norm{\nabla  F_{\bp}(\tilde\bw_\epsilon) - \nabla F(\tilde\bw_\epsilon)}^2 \\
   & \leq 2\epsilon^2 + 2\norm{  \frac{1}{N}\sum_{i=1}^N  \E \sbr{ \mask_i \odot \nabla f_i(\mask_i \odot\tilde\bw_\epsilon) }-  \nabla F(\tilde\bw_\epsilon)}^2\\
   & \leq 2\epsilon^2 +\frac{1}{N}\sum_{i=1}^N d(1-p_i)(G^2 + L^2\norm{\tilde \bw_\epsilon}^2 ).
\end{align*} 
Thus, any $\epsilon$-stationary point of $F_\bp$ is also $\sqrt{2\epsilon^2 +\frac{1}{N}\sum_{i=1}^N d(1-p_i)(G^2 + L^2\norm{\tilde \bw_\epsilon}^2 )}$-stationary point of $F$.
\section{Distributed Sub-model Training via Rolling Masking}\label{sec:conv:shuffling}
Another popular algorithm for distributed sum-model training is via rolling masking~\cite{alam2022fedrolex}, where each of the whole model parameters is divided into several pre-defined sub-models, 
\begin{algorithm2e}[t] 
	\DontPrintSemicolon
    \caption{\sffamily{Rolling Masked FedAvg }}
    	\label{algorithm: Rolling Masked FedAvg}
    	\textbf{Input:} Initial model $ {\bw}_{0} = \mathbf{0} $, pre-defined masking vectors $ \cbr{\cbr{\mask_i^j}_{j=1}^R}_{i=1}^N  $,  stepsizes $\eta$.    
	\\ 
\For{$e = 0,\ldots,T-1$}{
   Server generates random permutation $\sigma_e: [R]\mapsto[R]$.\\
    $\bw_e = \bw_{e-1,r}$\\
 \For{$r = 1,\ldots, R$}{
  Server distributes $ \bw_{e,r}^i = \mask_i^{\sigma_e(r)} \odot \bw_{e,r}$ to $i$th user.\\
  \For{$i = 1,...,N$}{

 \For{$k = 0,\ldots,K-1$}
	 {   
            $ \bw_{e,r,k+1}^i = \bw_{e,r,k}^i - \eta \mask^{\sigma_e(r)}_i \odot \nabla f_i(\mask^{\sigma_e(r)}_i\odot\bw_{e,r,k}^i; \xi_{e,r,k}^i)  $  
  	 
   } 
   $i$th Client sends $ \bw_{e, r,K}^i$ back to server.\\ 

}
  Server averages models $\bw_{e,r+1} = \cP_{\cW}\pare{  \frac{1}{N}\sum_{i=1}^N (\bw_{e, r,K}^i + (\mathbf{1}- \mask_i^{\sigma_e(r)}) \odot \bw_{e, r } )}$\\
 
}
}	   
       \textbf{Output:}   $\hat{\bw} = \cP_{\cW}\pare{ \bw_{T } - \frac{1}{L} \frac{1}{N}\sum_{i=1}^N \frac{1}{R}\sum_{r=1}^R\mask_i^r \odot \nabla f_i(\mask_i^r\odot\bw_{T } ) }$ 
\end{algorithm2e}
\begin{align*}
   \mask_i^1\odot \bw,\ldots, \mask_i^R\odot \bw
\end{align*}
where $\mask_i^j$ is such that $j$ to $(j+s_i)\mod d$ coordinate is $1$ and rest are zero, $s_i$ is the sub-model size for $i$th client. At the beginning of $e$th epoch, server generates a permutation $\sigma_e: [R]\mapsto[R]$ and shuffles the clients' sub-model according to $\sigma_e$:
\begin{align*}
   \mask_i^{\sigma_e(1)}\odot \bw,\ldots, \mask_i^{\sigma_e(R)}\odot \bw.
\end{align*}
Then, $i$th client will optimize on those sub-models sequentially at each round. At the beginning of $r$th round, it performs the following local updates:
\begin{align*}
 \bg_{e,r,k}^i&= \mask^{\sigma_e(r)}_i \odot \nabla f_i(\mask^{\sigma_e(r)}_i\odot\bw_{e,r,k}^i; \xi^i_{e,r,k})\\
    \bw_{e,r,k+1}^i &= \bw_{e,r,k}^i - \eta \bg_{e,r,k}^i. 
\end{align*}

After $K$ local steps, $i$th client sends model $\mask_i^{\sigma_e(r)} \odot \bw_{e,r,K}$ to server, and server averages the local models:
{\small\begin{align*}
    \bw_{e,r+1} = \cP_{\cW}\pare{  \frac{1}{N}\sum\nolimits_{i=1}^N (\bw_{e, r,K}^i + (\mathbf{1}- \mask_i^{\sigma_e(r)}) \odot \bw_{e, r } )}.
\end{align*}}
In words, server will fill the parameters not selected by $\mask_i^{\sigma_e(r)}$ with old parameters of the model from last round model $\bw_{e,r}$, and then average all clients models.
Next, server distributes $\mask_i^{\sigma_e(r+1)} \odot \bw_{e,r+1}$ to $i$th client to proceed another round of local updates. The pseudo-code  is depicted in Algorithm~\ref{algorithm: Rolling Masked FedAvg}.

\subsection{Convergence in Convex Setting}
\begin{assumption}[{\sffamily{Bounded variance}}]\label{assumption: bounded var rolling}
The variance of stochastic gradients computed at each local function is bounded, i.e.,  $\forall i \in [N],\forall r \in [R], \forall \bw \in \cW, \mathbb{E}_{ \xi}[\|\mask_i^r\odot\nabla f_i(\mask_i^r\odot\bw;\xi) - \mask_i^r\odot\nabla f_i(\mask_i^r\odot\bw)\|^2]  \leq \delta^2$.
\end{assumption}

\begin{definition}\label{def:rolling mask grad dis}
Given a  masking configuration $\mask= \sbr{\sbr{\mask_i^1,\ldots,\mask_i^R}}_{i=1}^N \in \cbr{0,1}^{dNR}$, we define the masked gradient dissimilarity as follows:
   {\small \begin{align*}
     \max_{\bw\in\cW} \frac{1}{NR}\sum_{i=1}^N   \sum_{j=1}^{R}  \norm{  \mask_i^{j} \odot\nabla f_i ( \mask_i^{j} \odot \bw     ) - \nabla   F_{\mask}(  \bw    ) }^2 \leq \zeta^2_{\mask}.
    \end{align*}}
\end{definition}

Assumption~\ref{assumption: bounded var rolling} and Definition~\ref{def:rolling mask grad dis} are analogous to Assumption~\ref{assumption: bounded var} and Definition~\ref{def:random mask grad dis}, but here the masking vectors are deterministic.

\begin{theorem}\label{thm:rolling mask}
Let Assumptions~\ref{assump: smooth},~\ref{assump: sc},~\ref{assumption: bounded domain},~\ref{assumption: bounded grad} and~\ref{assumption: bounded var rolling} hold. Then Algorithm~\ref{algorithm: Rolling Masked FedAvg} with $\eta = \Theta\pare{\frac{\log(T^2)}{\mu KRT}}$ and $T \geq 512 \kappa^2 \log T$ will output the solution $\hat{\bw} $, such that with probability at least $1-\nu$, the following statement holds :
    \begin{align*}
       F(\hat\bw) - F(\bw^*)& \leq     O\pare{ \frac{ L\E\norm{ \bw_0 - \bw^*}^2 }{T^2}}   +  L O\pare{   \frac{\kappa^2  \log( RK/\nu)  }{\mu T^2 R  }    +    \frac{\kappa   \zeta^2_\mask \log^2(T )}{\mu^2 T^2   R^2}        +        \frac{  \delta^2 \log(T)}{\mu^2 T N}} \\
        & \quad + \underbrace{ \pare{  \frac{ L}{\mu  }    + \frac{1 }{ L}  } \frac{ G^2 +   W^2L^2}{NR}        \sum_{i=1}^N \sum_{j=1}^R \norm{\mask_i^j - \mathbf{1}}^2
  }_{\text{Residual error due to masked updates}}.
\end{align*}
\end{theorem}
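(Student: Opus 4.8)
The plan is to view Algorithm~\ref{algorithm: Rolling Masked FedAvg} as random-reshuffling SGD with $K$ local steps applied to the masked surrogate objective $F_{\mask}(\bw) = \frac{1}{NR}\sum_{i=1}^N\sum_{j=1}^R f_i(\mask_i^j\odot\bw)$, whose gradient is $\nabla F_{\mask}(\bw) = \frac{1}{NR}\sum_i\sum_j \mask_i^j\odot\nabla f_i(\mask_i^j\odot\bw)$, and to let $\bw^*$ be its minimizer over $\cW$. The target then decomposes into three pieces: (i) show that $\bw_T$ contracts to $\bw^*$ in squared distance at an $O(1/T^2)$ rate; (ii) convert this distance into a suboptimality gap for $F_{\mask}$ through the single projected gradient step with stepsize $1/L$ defining the output $\hat\bw=\cP_\cW(\bw_T-\tfrac1L\nabla F_{\mask}(\bw_T))$, since for an $L$-smooth $\mu$-strongly convex function one such step gives $F_{\mask}(\hat\bw)-F_{\mask}(\bw^*)\le\tfrac{L}{2}\norm{\bw_T-\bw^*}^2$; and (iii) bound the mismatch $F(\hat\bw)-F_{\mask}(\hat\bw)$ and $F_{\mask}(\bw^*)-F(\bw^*)$ by the residual error, using $\norm{\mask_i^j-\mathbf{1}}^2$ together with Assumptions~\ref{assumption: bounded grad} and~\ref{assumption: bounded domain} to control how far the masked gradients deviate from the true ones.

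The heart of the argument is a one-epoch recursion for $\E\norm{\bw_{e,r}-\bw^*}^2$. First I would establish a per-round descent inequality: over the $K$ local steps of round $r$, each client drifts by at most $O(\eta k G)$ from $\bw_{e,r}$ by the bounded-gradient Assumption~\ref{assumption: bounded grad}, so after aggregation the server update is $\bw_{e,r+1}\approx\cP_\cW\big(\bw_{e,r}-\eta K\,\bar g_{\sigma_e(r)}(\bw_{e,r})\big)$, where $\bar g_j(\bw)=\frac1N\sum_i\mask_i^j\odot\nabla f_i(\mask_i^j\odot\bw)$ is the $j$-th masked component. Smoothness (Assumption~\ref{assump: smooth}) then lets me replace $\bar g_{\sigma_e(r)}(\bw_{e,r})$ by $\bar g_{\sigma_e(r)}(\bw_e)$ plus a drift error controlled by $\sum_{r'}\norm{\bw_{e,r'}-\bw_e}$, while bounded variance (Assumption~\ref{assumption: bounded var rolling}) controls the SGD noise by $\delta^2$, with averaging over $N$ clients contributing the $1/N$ in the $\delta^2$ term of the stated bound.

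The crucial and most delicate step is to sum the per-round inequalities over a whole epoch and exploit the random permutation $\sigma_e$. Because $\{\bar g_j\}_{j=1}^R$ is sampled \emph{without replacement}, the partial sums $\sum_{r=1}^R\bar g_{\sigma_e(r)}(\bw_e)$ concentrate around $R\,\nabla F_{\mask}(\bw_e)$ with a variance governed by the dissimilarity $\zeta^2_{\mask}$ of Definition~\ref{def:rolling mask grad dis}; a sampling-without-replacement lemma of the Mishchenko--Khaled--Richt\'arik type then yields the characteristic $\eta^2$ (rather than $\eta$) dependence on $\zeta^2_{\mask}$, which after tuning produces the strongly suppressed $\zeta^2_{\mask}/(T^2R^2)$ term. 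Here the main obstacle is the triple coupling between the permutation randomness, the within-round SGD noise, and the projection $\cP_\cW$: I expect to pass to a high-probability statement by applying a Freedman/Azuma-type martingale concentration to the accumulated noise-plus-drift martingale over the at most $RK$ updates of an epoch, which is exactly what produces the $\log(RK/\nu)$ factor and the failure probability $\nu$.

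Finally, assembling the per-round bounds yields a one-epoch contraction $\E\norm{\bw_{e+1}-\bw^*}^2\le(1-c\,\eta KR\mu)\E\norm{\bw_{e}-\bw^*}^2+(\text{noise})+(\text{masking bias})$, valid once the effective stepsize $\eta KR\mu$ is small enough, which is why the condition $T\ge 512\kappa^2\log T$ is imposed to keep the recursion in its contractive regime. Unrolling over the $T$ epochs with $\eta=\Theta(\log(T^2)/(\mu KRT))$ turns the contraction factor into $(1-c\log(T^2)/T)^T\asymp T^{-2c}$, delivering the leading $L\E\norm{\bw_0-\bw^*}^2/T^2$ term and the $\log$-weighted noise terms, while the masking bias accumulates into the residual error $\big(\frac{L}{\mu}+\frac1L\big)\frac{G^2+W^2L^2}{NR}\sum_{i,j}\norm{\mask_i^j-\mathbf{1}}^2$ via the gap between the masked-component gradients and a descent direction for $F$ at $\bw^*$. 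Converting the resulting distance bound through the output gradient step (step (ii)) and adding the $F$-versus-$F_{\mask}$ mismatch (step (iii)) completes the proof.
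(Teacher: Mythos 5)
Your plan is correct, and for the heart of the argument it coincides with the paper's proof: the paper also treats Algorithm~\ref{algorithm: Rolling Masked FedAvg} as shuffled local SGD on the surrogate $F_{\mask}$, establishes an epoch-level contraction of $\E\norm{\bw_e - \bw^*(\mask)}^2$ by combining a per-round recursion, local-drift control, and a without-replacement concentration bound (Hoeffding--Serfling) on the prefix sums of the permuted component gradients --- which is precisely where the $\log(RK/\nu)$ factor, the probability $1-\nu$, and the $\eta^2$-scaled $\zeta^2_{\mask}$ term originate --- and then unrolls over $T$ epochs with $\eta = \Theta\pare{\log(T^2)/(\mu KRT)}$ to obtain the $1/T^2$ terms, exactly as you describe. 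Where you genuinely depart from the paper is the conversion back to the true objective $F$. You view $\hat\bw$ as an \emph{exact} projected-gradient step for $F_{\mask}$, apply the standard prox-gradient descent lemma to get $F_{\mask}(\hat\bw) - F_{\mask}(\bw^*(\mask)) \leq \tfrac{L}{2}\norm{\bw_T - \bw^*(\mask)}^2$, and bound the two pointwise gaps $F(\hat\bw) - F_{\mask}(\hat\bw)$ and $F_{\mask}(\bw^*(\mask)) - F(\bw^*) \leq F_{\mask}(\bw^*) - F(\bw^*)$ by roughly $\tfrac{GW}{NR}\sum_{i,j}\norm{\mathbf{1}-\mask_i^j}$ each, using bounded gradients and the bounded domain. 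The paper instead views $\hat\bw$ as an \emph{inexact} projected-gradient step for $F$ itself: its optimality-gap lemma combines the projection optimality condition with a bound on the gradient mismatch $\norm{\nabla F_{\mask} - \nabla F}$, together with a separate lemma showing that the minimizer map $\mask \mapsto \bw^*(\mask)$ is Lipschitz (which requires strong convexity of the masked objective), producing the residual $\pare{\tfrac{L}{\mu}+\tfrac{1}{L}}\tfrac{G^2+W^2L^2}{NR}\sum_{i,j}\norm{\mask_i^j-\mathbf{1}}^2$. Your conversion is more elementary --- it needs neither the minimizer-Lipschitz lemma nor strong convexity for that step --- and it yields a residual with first-power norms $GW\norm{\mathbf{1}-\mask_i^j}$, which implies the stated bound since $\norm{\mathbf{1}-\mask_i^j} \leq \norm{\mathbf{1}-\mask_i^j}^2$ for $0/1$ masks and $2GW \leq (G^2+W^2L^2)/L$; the paper's route, in exchange, keeps the whole argument anchored to the minimizer of $F$ and produces exactly the constants in the theorem. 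One small correction to your narrative: in the paper the high-probability event and the $\log(RK/\nu)$ factor come from applying Hoeffding--Serfling to the permutation prefix sums of the (deterministic) masked gradients at $\bw_e$, while the stochastic-gradient noise is handled in expectation through the $\delta^2/N$ terms; attributing that factor to a Freedman/Azuma bound on the noise martingale is not how the paper proceeds, although either bookkeeping can be made to work.
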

The first part of the rate is contributed from shuffled Local SGD, and the second part is due to masking updates. If each mask $\mask_i^j = \mathbf{1}$, i.e., full model training, we can get rid of this residual error. Note that our algorithm is different from Cyclic FedAvg~\cite{cho2023convergence}, where they do the shuffling on the client level, and for each communication round, a subset of clients are picked to do local updates.  

The proof of theorem is deferred to Appendix~\ref{app:proof_fedrolex}. We note that in our convergence analysis, we account for the partitioning of the full model into $R$ sub-models. At each epoch, the server shuffles and sequentially assigns these sub-models to clients, introducing analytical challenges due to model drift from partial training and the effects of permutation-based assignments. Our technical contribution lies in jointly addressing these challenges to establish convergence, an aspect that is interesting in its own right.
\begin{remark}
  \cite{deng2024distributed} proposed and studied similar algorithm, but in their work, the clients directly optimize the full models, while in ours, clients optimize local models and server performs averaging periodically. Another relevant work is~\citep{cho2023convergence}, where they studied Local SGD with cyclic client participation. The main difference is that they do client-level shuffling and at each round server only picks a subset of clients to participate training.  
\end{remark}

\subsection{Convergence in Nonconvex Setting}
In this section, we will present convergence result of Algorithm~\ref{algorithm: Rolling Masked FedAvg} in nonconvex setting.

 \begin{theorem}\label{thm:rolling mask nonconvex}
Let Assumptions~\ref{assump: smooth}, \ref{assumption: bounded var} and \ref{assumption: bounded grad} hold. Then Algorithm~\ref{algorithm: Rolling Masked FedAvg} with $\eta = \Theta\pare{\frac{1}{L\sqrt{KRT}}}$ guarantees that with probability at least $1-\nu$, for $F_\mask$  defined as in Eq. (\ref{eq: obj rolling}) the following  holds true:
    \begin{align*}
 \frac{1}{T}\sum\nolimits_{e=1}^T  \E \norm{ \nabla F_{\mask}(\bw_e)}^2 \leq O\pare{\frac{L\E [  F_{\mask}(\bw_0)]}{\sqrt{RK T}}       +    \frac{K^2  \zeta^2_\mask}{  TL^2}    +      \frac{\delta^2}{N\sqrt{ RKT} L}     }.
    \end{align*}
\end{theorem}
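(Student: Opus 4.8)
The plan is to establish a per-epoch descent inequality for the masked objective $F_\mask$ and then telescope over the $T$ epochs. Since each $f_i$ is $L$-smooth (Assumption~\ref{assump: smooth}) and every mask acts as a coordinate selection, the objective $F_\mask(\bw)=\frac{1}{NR}\sum_{i=1}^N\sum_{j=1}^R f_i(\mask_i^j\odot\bw)$ is itself $L$-smooth, so the descent lemma applied across one epoch gives
\[
\E[F_\mask(\bw_{e+1})] \leq F_\mask(\bw_e) + \inprod{\nabla F_\mask(\bw_e)}{\E[\bw_{e+1}-\bw_e]} + \frac{L}{2}\,\E\norm{\bw_{e+1}-\bw_e}^2 .
\]
The projection $\cP_\cW$ is $1$-Lipschitz, so it can be absorbed without harming this inequality. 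I would then expand the net displacement $\bw_{e+1}-\bw_e$ as the accumulation of the $R$ rounds, each being $K$ masked local-SGD steps averaged over the $N$ clients. The structural fact driving the whole argument is that $\sigma_e$ is a permutation of $[R]$, so summing the masked gradients over rounds at the fixed anchor $\bw_e$ is permutation-invariant,
\[
\sum_{r=1}^R \frac{1}{N}\sum_{i=1}^N \mask_i^{\sigma_e(r)}\odot\nabla f_i(\mask_i^{\sigma_e(r)}\odot\bw_e) \;=\; R\,\nabla F_\mask(\bw_e),
\]
which is exactly what makes the cross term yield the leading descent $-\eta RK\norm{\nabla F_\mask(\bw_e)}^2$.

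Next I would split the cross term into this leading descent plus an error that measures the gap between the gradients actually used, evaluated at the drifting local iterates $\bw_{e,r,k}^i$, and the anchor gradients at $\bw_e$. By $L$-smoothness this gap is bounded by $L\norm{\bw_{e,r,k}^i-\bw_e}$, so everything reduces to controlling the client/round drift. Within a single round the drift grows like $\eta K$ times a gradient magnitude, which I bound using the bounded-gradient assumption (Assumption~\ref{assumption: bounded grad}) together with the stochastic-gradient variance (Assumption~\ref{assumption: bounded var}); across rounds the partial sums of the permuted masked gradients contribute the masked dissimilarity $\zeta^2_\mask$ of Definition~\ref{def:rolling mask grad dis}. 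The curvature term $\frac{L}{2}\E\norm{\bw_{e+1}-\bw_e}^2$ is treated by the same drift estimates, and averaging the independent client noises produces the variance-reduced $\delta^2/N$ contribution.

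The step I expect to be the main obstacle is controlling the across-round drift while extracting the benefit of sampling \emph{without} replacement. A crude with-replacement bound would charge a factor $R$ to the heterogeneity term; to obtain the sharper $1/T$ scaling of the $K^2\zeta^2_\mask$ term I would instead bound the variance of the partial sums $\sum_{r\leq\ell}\mask_i^{\sigma_e(r)}\odot\nabla f_i(\mask_i^{\sigma_e(r)}\odot\bw_e)$ via exchangeability of the random permutation, in the spirit of random-reshuffling analyses, and invoke a concentration bound over $\sigma_e$ to deliver the high-probability $1-\nu$ guarantee. Finally I would sum the per-epoch inequality from $e=1$ to $T$, telescope the $F_\mask(\bw_e)$ terms against a finite lower bound on $F_\mask$, divide through by $\eta RKT$, and substitute $\eta=\Theta\pare{1/(L\sqrt{KRT})}$ to balance the optimization, heterogeneity, and noise contributions into the stated rate.
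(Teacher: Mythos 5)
Your proposal follows essentially the same route as the paper's proof: a descent-lemma argument on the $L$-smooth objective $F_{\mask}$, extraction of the leading term $-\eta RK\norm{\nabla F_{\mask}(\bw_e)}^2$ from the permutation-invariance of the epoch-sum of masked gradients anchored at $\bw_e$ (the paper formalizes this through a mean-value-theorem expansion and a summation-by-parts identity, Lemmas~\ref{lm:round recursion} and~\ref{lem:epoch recursion}), drift control of $\norm{\bw_{e,r,k}^i-\bw_e}$ via smoothness (Lemma~\ref{lem:local model deviation}), a without-replacement concentration bound over the random permutation $\sigma_e$ on the partial sums of anchored gradients --- the paper uses the Hoeffding--Serfling inequality, which is precisely the ``random-reshuffling-style'' exchangeability argument you anticipate and is the source of the $1-\nu$ probability --- and finally telescoping over $e$ with $\eta = \Theta\pare{1/(L\sqrt{KRT})}$.

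The one step in your sketch that would fail as stated is the claim that the projection $\cP_{\cW}$ ``can be absorbed without harming this inequality'' because it is $1$-Lipschitz. Nonexpansiveness controls distances, but it does not preserve the inner-product term $\inprod{\nabla F_{\mask}(\bw_e)}{\bw_{e+1}-\bw_e}$ that generates the descent; in constrained nonconvex optimization the appropriate stationarity measure is the gradient mapping, not $\norm{\nabla F_{\mask}}^2$, and one cannot simply project and keep the unconstrained analysis. The paper sidesteps this entirely by assuming the unconstrained setting $\cW=\R^d$ for the nonconvex theorems (explicitly citing the residual-error issue of constrained stochastic nonconvex optimization); under that assumption, which is how the stated bound should be read, your argument goes through.
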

The convergence rate here matches that of the random masking case, measured by the gradient norm of an alternative objective $F_{\mask}$, induced by sub-model selection scheme.

\noindent\textbf{\ding{110}~On the stationary points of $F$ and $F_{\mask}$.}~We can also translate stationarity of $F_{\mask}$ to that of $F$ as follows. Similar to random masking setting, we can also translate the stationarity between $F$ and $F_{\mask}$.
Assume $\tilde \bw_\epsilon$ is $\epsilon$ stationary point of $  F_{\mask}$, if we evaluate $F$'s gradient at $\tilde \bw_\epsilon$ we have:
{\small  \begin{align*}
   &\norm{\nabla F(\tilde\bw_\epsilon)}^2  \leq 2\norm{\nabla   F_{\mask}(\tilde\bw_\epsilon)}^2  + 2\norm{\nabla  F_{\mask}(\tilde\bw_\epsilon) - \nabla F(\tilde\bw_\epsilon)}^2 \\
   & \leq 2\epsilon^2 + 2\norm{  \frac{1}{N}\sum\nolimits_{i=1}^N (\frac{1}{R}\sum\nolimits_{j=1}^R \mask_i^j \odot \nabla f_i(\mask_i^j \odot\tilde\bw_\epsilon) -     \nabla f_i(\tilde\bw_\epsilon))}^2\\
   & \leq 2\epsilon^2 +\frac{1}{N}\sum\nolimits_{i=1}^N \frac{1}{R}\sum\nolimits_{j=1}^R \norm{ \mathbf{1} -  \mask_i^j  }^2(G^2 + L^2\norm{\tilde \bw_\epsilon}^2 ).
\end{align*}}
\section{On the Stability of Masked Training}\label{sec:stability}
In this section, we will study the generalization ability of distributed sub-model training with Bernoulli and rolling based  masking.  Formally, for a learning algorithm $\cA$ and training dataset $\cS$ drawn from distribution $\cD$, the generalization error is defined as $
  \epsilon_{gen}:=  \E_{\cA,\cS}\left| \cL_{\cD}( \cA(\cS) ) - \cL_{\cS}( \cA(\cS) ) \right|
$. A classical way to study the above error is algorithmic stability. We adopt the following definition of stable federated learning algorithm from~\citep{sun2024understanding}.
\begin{definition}\label{def:avg stab}
A federated learning algorithm $\cA$ is said
to have $\epsilon$-on-average stability if given any two neighboring datasets $\cS$ and $\cS^{(i)}$
, then $\forall i \in [N], j \in [n]$
\begin{align*}
    \E_{\cA,\cS, z'_{i,j}} \left| \ell( \cA(\cS);z'_{i,j} ) - \ell( \cA(\cS^{(i)});z'_{i,j} ) \right| \leq \epsilon.
\end{align*}
where $\cS$ and $\cS^{(i)}$ are the two dataset only differing at $j$th point of $i$th client's dataset, i.e.,  $\cS = \cbr{\ldots, z_{i,j}, \ldots}$ and $\cS^{(i)}= \cbr{\ldots, z'_{i,j}, \ldots}$.
\end{definition}

An immediate implication of $\epsilon$-on-average stability is the following lemma on generalization error.
\begin{lemma}\label{lem:generalization}\citep{sun2024understanding}
If $\cA$ is an $\epsilon$-on-average stable algorithm, then
    \begin{align*}
      \epsilon_{gen}\leq   \E_{\cA,\cS}\left| \cL_{\cD}( \cA(\cS) ) - \cL_{\cS}( \cA(\cS) ) \right| \leq \epsilon.
    \end{align*}
\end{lemma}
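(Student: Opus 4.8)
The first inequality, $\epsilon_{gen}\le \E_{\cA,\cS}|\cL_\cD(\cA(\cS))-\cL_\cS(\cA(\cS))|$, holds with equality by the definition of $\epsilon_{gen}$, so the entire content is the second inequality, which I would establish by the classical ghost-sample (symmetrization) argument that rewrites the train--test gap as an average of one-sample dataset perturbations, each controlled by Definition~\ref{def:avg stab}. The plan is to (i) represent the population risk through an independent ghost copy of the data, (ii) pass from the resulting per-coordinate train-versus-test comparison to a neighboring-dataset comparison by an exchangeability (renaming) step, and (iii) invoke $\epsilon$-on-average stability coordinatewise and average.

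First I would draw a ghost dataset $\cS'=\{z'_{i,j}\}$ with $z'_{i,j}\sim\cD_i$, independent of $\cS$ and of the internal randomness of $\cA$. Since each $z'_{i,j}$ is fresh and independent of $\cA(\cS)$, taking its expectation reproduces the population risk, $\cL_\cD(\cA(\cS))=\E_{\cS'}[\frac{1}{Nn}\sum_{i,j}\ell(\cA(\cS);z'_{i,j})]$. Subtracting $\cL_\cS(\cA(\cS))=\frac{1}{Nn}\sum_{i,j}\ell(\cA(\cS);z_{i,j})$, pulling $\E_{\cS'}$ outside the absolute value by Jensen, and applying the triangle inequality over coordinates gives
\[
\E_{\cA,\cS}\bigl|\cL_\cD(\cA(\cS))-\cL_\cS(\cA(\cS))\bigr|\le \frac{1}{Nn}\sum_{i,j}\E_{\cA,\cS,\cS'}\bigl|\ell(\cA(\cS);z'_{i,j})-\ell(\cA(\cS);z_{i,j})\bigr|,
\]
reducing the claim to controlling, for each slot $(i,j)$, the discrepancy between evaluating the single model $\cA(\cS)$ on the held-out point $z'_{i,j}$ and on the training point $z_{i,j}$.

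Next I would invoke exchangeability. Let $\cS^{(i)}$ be $\cS$ with its $(i,j)$th entry replaced by $z'_{i,j}$, i.e.\ exactly the neighbor of $\cS$ in Definition~\ref{def:avg stab}. Because $z_{i,j}$ and $z'_{i,j}$ are i.i.d.\ from $\cD_i$, the swap $z_{i,j}\leftrightarrow z'_{i,j}$ is measure preserving and simultaneously maps $\cS$ to $\cS^{(i)}$, so that $\ell(\cA(\cS);z_{i,j})$ and $\ell(\cA(\cS^{(i)});z'_{i,j})$ agree in law. Using this to rewrite the training-point loss, each coordinate term matches the on-average stability quantity $\E|\ell(\cA(\cS);z'_{i,j})-\ell(\cA(\cS^{(i)});z'_{i,j})|$, which Definition~\ref{def:avg stab} bounds by $\epsilon$ for every $(i,j)$; averaging the $Nn$ identical bounds then yields $\E_{\cA,\cS}|\cL_\cD-\cL_\cS|\le\epsilon$ and hence the lemma.

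The step I expect to be the main obstacle is threading the absolute value cleanly through the symmetrization: the exchangeability identity holds for expectations, so I must combine it with a per-coordinate triangle inequality that inserts $\ell(\cA(\cS^{(i)});z'_{i,j})$ and keeps both losses inside each $|\cdot|$ evaluated at the common point $z'_{i,j}$, differing only through the single replaced sample---precisely the configuration measured in Definition~\ref{def:avg stab}, which is why that definition is stated with the absolute value inside the expectation and required for all $(i,j)$. The surrounding bookkeeping (verifying that $\cS^{(i)}$ is a genuine neighbor differing in one client-sample slot, and that Jensen and the triangle inequality are applied in the right order) is routine once this alignment is secured.
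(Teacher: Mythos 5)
The paper never proves this lemma---it is imported verbatim from the cited reference \citep{sun2024understanding}---so your attempt has to stand on its own, and it does follow the standard ghost-sample route; but the step you flag as the ``main obstacle'' and then wave off as routine bookkeeping is a genuine gap, not bookkeeping. After Jensen and the per-coordinate triangle inequality you must bound $\E\left|\ell(\cA(\cS);z'_{i,j})-\ell(\cA(\cS);z_{i,j})\right|$, i.e.\ \emph{one} model evaluated at two points. Equality in law of $\ell(\cA(\cS);z_{i,j})$ and $\ell(\cA(\cS^{(i)});z'_{i,j})$ only licenses exchanging them inside a plain expectation; inside the absolute value what matters is their \emph{joint} law with the other term $\ell(\cA(\cS);z'_{i,j})$, and that joint law changes under the substitution. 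If instead you apply the measure-preserving swap $z_{i,j}\leftrightarrow z'_{i,j}$ to the whole expression, you obtain $\E\left|\ell(\cA(\cS^{(i)});z_{i,j})-\ell(\cA(\cS^{(i)});z'_{i,j})\right|$---again one model at two points, not the two-models-one-point quantity of Definition~\ref{def:avg stab}. And if you insert $\ell(\cA(\cS^{(i)});z'_{i,j})$ by the triangle inequality, the leftover cross term $\E\left|\ell(\cA(\cS^{(i)});z'_{i,j})-\ell(\cA(\cS);z_{i,j})\right|$ is invariant under the swap and is not controlled by stability at all.

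Moreover, this gap cannot be closed by any argument, because the conclusion with the absolute value inside the expectation is false in general under Definition~\ref{def:avg stab}. Take the data-independent algorithm $\cA(\cS)\equiv\bw_0$: it is $0$-on-average stable, yet $\E_{\cS}\left|\cL_{\cD}(\bw_0)-\cL_{\cS}(\bw_0)\right|$ is the mean absolute deviation of an empirical average around its mean, which is strictly positive (of order $1/\sqrt{Nn}$ for a bounded, non-degenerate loss), so it is not $\leq 0$. What your symmetrization argument \emph{does} prove---by swapping one term at a time inside plain expectations, exactly as in the classical stability literature---is the signed bound $\left|\E_{\cA,\cS}\!\left[\cL_{\cD}(\cA(\cS))-\cL_{\cS}(\cA(\cS))\right]\right|\leq\epsilon$. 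That is the correct statement to carry forward; recovering the version with $\E\left|\cdot\right|$ would require an additional ingredient (e.g.\ a concentration argument for $\cL_{\cS}$ around $\cL_{\cD}$), not exchangeability alone.
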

Lemma~\ref{lem:generalization} indicates that any $\epsilon$-on-average stable algorithm will admit the expected generalization error no larger than $\epsilon$ as well. Hence, to study the generalization of Algorithm~\ref{algorithm: Masked FedAvg}, it suffices to bound the difference between the models trained on raw training set and one-point-perturbed dataset as 
\begin{align*}
  \E_{\cA,\cS, z'_{i,j}} \left| \ell( \cA(\cS);z'_{i,j} ) - \ell( \cA(\cS^{(i)});z'_{i,j} ) \right| \leq  \E_{\cA,\cS, z'_{i,j}} G\norm{  \cA(\cS)  -   \cA(\cS^{(i)}) }.  
\end{align*}

Hence, we are aimed at studying $\norm{  \cA(\cS)  -   \cA(\cS^{(i)}) }$.
We will make the following assumption.
\begin{assumption}[{\sffamily{Convexity}}]\label{assump: cvx} We assume $f_i(\bx)$'s are convex, i.e., $
\forall \bx, \by: f_i(\by) \geq f_i(\bx)+\langle\nabla f_i(\bx), \by-\bx\rangle$.
\end{assumption}

\begin{definition}\label{def:random mask grad dis} 
We define the maximum pair-wise discrepancy as $D_{\max} = \sup_{i,j \in[N] }  d_{\mathrm{TV}} (\cD_i,\cD_j)$.
\end{definition}

\begin{theorem}~\label{thm:stab random cvx}[Stability of Random Masking]
Let Assumptions~\ref{assump: smooth},~\ref{assumption: bounded var},~\ref{assumption: bounded domain} and~\ref{assump: cvx} hold. We assume each client has $n$ training data drawn from its distribution. Let $\hat\bw$ and $\hat\bw'$ be the output model of Algorithm~\ref{algorithm: Masked FedAvg} on dataset $\cS$ and $\cS^{(i)}$. Then, if we choose $\eta=\frac{\sqrt{Nn}}{RK}$ and $R\geq \tilde L\sqrt{Nn}$, it holds that 
    \begin{align*}
     \E\norm{\hat\bw  - \hat\bw' } \leq     O\pare{  \frac{   \delta+ G D_{\max}}{\sqrt{Nn}} +  \frac{1}{\sqrt{Nn}}
      \sqrt{ \frac{ \tilde L }{\sqrt{Nn}}    +          \sigma_*^2  +    {\delta^2}    }  },
    \end{align*}
    where  $\tilde L: = \max_{i\in[N]} p_i L$  . 
\end{theorem}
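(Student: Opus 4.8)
The plan is to bound the on-average stability through the coupling argument of Hardt--Recht--Singer, adapted to the masked federated setting; by Lemma~\ref{lem:generalization} together with the $G$-Lipschitz reduction displayed just before Assumption~\ref{assump: cvx}, it suffices to control $\E\norm{\hat\bw-\hat\bw'}$. I would run Algorithm~\ref{algorithm: Masked FedAvg} on $\cS$ and on the one-point perturbation $\cS^{(i)}$ under a common coupling in which the server's Bernoulli masks $\mask_{i'}^r$ and every sampling index $\xi_{r,k}^{i'}$ are identical across the two runs. The quantity to track is the expected global gap $\Delta_r:=\E\norm{\bw_r-\bw_r'}$, supported by the per-client local gaps between the corresponding local iterates of the two runs.

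The first ingredient is non-expansiveness. A masked update is exactly a stochastic gradient step on $\bw\mapsto f_{i'}(\mask_{i'}^r\odot\bw)$, which is convex and $L$-smooth for each fixed mask; after taking expectation over the Bernoulli masking, the effective smoothness controlling the averaged recursion is of order $\tilde L=\max_i p_i L$, which is exactly why the step-size condition is stated in terms of $\tilde L$. The step map is then non-expansive once $\eta\tilde L\le 2$, and the choices $\eta=\sqrt{Nn}/(RK)$ and $R\ge\tilde L\sqrt{Nn}$ give $\eta\tilde L\le 1/K$, so this holds; the server averaging and the projection $\cP_\cW$ are likewise $1$-Lipschitz. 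Consequently every client $i'\neq i$, and every local step of client $i$ that does not draw the differing sample, propagates the gap without growth, so the trajectory difference is injected only at the local steps where client $i$ samples the perturbed point $z_{i,j}$.

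The second ingredient quantifies and accumulates this injection. At each of the $K$ local steps of client $i$ the perturbed index is drawn with probability $1/n$, each such event contributes at most $2\eta$ times the masked stochastic-gradient magnitude at the current iterate, and after averaging it enters $\Delta_{r+1}$ diluted by $1/N$. Unrolling over $R$ rounds gives $\Delta_R\lesssim \frac{\eta RK}{Nn}\cdot(\text{typical gradient size})$, and substituting $\eta=\sqrt{Nn}/(RK)$ collapses the prefactor to the advertised $1/\sqrt{Nn}$. To sharpen the crude uniform bound into the $\sigma_*,\delta$ terms, I would bound the gradient at an iterate by its value at the masked optimum $\bw^*$ plus $\tilde L$ times the distance to $\bw^*$, and invoke the optimization control underlying Theorem~\ref{thm:randomly masked fedavg} to bound $\E\norm{\bw_{r,k}^i-\bw^*}^2$ by a quantity of order $\tilde L/\sqrt{Nn}+\sigma_*^2+\delta^2$; a Jensen step then produces the square-root factor $\sqrt{\tilde L/\sqrt{Nn}+\sigma_*^2+\delta^2}$, while the variance of the injected stochastic gradient supplies the linear $\delta/\sqrt{Nn}$ piece.

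Finally, the heterogeneity term $G D_{\max}$ comes from the on-average comparison of the data laws: coupling the perturbed and replacement draws across the relevant client distributions so that they agree except on an event of probability at most $d_{\mathrm{TV}}(\cD_i,\cD_j)\le D_{\max}$, on which the two gradients differ by at most $2G$, yields an expected discrepancy of order $G D_{\max}$ carried through with the same $1/\sqrt{Nn}$ prefactor. I expect the main obstacle to be precisely the joint bookkeeping of these three effects inside a single non-expansive recursion: the masking both rescales the effective smoothness and couples coordinates through $\mask_{i'}^r$, the periodic averaging mixes heterogeneous clients, and the distribution mismatch measured by $D_{\max}$ must be threaded through the projection and averaging without degrading the $1/\sqrt{Nn}$ rate. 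Verifying that the refined gradient bound through $\bw^*$ survives these operations is the crux of the argument.
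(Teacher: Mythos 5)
Your overall architecture is the same as the paper's: reduce to $\E\norm{\bw_R-\bw_R'}$ via the Lipschitz/projection reduction, couple the two runs with shared masks and sampling indices, use convexity-plus-smoothness non-expansiveness on every local step that sees the same sample, inject a perturbation of size $O(\eta \cdot \text{gradient magnitude})$ with probability $1/n$ at client $i$'s steps, dilute by $1/N$ through server averaging, and telescope so that the prefactor $\eta RK/(Nn)$ collapses to $1/\sqrt{Nn}$ under $\eta=\sqrt{Nn}/(RK)$. Your TV-coupling treatment of heterogeneity is exactly the paper's inequality $\zeta_{\max}\le G D_{\max}$ (cited there from prior work), just inlined, and the local-deviation bookkeeping you defer to is what the paper handles with Lemma~\ref{lem:model deviation cvx stability}.

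The genuine gap is in your refinement of the ``typical gradient size.'' You propose to bound $\E\norm{\bw_{r,k}^i-\bw^*}^2$ by a quantity of order $\tilde L/\sqrt{Nn}+\sigma_*^2+\delta^2$ by invoking the optimization control underlying Theorem~\ref{thm:randomly masked fedavg}, and then convert to gradient norms via smoothness. But Theorem~\ref{thm:stab random cvx} assumes only convexity (Assumption~\ref{assump: cvx}), not strong convexity, and Theorem~\ref{thm:randomly masked fedavg}'s per-iterate contraction of $\E\norm{\bw_r-\bw^*}^2$ is precisely the part of that analysis that needs $\mu>0$. Without it, the descent recursion only yields $\E\norm{\bw_{r+1}-\bw^*}^2\le\E\norm{\bw_r-\bw^*}^2-\tfrac18\eta K\pare{F_{\bp}(\bw_r)-F_{\bp}(\bw^*)}+\ldots$, so the per-iterate distance is controlled by nothing better than the domain bound, $O(W^2)$. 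Your route then gives only the cruder estimate $\norm{\nabla F_{\bp}(\bw_r)}\lesssim \sigma_*+\tilde L W$, hence an extra additive term of order $\tilde L W/\sqrt{Nn}$ in the final bound instead of the theorem's $\frac{1}{\sqrt{Nn}}\sqrt{\tilde L/\sqrt{Nn}}$-type term, i.e., a strictly weaker result. The paper's fix, which you should adopt, is to telescope the convexity-only recursion over all $R$ rounds and bound the \emph{round-averaged squared gradient norm}: using $\norm{\nabla F_{\bp}(\bw_r)}^2\le 2L\pare{F_{\bp}(\bw_r)-F_{\bp}(\bw^*)}$ one gets $\frac1R\sum_{r}\E\norm{\nabla F_{\bp}(\bw_r)}^2\lesssim \frac{\tilde L\E\norm{\bw_1-\bw^*}^2}{\eta KR}+\tilde L^2K^2\eta^2(\sigma_*^2+\delta^2)+\frac{L\eta\delta^2}{N}$, and then Cauchy--Schwarz converts this into the bound on $\frac1R\sum_r\E\norm{\nabla F_{\bp}(\bw_r)}$ that actually appears in the accumulated injection. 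The point is that it is the Ces\`aro average of gradient norms along the trajectory, not any per-iterate distance to $\bw^*$, that is small under these assumptions; with that substitution the rest of your argument goes through as described.
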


\begin{remark}
    Even though we assume each client has the same amount of data, the analysis can be easily extended to the case where $i$th client has $n_i$ data samples, but we have to assume our objective is also weighted accordingly, i.e., $F(\bw) = \sum_{i=1}^N \frac{n_i}{n} f_i(\bw)$.
\end{remark}

\begin{corollary}
Let Assumptions~\ref{assump: smooth},~\ref{assumption: bounded var},~\ref{assumption: bounded domain} and~\ref{assump: cvx} hold. Then, if we choose $\eta=\frac{\sqrt{Nn}}{RK}$ and $R\geq \tilde L\sqrt{Nn}$, Algorithm~\ref{algorithm: Masked FedAvg} admits the generalization error:
    \begin{align*}
   \epsilon_{gen} \leq  O\pare{  \frac{ G(\delta+ G D_{\max}) }{\sqrt{Nn}} +  \frac{G}{\sqrt{Nn}}
      \sqrt{ \frac{ \tilde L }{\sqrt{Nn}}    +          \sigma_*^2  +    {\delta^2}    }  }.
    \end{align*}
\end{corollary}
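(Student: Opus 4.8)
The plan is to reduce to a parameter-distance bound and then run a coupled-trajectory stability argument. By Lemma~\ref{lem:generalization} together with the $G$-Lipschitz reduction $\E|\ell(\cA(\cS);z'_{i,j}) - \ell(\cA(\cS^{(i)});z'_{i,j})| \le G\,\E\norm{\cA(\cS)-\cA(\cS^{(i)})}$ already recorded in the text, it suffices to control $\E\norm{\hat\bw - \hat\bw'}$ for the two runs of Algorithm~\ref{algorithm: Masked FedAvg} on the neighboring datasets $\cS$ and $\cS^{(i)}$. I would couple the two runs by sharing all algorithmic randomness — the per-round Bernoulli masks $\mask_{i'}^r$ and, for every client and local step, the sampled index — so the trajectories differ only through the single perturbed point $z_{i,j}$ versus $z'_{i,j}$ in client $i$'s data. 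Writing $\Delta_r = \norm{\bw_r - \bw_r'}$ for the global-model gap, the aim is a one-round recursion for $\E\Delta_r$.

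The core structural fact is that each masked local step is non-expansive. Since $f_i$ is convex and $L$-smooth (Assumptions~\ref{assump: smooth},~\ref{assump: cvx}), the masked objective $\bw\mapsto f_i(\mask\odot\bw)$ is convex and $L$-smooth (the mask is a coordinate projection, hence has operator norm one), so for $\eta\le 2/L$ the map $\bw\mapsto\bw-\eta\,\mask\odot\nabla f_i(\mask\odot\bw;\xi)$ contracts distances on the masked-in coordinates and acts as the identity on the masked-out ones; the condition $R\ge\tilde{L}\sqrt{Nn}$ with $\eta=\sqrt{Nn}/(RK)$ keeps $\eta$ in this regime. At aggregation I would decompose each client's contribution orthogonally: the locally updated part lives on masked-in coordinates while the fill-in $(\mathbf{1}-\mask_{i'}^r)\odot\bw_r$ lives on masked-out coordinates, so for every unperturbed client the squared per-client distance splits as $\norm{\mask_{i'}^r\odot(\bw_r-\bw_r')}^2 + \norm{(\mathbf{1}-\mask_{i'}^r)\odot(\bw_r-\bw_r')}^2 = \Delta_r^2$. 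As the average and the projection $\cP_\cW$ are non-expansive, this gives $\E\Delta_{r+1}\le \E\Delta_r + \frac1N\,\E P_r$, where $P_r$ is the extra divergence the differing point creates inside client $i$'s local loop.

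Bounding $\E P_r$ is the crux. Over the $K$ local steps the perturbed index is hit with probability $\approx 1/n$ per step; at a non-hit step the coupled iterates evolve under the identical map and stay non-expansive, while at a hit step I would add and subtract the common model's gradient on one of the two points to separate the genuine data discrepancy $\mask\odot\nabla f_i(\mask\odot\bw;z_{i,j}) - \mask\odot\nabla f_i(\mask\odot\bw;z'_{i,j})$ from the residual model gap reabsorbed by non-expansiveness. A variance/TV coupling of the two sampled points bounds the expectation of the data term by $O(\delta + G D_{\max})$ — the $\delta$ of Assumption~\ref{assumption: bounded var} when the coupled points agree, and $G D_{\max}$ from the at-most-$D_{\max}$ disagreement probability (Definition~\ref{def:random mask grad dis}) times the $G$-bounded gradient. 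Accumulating $\approx K/n$ hits per round over $R$ rounds, with the $1/N$ aggregation dilution and the step size, yields $\frac{\eta KR}{Nn}\,O(\delta+GD_{\max}) = O\pare{\tfrac{\delta+GD_{\max}}{\sqrt{Nn}}}$, the first term of the bound.

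The remaining $\frac{1}{\sqrt{Nn}}\sqrt{\tilde{L}/\sqrt{Nn}+\sigma_*^2+\delta^2}$ term would come from a second-moment refinement: instead of a worst-case per-hit increment I would track $\E\Delta_r^2$ and control the stochastic drift of the perturbed client's local iterates, for which the relevant quantity is the expected squared masked-gradient magnitude along the trajectory. Smoothness manipulations bound this by the heterogeneity-at-optimum $\sigma_*^2$ (Definition~\ref{def:drift at optimum}), the variance $\delta^2$, and a multiple of the sub-optimality; feeding in the optimization progress of Algorithm~\ref{algorithm: Masked FedAvg} under $\eta=\sqrt{Nn}/(RK)$ produces the $\tilde{L}/\sqrt{Nn}$ contribution, and taking a square root to return from $\E\Delta_r^2$ to $\E\Delta_r$ gives the stated form. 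I expect the main obstacle to be exactly this coupling of the stability recursion with the optimization dynamics: the random per-round mask changes the effective local objective, so one must simultaneously preserve the coupling across rounds (masked-in coordinates overwritten, masked-out coordinates retaining the old global model), keep every masked update inside the non-expansive regime, and bound how far the perturbed client's local iterates travel — it is this drift, not the clean one-point injection, that forces the $\sigma_*^2$ and $\tilde{L}$ dependence and makes the argument more delicate than a textbook convex-SGD stability proof.
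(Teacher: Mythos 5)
Your proposal is correct and follows essentially the same route as the paper's: reduce via Lemma~\ref{lem:generalization} and the $G$-Lipschitz bound to controlling $\E\norm{\hat\bw-\hat\bw'}$, couple the two runs of Algorithm~\ref{algorithm: Masked FedAvg} with shared randomness, exploit non-expansiveness of the convex, smooth masked updates together with the $1/n$ per-step hit probability, and control the accumulated divergence through the optimization dynamics (gradient norms of $F_{\bp}$ along the trajectory plus a local-drift lemma), which is exactly how the paper produces the two terms $\frac{\delta+GD_{\max}}{\sqrt{Nn}}$ and $\frac{1}{\sqrt{Nn}}\sqrt{\tilde L/\sqrt{Nn}+\sigma_*^2+\delta^2}$. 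One correction of attribution: the $GD_{\max}$ factor does not come from TV-coupling the two perturbed points $z_{i,j},z'_{i,j}$ (both are drawn from the same $\cD_i$, so that coupling is trivial); in the paper it enters as the cross-client heterogeneity $\zeta_{\max}\leq GD_{\max}$, incurred when client $i$'s masked gradient at a hit step is compared with $\nabla F_{\bp}$, whose norm is the quantity the convergence analysis actually controls.
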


 \begin{remark}
    We can see that all terms depend on masking probability. A smaller masking probability $\cbr{p_i}_{i=1}^N$ will result in a smaller $ \delta, \tilde L, \sigma_*, \zeta_{\max} $. Hence masking improves the generalization (empirical risk vs population risk) by stabilizing the training process, as validated by our  experiments. However, a lower masking probability $p_i$ will also increase the residual error of the convergence or empirical risk, as stated in Theorem~\ref{thm:randomly masked fedavg}. Consequently, the above theorem implies that masking can enhance generalization, as long as the residual optimization error from partial training remains controlled.
\end{remark}

\begin{remark}
    A related work to ours is~\cite{fu2023effectiveness}, which  also studies the generalization error of sparse training, and has similar conclusion that sparsity can improve the algorithmic stability. The main difference to ours is that they consider regularized ERM algorithm, while our analysis is built for an iterative distributed optimization algorithm.
\end{remark}

\begin{theorem}~\label{thm:stab rolling cvx}[Stability of Rolling Masking]
Let Assumptions~\ref{assump: smooth},~\ref{assumption: bounded var},~\ref{assumption: bounded domain} and~\ref{assump: cvx} hold. We assume each client has $n$ training data drawn from its distribution. Let $\hat\bw$ and $\hat\bw'$ be the output models of Algorithm~\ref{algorithm: Rolling Masked FedAvg} on dataset $\cS$ and $\cS^{(i)}$, respectively. Then, if we choose $\eta=\frac{\sqrt{Nn}}{RK}$ and $T\geq  \sqrt{\frac{n}{N}}$, we have:
    \begin{align*}
 \E\norm{\hat{\bw}   - {\hat \bw'} }     
      \leq    O\pare{  \frac{   \delta+GD_{\max} }{\sqrt{Nn}} +  \frac{1}{\sqrt{Nn}}
      \pare{ \sqrt{ \frac{1}{\sqrt{Nn}}    +    G^2 D_{\max}^2    +      \delta^2  }   }  }.   
\end{align*}
\end{theorem}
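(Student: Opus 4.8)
The plan is to bound the expected parameter divergence $\E\norm{\hat\bw - \hat\bw'}$ of the two runs of Algorithm~\ref{algorithm: Rolling Masked FedAvg}, which by the $G$-Lipschitz reduction preceding Lemma~\ref{lem:generalization} immediately yields the stated on-average stability and the generalization corollary. I would run the two executions on $\cS$ and $\cS^{(i)}$ under a common coupling: the same epoch permutations $\{\sigma_e\}$ and the same per-step sampling indices, so that the only discrepancy between the trajectories originates from the single perturbed point $z'_{i,j}$ of client $i$. Writing $\Delta_{e,r,k}$ for the divergence of the coupled local iterates and $\bar\Delta_{e,r}$ for that of the aggregated global models, the goal is a recursion in $\bar\Delta$ that can be unrolled across the $K$ local steps, the $R$ shuffled rounds of an epoch, and the $T$ epochs.

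The per-local-step behaviour follows from convexity and smoothness: the masked loss $\bw\mapsto f_i(\mask_i^{\sigma_e(r)}\odot\bw)$ is convex and $L$-smooth as a composition of a coordinate projection with $f_i$, so the masked stochastic gradient map is non-expansive for the prescribed step size, and on the frozen (unmasked) coordinates the update is the identity. Consequently, for every client $j\neq i$ and for client $i$ on steps that do not draw the perturbed point, the divergence is preserved but not increased; new divergence is injected only on the probability-$1/n$ event that the differing point is sampled during one of client $i$'s local updates, each injection being at most $2\eta$ times a bounded gradient gap controlled by $G$ and $\delta$. At the server, the fill-and-average step is a convex combination followed by the non-expansive projection $\cP_{\cW}$, so the aggregation damps the newly injected term of the single client $i$ by the factor $1/N$; this is the mechanism that produces the $1/(Nn)$ scaling, and the final projected averaged-gradient step defining $\hat\bw$ is one further non-expansive map that adds only a lower-order injection.

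To obtain the square-root residual I would track the second moment $\E\bar\Delta^2$ in parallel, since the stochastic-gradient variance $\delta^2$ and the cross-client heterogeneity enter quadratically; the heterogeneity surfaces as the term $G^2 D_{\max}^2$, where $D_{\max}$ from Definition~\ref{def:random mask grad dis} bounds, via total variation, the expected discrepancy between the perturbed client's gradient distribution and those governing the rest of the averaged update. Collecting the first-moment injections (each weighted by the $1/n$ sampling probability and the $1/N$ averaging factor) together with the second-moment contributions, summing the resulting recursion over all $O(RKT)$ updates, and substituting $\eta=\sqrt{Nn}/(RK)$ with $T\geq\sqrt{n/N}$ causes the horizon-dependent factors to collapse and balances the terms into the claimed $O\!\big((\delta+GD_{\max})/\sqrt{Nn}\big)$ leading order together with the square-root residual; a final Jensen/Cauchy--Schwarz step converts the second-moment bound into the $\sqrt{\cdot}$ form.

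I expect the main obstacle to be the joint bookkeeping of the permutation and the masked sequential aggregation. Unlike the random-masking analysis behind Theorem~\ref{thm:stab random cvx}, here the $R$ sub-models are assigned by a shared shuffle $\sigma_e$ and optimized sequentially within each epoch, so divergence produced in one round feeds directly into the next round's starting model through the masked fill-in $(\mathbf{1}-\mask_i^{\sigma_e(r)})\odot\bw_{e,r}$ rather than being averaged away independently. Tracking precisely which coordinates of the divergence are overwritten versus preserved by this fill-in across the sequential rounds, while still extracting the $1/N$ damping and the correct $D_{\max}$ dependence after taking expectation over $\sigma_e$, is the delicate part; I would address it by analysing the per-coordinate divergence conditioned on the permutation and then averaging, exploiting that over a full epoch the shuffle visits every sub-model exactly once.
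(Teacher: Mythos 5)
Your skeleton---coupling the two runs under shared permutations and shared sample indices, using convexity and smoothness to make every non-perturbed update non-expansive, charging an injection only on the probability-$1/n$ events at client $i$, extracting the $1/N$ damping from the server average, telescoping over $K$, $R$, $T$, and substituting the step size---is exactly the strategy of the paper's proof, and your identification of the sequential fill-in/permutation bookkeeping as the delicate point is accurate. (The paper handles that part not by per-coordinate conditioning but by reusing the drift bounds $\E\|\bw_{e,r}-\bw_e\|^2$ and Lemma~\ref{lem:local model deviation} from the convergence analysis, both of which rest on the Hoeffding--Serfling inequality for sampling without replacement under the shuffle.)

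The genuine gap is in how you produce the residual term $\frac{1}{\sqrt{Nn}}\sqrt{\tfrac{1}{\sqrt{Nn}}+G^2D_{\max}^2+\delta^2}$. In the paper this term does \emph{not} come from tracking the second moment of the divergence $\E\bar\Delta^2$: the proof keeps only first moments of the divergence, but it decomposes each injected gradient into (trajectory gradient $\nabla F_{\mask}(\bw_e)$) $+$ (local drift) $+$ (noise $\delta$) $+$ (heterogeneity $\zeta_i\le GD_{\max}$), and then bounds the accumulated trajectory-gradient contribution $\frac{\eta RKT}{Nn}\cdot\frac{1}{T}\sum_{e}\E\|\nabla F_{\mask}(\bw_e)\|$ via Jensen's inequality combined with the algorithm's own optimization guarantee $\frac{1}{T}\sum_{e}\E\|\nabla F_{\mask}(\bw_e)\|^2\le O\big(\tfrac{\E[F_{\mask}(\bw_0)]}{\eta RKT}+\cdots\big)$; the $\tfrac{1}{\sqrt{Nn}}$ inside the square root is precisely this optimization error $\E[F_{\mask}(\bw_0)]/(\eta RKT)$ after the step-size substitution, and the $\delta^2$ and $G^2D_{\max}^2$ inside the root are the residuals of that same convergence bound and of the deviation lemmas. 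Your proposal never invokes the convergence analysis, and your fallback of bounding each injection by ``$2\eta$ times a bounded gradient gap controlled by $G$ and $\delta$'' collapses the result to $O\big((G+\delta)/\sqrt{Nn}\big)$---a valid stability bound, but strictly weaker than the theorem, whose whole point is to replace the raw gradient bound $G$ by $\delta+GD_{\max}$ plus a vanishing optimization term. Tracking $\E\bar\Delta^2$ in parallel does not repair this: the quadratic quantities enter the root not because the divergence is squared, but because the average of \emph{squared gradient norms along the trajectory}, which only the convergence theorem controls, contains them.
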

Here we achieve similar $\frac{1}{\sqrt{Nn}}$ rate as random masking, indicating that rolling masking can also enjoy a more stable training dynamic, leading to a better generalization rate.

 \section{Experiments}\label{sec:exp}
In this section, we present a comprehensive evaluation of different masking algorithms through a series of experiments designed to assess their performance across various scenarios. Additional results are reported in Appendix~\ref{app: exp}.

\subsection{Experiment Setup}
\noindent\textbf{Datasets and Models.}~We evaluate the performance of different masking in the following scenarios. We train pre-activated ResNet18 models on CIFAR-10 and CIFAR-100. We modify the ResNet18 architecture by replacing batch normalization with static batch normalization and incorporating a scalar module after each convolutional layer.\\

\noindent\textbf{Data Heterogeneity.}~To create non-IID data distributions for CIFAR-10 and CIFAR-100 into 100 clients respectively, we follow FedRolex~\cite{alam2022fedrolex}, restricting each client to have access to only $L$ labels. We evaluate two levels of data heterogeneity. For CIFAR-10, we set $L=2$ as high data heterogeneity and $L=5$ as low data heterogeneity, which corresponds to the Dirichlet distribution with $\alpha = 0.1$ and $\alpha = 0.5$, respectively. For CIFAR-100, we set $L=20$ as high data heterogeneity and $L=50$ as low data heterogeneity, which also corresponds to the Dirichlet distribution with $\alpha = 0.1$ and $\alpha = 0.5$, respectively. Results for high data heterogeneity and low data heterogeneity are both presented in the following subsections.\\

\noindent\textbf{Model Heterogeneity.}~For ResNet18, client model capacities $\beta = \{1,1/2,1/4,1/8,1/16\}$ are used for evaluation, i.e., $1/16$ means the client model capacity is $1/16$ of the largest client model capacity (full model). We vary the number of kernels in the convolutional layers while maintaining the same number of nodes in the output layers.

\subsection{Convergence of  Masked Training}

\noindent\textbf{Model-Heterogeneous Setting.}~We compare the performance of rolling and random masking in model-heterogeneous scenarios, following prior work where client capacities are uniformly distributed and the global server model is the same as the largest client model. Fig.~\ref{fig:modelhetero_highdatahetero_globalloss_com_rollex_random} and Fig.~\ref{fig:modelhetero_highdatahetero_globalloss_com_rollex_random_cifar100} show the global model testing loss under high data heterogeneity, indicating that rolling masking outperforms random in both datasets. Similarly, Fig.~\ref{fig:modelhetero_highdatahetero_globalacc_com_rollex_random} and Fig.~\ref{fig:modelhetero_highdatahetero_globalacc_com_rollex_random_cifar100} illustrate the corresponding global model testing accuracy. Consistent with the loss results, rolling masking outperforms random in both datasets. Under low data heterogeneity, Fig.~\ref{fig:modelhetero_lowdatahetero_globalloss_com_rollex_random} and Fig.~\ref{fig:modelhetero_lowdatahetero_globalloss_com_rollex_random_cifar100} show the global model testing loss, while  Fig.~\ref{fig:modelhetero_lowdatahetero_globalacc_com_rollex_random} and Fig.~\ref{fig:modelhetero_lowdatahetero_globalacc_com_rollex_random_cifar100} show the global model testing accuracy, respectively. They demonstrate that the results under low data heterogeneity follow the high data heterogeneity. \\

\begin{figure}[t]
    \centering
    \subfigure[{CIFAR-10}]{
        \includegraphics[width=0.22\textwidth]{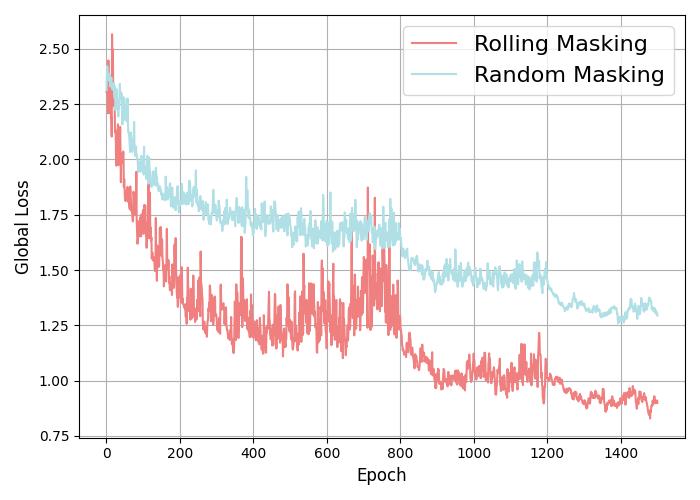}
        \label{fig:modelhetero_highdatahetero_globalloss_com_rollex_random}
    }
    \subfigure[{CIFAR-100}]{
        \includegraphics[width=0.22\textwidth]{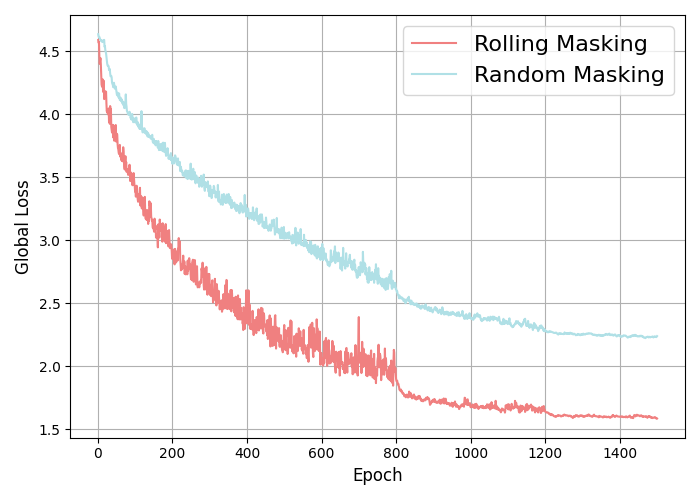}
        \label{fig:modelhetero_highdatahetero_globalloss_com_rollex_random_cifar100}
    }
    \subfigure[{CIFAR-10}]{
        \includegraphics[width=0.22\textwidth]{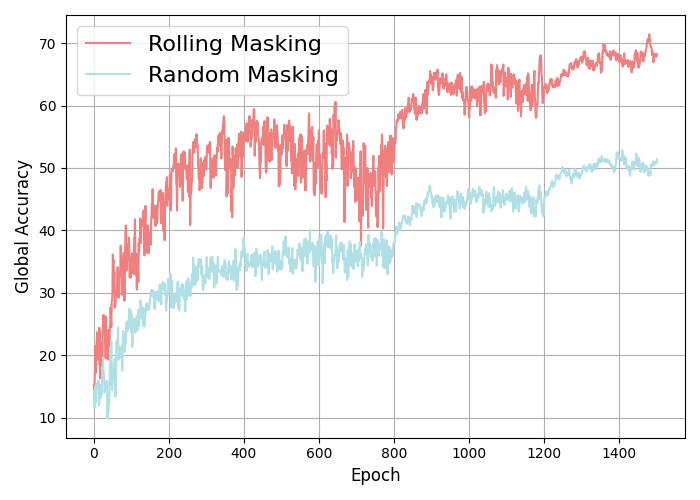}
        \label{fig:modelhetero_highdatahetero_globalacc_com_rollex_random}
    }
    \subfigure[{CIFAR-100}]{
        \includegraphics[width=0.22\textwidth]{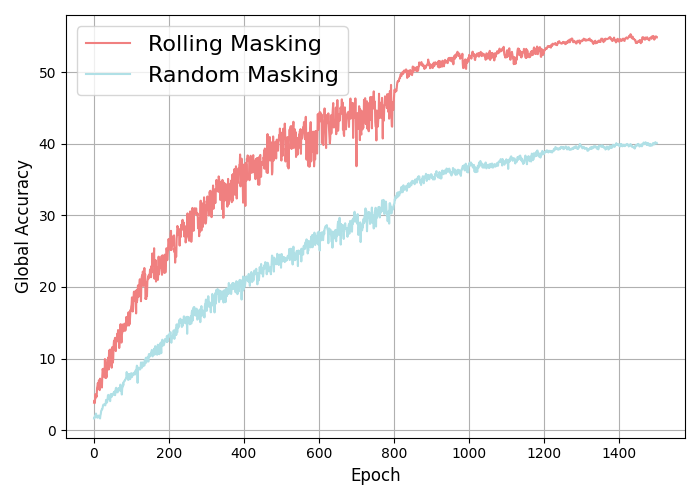}
        \label{fig:modelhetero_highdatahetero_globalacc_com_rollex_random_cifar100}
    }
    \caption{Global testing loss/accuracy of rolling and random masking under high data heterogeneity.}
    \label{fig:global_loss_high_data_Heterogeneity_rollex_random}
    \vskip -0.12in
\end{figure}
\begin{figure}[t]
\vskip -0.1in
    \centering
    \subfigure[{CIFAR-10}]{
        \includegraphics[width=0.22\textwidth]{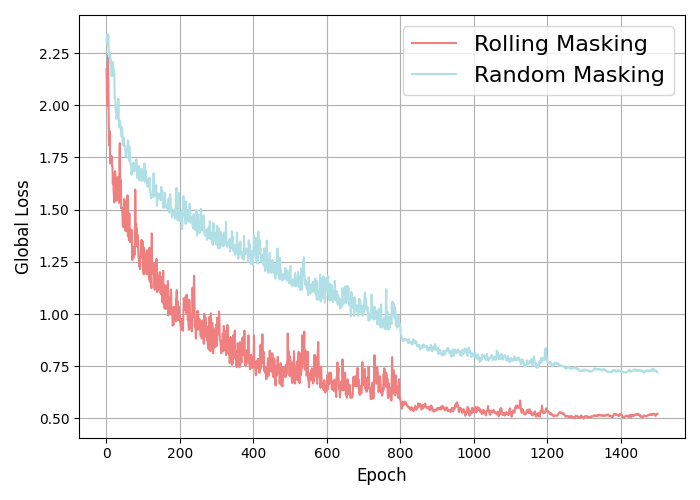}
        \label{fig:modelhetero_lowdatahetero_globalloss_com_rollex_random}
    }
    \subfigure[{CIFAR-100}]{
        \includegraphics[width=0.22\textwidth]{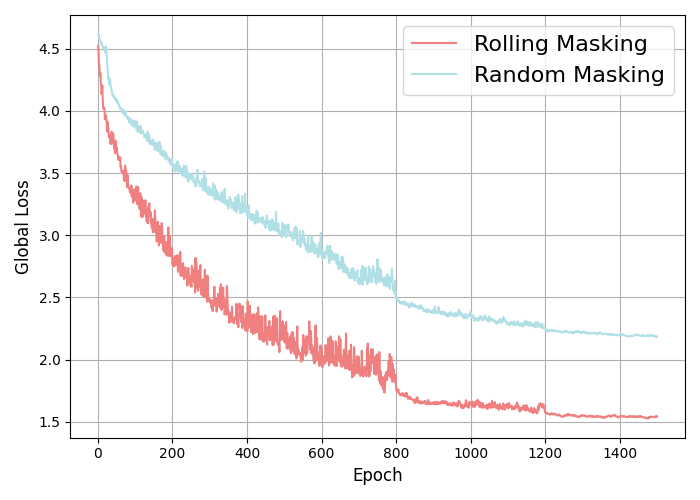}
        \label{fig:modelhetero_lowdatahetero_globalloss_com_rollex_random_cifar100}
    }
    \subfigure[{CIFAR-10}]{
        \includegraphics[width=0.22\textwidth]{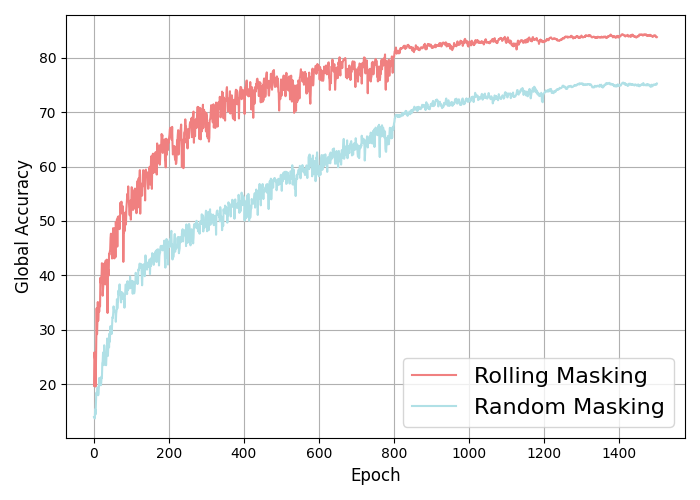}
        \label{fig:modelhetero_lowdatahetero_globalacc_com_rollex_random}
    }
    \subfigure[{CIFAR-100}]{
        \includegraphics[width=0.22\textwidth]{Exp_Figs/modelhetero_lowdatahetero_globalacc_com_rollex_random.png}
        \label{fig:modelhetero_lowdatahetero_globalacc_com_rollex_random_cifar100}
    }
    \caption{Global testing loss/accuracy of rolling and random masking with low data heterogeneity.}
    \label{fig:global_acc_low_data_Heterogeneity_rollex_random}
    \vskip -0.1in
\end{figure}

\noindent\textbf{Model-Homogeneous Setting.}~We compare global model testing loss/accuracy for CIFAR-10 in two model-homogeneous cases: all clients have the largest capacity model $(\beta$=$1)$ and all clients have the smallest capacity model $(\beta$=$1/16)$, representing the upper and lower performance bounds. For the global testing loss, Fig.~\ref{fig:modelhomo_largest_highdatahetero_globalloss_com_rollex_random} and Fig.~\ref{fig:modelhomo_smallest_highdatahetero_globalloss_com_rollex_random} show that in both cases, rolling masking outperforms random. For global testing accuracy, Fig.~\ref{fig:modelhomo_largest_highdatahetero_globalacc_com_rollex_random} and Fig.~\ref{fig:modelhomo_smallest_highdatahetero_globalacc_com_rollex_random} are also consistent with the conclusion. Additionally, the largest model consistently achieves better performance than the smallest model. Similarly, the global testing loss shown in Fig.~\ref{fig:modelhomo_largest_lowdatahetero_globalloss_com_rollex_random} and Fig.~\ref{fig:modelhomo_smallest_lowdatahetero_globalloss_com_rollex_random}, and the global testing accuracy shown in Fig.~\ref{fig:modelhomo_largest_lowdatahetero_globalacc_com_rollex_random} and Fig.~\ref{fig:modelhomo_smallest_lowdatahetero_globalacc_com_rollex_random} show that low data heterogeneity has the same conclusion as the high data heterogeneity in both scenarios.

\begin{figure}[t]
\vskip -0.1in
    \centering
    \subfigure[{Largest capacity}]{
        \includegraphics[width=0.22\textwidth]{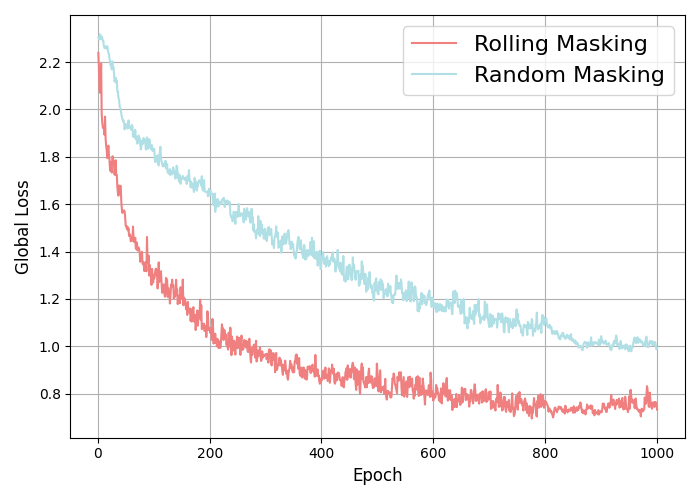}
        \label{fig:modelhomo_largest_highdatahetero_globalloss_com_rollex_random}
    }
    \subfigure[{Smallest capacity}]{
        \includegraphics[width=0.22\textwidth]{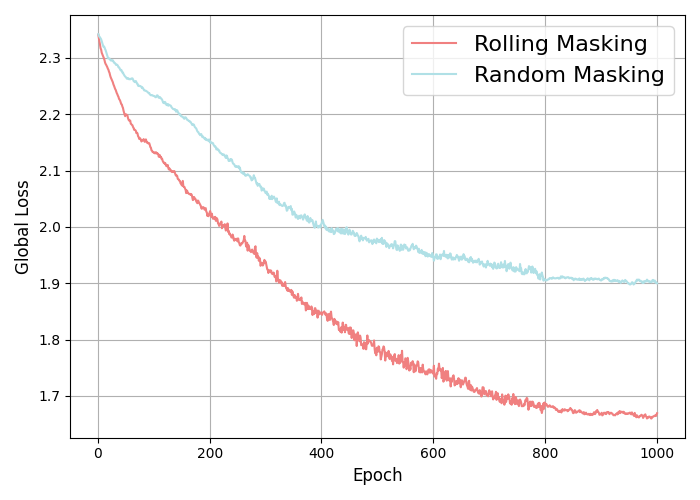}
        \label{fig:modelhomo_smallest_highdatahetero_globalloss_com_rollex_random}
    }
    \subfigure[{Largest capacity}]{
        \includegraphics[width=0.22\textwidth]{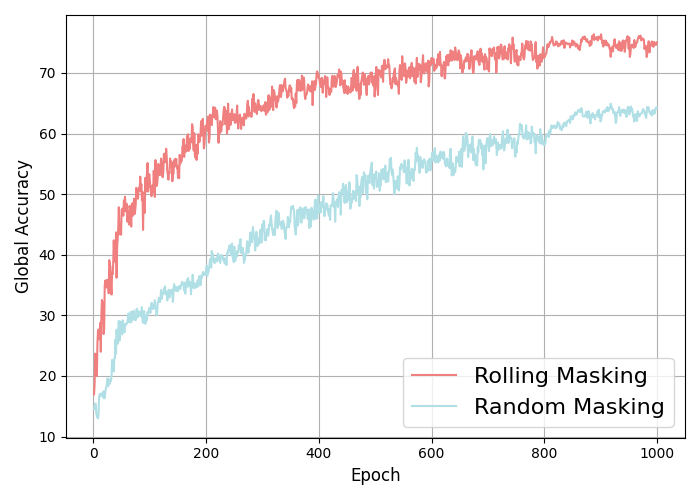}
        \label{fig:modelhomo_largest_highdatahetero_globalacc_com_rollex_random}
    }
   \subfigure[{Smallest capacity}]{
        \includegraphics[width=0.22\textwidth]{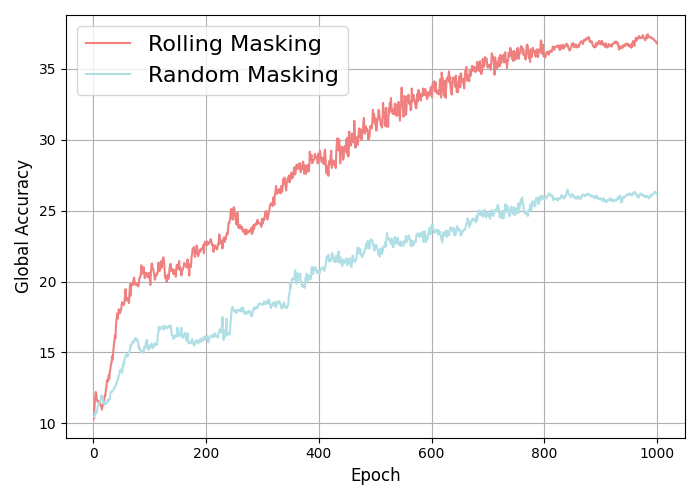}
        \label{fig:modelhomo_smallest_highdatahetero_globalacc_com_rollex_random}
    }
    \caption{Global testing loss/accuracy of rolling and random masking under the largest and smallest client model capacity under high data heterogeneity.}
    \label{fig:global_loss_model_homo_high_data_Heterogeneity_rollex_random}
\end{figure}
\begin{figure}[t]
\vskip -0.1in
    \centering
    \subfigure[{Largest Model Capacity}]{
        \includegraphics[width=0.22\textwidth]{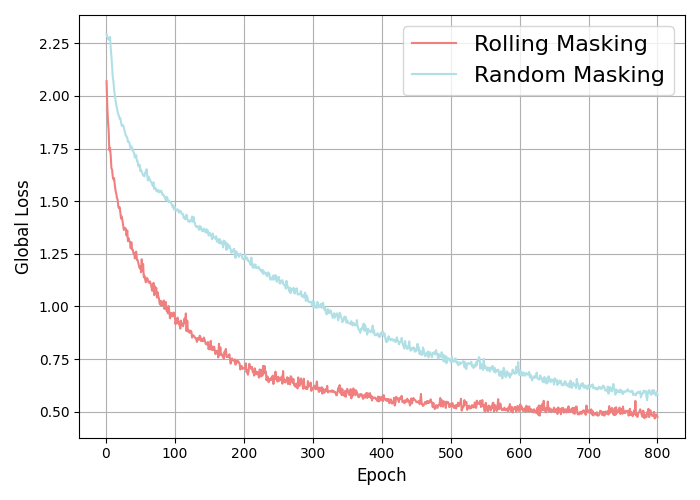}
        \label{fig:modelhomo_largest_lowdatahetero_globalloss_com_rollex_random}
    }
    \subfigure[{Smallest Model Capacity}]{
        \includegraphics[width=0.22\textwidth]{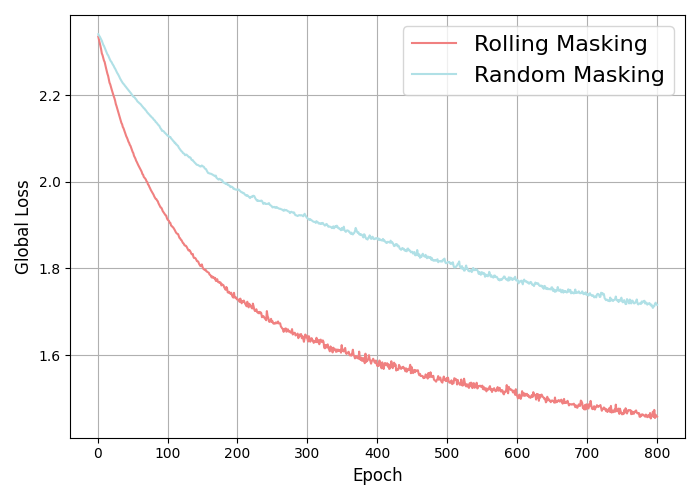}
        \label{fig:modelhomo_smallest_lowdatahetero_globalloss_com_rollex_random}
    }
    \subfigure[{Largest Model Capacity}]{
        \includegraphics[width=0.22\textwidth]{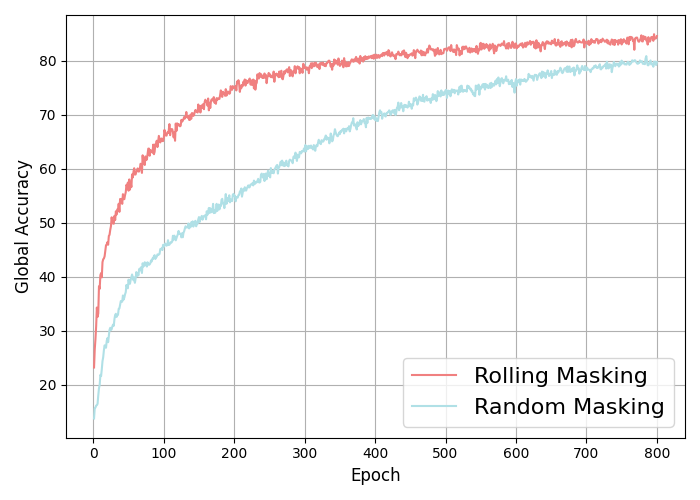}
        \label{fig:modelhomo_largest_lowdatahetero_globalacc_com_rollex_random}
    }
   \subfigure[{Smallest Model Capacity}]{
        \includegraphics[width=0.22\textwidth]{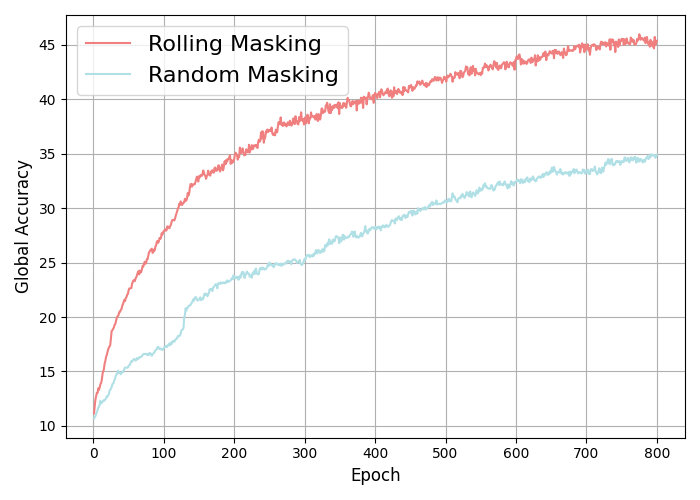}
        \label{fig:modelhomo_smallest_lowdatahetero_globalacc_com_rollex_random}
    }
    \caption{Global testing loss/accuracy of rolling and random masking under the largest and smallest client model capacity under low data heterogeneity.}
    \label{fig:global_acc_model_homo_low_data_Heterogeneity_rollex_random}
    \vskip -0.1in
\end{figure}

\begin{table}[t]
    \centering
    \caption{Generalization of random masking and full model training under high data heterogeneity.}
     \setlength{\tabcolsep}{3pt}
     \begin{tabular}{l l}
    \toprule
                    & \textbf{Global Loss Diff.} \\
    \midrule
      Random masking & $|0.9744 - 0.9618| = 0.0126$ \\
      Full model     & $|0.7592 - 0.7402| = 0.0190$ \\
      \midrule
                      & \textbf{Global Accuracy Diff.} \\
    \midrule
      Random masking & $|66.124 - 65.74| = 0.384$ \\
      Full model     & $|74.322 - 75.81| = 1.488$ \\
    \bottomrule
    \end{tabular}
    \label{tab:modelhomo_largest_highdatahetero_global_generalization_com_random_full}
\end{table}
\begin{table}[t]
    \centering
    \caption{Generalization of random masking and full model training under low data heterogeneity.}
    \setlength{\tabcolsep}{3pt}
     \begin{tabular}{l l}
    \toprule
                    & \textbf{Global Loss Diff.} \\
    \midrule
      Random masking & $|0.501 - 0.5164| = 0.0154$ \\
      Full model     & $|0.4379 - 0.471| = 0.0330$ \\
      \midrule
                      & \textbf{Global Accuracy Diff.} \\
    \midrule
      Random masking & $|83.014 - 82.92| = 0.094$ \\
      Full model     & $|85.46 - 84.62| = 0.840$ \\
    \bottomrule
    \end{tabular}
    
    \label{tab:modelhomo_largest_lowdatahetero_global_generalization_com_random_full}
\end{table}

\subsection{Generalization of Masked Training}
We evaluate the generalization of random masking and full model training (FedAvg) under both high and low data heterogeneity. To quantify generalization, we measure the gap between the global model's training and testing performance, considering both loss and accuracy. Specifically, we compute the difference between the global model's training loss (evaluated on local training data) and its test loss, as well as the difference between training accuracies (evaluated on local training data) and test accuracies for the global model. A smaller difference indicates better generalization and less overfitting to the local training distribution. Tab.~\ref{tab:modelhomo_largest_highdatahetero_global_generalization_com_random_full} and Tab.~\ref{tab:modelhomo_largest_lowdatahetero_global_generalization_com_random_full} show that random masking achieves smaller global loss and accuracy differences compared to full model training, under both cases. It suggests that random masking not only mitigates the adverse impact of non-i.i.d. data distributions but also enhances the stability and robustness of the global model compared to FedAvg training.

\section{Conclusions \& Directions for Future Work}
 This paper provides a comprehensive analysis of sub-model training in federated learning, addressing a significant gap in the rigorous convergence analysis of this approach. We have established convergence bounds for both randomly selected sub-model training and the partitioned model variant, demonstrating the impact of masking probability on residual error. Additionally, our stability analysis reveals that sub-model training can enhance generalization by stabilizing the training process. The empirical results corroborate our theoretical findings, highlighting the effectiveness of sub-model training in improving on-device local training for large learning models.

This work lays the foundation for further exploration in optimization and generalization of sub-model training. Nonetheless, several aspects warrant deeper examination in future studies to enhance the robustness and applicability of our findings. First, our theoretical analysis assumes strong convexity, smoothness, and bounded gradients, which are standard in the analysis of convergence for optimization algorithms. We note that these assumptions do not fully capture the behavior of deep non-convex neural networks. Proving convergence for such models typically requires fundamentally different techniques that leverage their specific structure (e.g., over-parameterization, layer-wise training dynamics, or implicit regularization). Developing such analyses is a valuable direction for future research and is currently beyond the scope of this work. Moreover, while privacy is not the main focus of our current work, we believe that sharing masked sub-models inherently offers some degree of privacy protection compared to full-model sharing, as in FedAvg. Since only partial model updates are communicated, the exposure of client-specific information is reduced. That said, we acknowledge that the masking mechanism could also introduce new security concerns. For example, clients with larger model capacity or more frequent participation might disproportionately influence the global model. This imbalance could potentially be exploited for targeted model manipulation or data inference attacks. Designing masking schemes that are both privacy-preserving and robust to such adversarial behavior is a compelling direction for future research.

\subsection*{Acknowledgment}
This work was partially supported by NSF CAREER Award \#2239374.

\bibliography{example_paper}
\bibliographystyle{unsrtnat}

\clearpage
\appendix
\section*{Appendix}

\paragraph{Organization} The appendix is organized as follows:
\begin{itemize}
    \item In Appendix~\ref{app:related} we discuss additional related works.   
    \item In Appendix~\ref{app: exp} we provide additional experimental results and setup details.  
    \item In Appendix~\ref{app:proof_random_mask_fedavg} we provide the proof of randomly masked FedAvg. 
    \item In Appendix~\ref{app:proof_fedrolex} we provide the proof of masking with rolling (FedRolex).  
    \item In Appendix~\ref{app:stability_masked_training} we provide the proof of the stability of the masked training method. 
\end{itemize}

\section{Additional Related Works}\label{app:related}
\noindent\textbf{Convergence analysis of FedAvg.}
  FedAvg (or Local SGD)~\citep{mcmahan2017communication} was used as a solution to reduce communication cost and protect user data privacy in distributed learning. FedAvg is firstly proposed by~\cite{mcmahan2017communication} to alleviate communication bottleneck in the distributed machine learning. \cite{stich2018local} was the first to prove that local SGD achieves $O\left(1/T\right)$ convergence rate with only $O(\sqrt{T})$ communication rounds on  IID data for smooth strongly-convex loss functions. \cite{haddadpour2019local} analyzed the convergence of local SGD on nonconvex (PL condition) function, and proposed an adaptive synchronization scheme. \cite{khaled2019tighter} gave the tighter bound of local SGD, which directly reduces the $O(\sqrt{T})$ communication rounds in \citep{stich2018local} to $O(N)$, under smooth strongly-convex setting. \cite{yuan2020federated} proposed the first accelerated local SGD, which further reduced the communication rounds to  $O(N^{1/3})$.~\cite{haddadpour2019convergence} gave the analysis of local GD and SGD on smooth nonconvex functions in non-IID setting. Li et al~\cite{li2019convergence}  analyzed the convergence of FedAvg under non-IID data for strongly convex functions. \cite{woodworth2020local,woodworth2020minibatch} investigated the difference between local SGD and mini-batch SGD, in both homogeneous and heterogeneous data settings.

\noindent\textbf{Distributed/federated Sub-model training.}~Distributed/federated sub-model training is proposed to solve clients' insufficient computation and memory issue in federated learning, especially in this large language model era. \cite{diao2020heterofl} proposed the first federated learning algorithm with heterogeneous client model capacity. \cite{alam2022fedrolex} propose FedRolex, which allows the clients to pick the suitable sub-model to optimize according to their capacity.

From the theoretical perspective, \cite{mohtashami2022masked} is the first to study the sub-model training where they considered the single machine setting, where $N=1$, and proved that SGD with masked model parameter will converge to first order stationary point of the masked objective. \cite{shulgin2023towards} studied single machine sub-model training algorithm and considered the special scenario where $f_i(\bw)$ is quadratic, and proved that the convergence rate will suffer from a residual error unless the objective and sub-model sampling schema have some benign property. \cite{demidovich2023mast} studied similar algorithm, and proved convergence to the optimal point of the masked objective for general strongly convex losses. For nonconvex regime, \cite{zhou2024every},~\cite{10622512} and~\cite{wu2024fiarse} studied convergence of the distributed sub-model training algorithm with local updates on general nonconvex loss function, but their bound depends on the norm of history iterates, i.e., $\norm{\bw_t}$. When the model's norm is large, the convergence bound becomes vacuous.

\noindent\textbf{Low-rank Federated Learning.}~Another line of works that reduce client computation/communication burden are low-rank federated learning~\citep{qiao2021communication,hyeon2021fedpara,yao2021fedhm,cho2024heterogeneous}. \cite{qiao2021communication} propose FedDLR, where the clients only send low-rank model to server at communication stage. Server then performs averaging, decomposes the averaged model into low-rank version again,  and sends them back to clients. They also propose an adaptive rank selection which boosts the performance. \cite{hyeon2021fedpara} proposed FedPara, where the clients directly optimize on low-rank models instead of full model, to meet the local computation and memory constraints.

\noindent\textbf{Other Model Heterogeneous Federated Learning.}~Besides sub-model and low-rank federated learning, there is also a body of works for model heterogeneous federated learning, via knowledge distillation~\citep{zhu2021data,lin2020ensemble,sun2020federated,guha2019one,chen2020fedbe,li2019fedmd,sattler2021fedaux}. \cite{li2019fedmd} proposed the first knowledge distillation based method for model heterogeneous federated learning, where they leveraged a public dataset to compute the 'consensus', and force each client's model to behave close to this consensus, to share knowledge among clients. \cite{zhu2021data} proposed the first data-free distillation method for federated learning, where they try to learn a data generator to generate synthetic clients' data for model distillation. 

\section{Experiments Details}\label{app: exp}
We provide the data settings in Tab.~\ref{tab:dataset_setting} and experimental results under low data heterogeneity in this section.
\begin{table}[h]
    \centering
    \caption{Dataset Description}
    \begin{tabular}{c c c c}
    \hline
        \textbf{Dataset} & \textbf{\# of Training Clients} &\textbf{\# of Training Examples} &  \textbf{\# of Testing Examples}\\
        CIFAR-10 &  100 & 50,000 & 10,000\\
        CiFAR-100 & 100 & 50,000 & 10,000\\
    \hline
    \end{tabular}
    \label{tab:dataset_setting}
\end{table}

The experiments are conducted on 2 NVIDIA 6000 GPUs. To compare ours with HeteroFL~\cite{diao2020heterofl}, which is referred to as static masking in our experiments. 
We conducted experiments using CIFAR-100 with high data heterogeneity and the ResNet18 model, and client model capacities are 1/4 and 1/8 of the full model size. Client selection is 10\% of clients are randomly selected from a pool of 100 clients per global epoch. The results are shown in Tab.~\ref{tab:heterofl_com}, indicating that ours outperforms HeteroFL.

\begin{table}[h]
    \centering
    \caption{Comparisons of global model performance with HeteroFL using CIFAR-100 with ResNet18.}
    \begin{tabular}{c c c}
    \toprule
        \textbf{Method} & \textbf{Global Accuracy} &\textbf{Global Loss}\\
        \midrule
        Static Masking (HeteroFL)&  34.37\% & 2.5026\\
        Roll Masking (Ours)& 35.30\%& 2.4681\\
    \bottomrule
    \end{tabular}
    \label{tab:heterofl_com}
\end{table}

\section{Proof of Convergence of Randomly Masked FedAvg}\label{app:proof_random_mask_fedavg}

In this section, we provide detailed proofs for the results and theorems on sub-model training with random masking omitted from the main body. We begin by outlining several general results that serve as helper for the main proofs, and then provide the proofs of Theorem~\ref{thm:randomly masked fedavg} (strongly convex setting) and Theorem~\ref{thm:random mask nonconvex} (nonconvex setting) in   Subsection~\ref{app:supp:proof:thm1} and Subsection~\ref{app:supp:proof:thm2}, respectively.

\subsection{Proof of Convex Setting}\label{app:supp:proof:thm1}

In this subsection, we are going to prove Theorem~\ref{thm:randomly masked fedavg}.  We begin with a high-level sketch of the proof to briefly illustrate our  strategy  before presenting the detailed argument. Consider an alternative objective induced by masking:
\begin{align}
    F_{\bp}(\bw) = \frac{1}{N}\sum\nolimits_{i=1}^N \E_{\mask_i \sim Ber(p_i)} [f_i(\mask_i\odot\bw)] \label{eq: obj random},
\end{align}
where $\bp = [p_1,\ldots,p_N]$
and define $\bw^*(\bp):= \arg\min_{\bw\in\cW}F_\bp(\bw)$ as the optimal model given $\bp$. Apparently, when $\bp = \mathbf{1}$, $F_{\bp}(\bw)$ becomes original objective $F(\bw)$.
Our proof relies on a key Lipschitz property of $\bw^*(\bp)$. That is, if each $f_i(\bw)$ is strongly convex and with bounded gradient, then
\begin{align*}
    \norm{\bw^*(\bp) -\bw^*(\mathbf{1})} \leq c\cdot \norm{\bp - \mathbf{1}},
\end{align*}
for some constant $c$ depending on $G$, $\mu$ and $\bp$. As a result, we can decompose the objective value into (1) convergence error of the sub-model training to ${\bw}^*(\bp)$ and (2) residual error due to masking:
  \begin{align*}
      F(  \hat{\bw}) - & F(  \bw^* ) \leq   \frac{5}{2}L\|\tilde{\bw}  - {\bw}^*(\bp)\|^2  + \pare{  \frac{ 5L}{2\bar\mu  }    + \frac{4 }{ L} }\frac{2G^2 + 2W^2L^2}{N } \sum_{i=1}^N d(1-p_i)  .
    \end{align*}
It remains to prove the convergence of Algorithm~\ref{algorithm: Masked FedAvg} to ${\bw}^*(\bp)$, which can be achieved by standard Local SGD analysis.

We now proceed to the formal proof, making each step of the argument precise. We start by showing the strong convexity of the  alternative objective induced in Eq.~\ref{eq: obj random}.

\begin{proposition}\label{prop:property of random masked obj}
    $F_\bp(\bw)$ is $\mu_{\bp} := \frac{1}{N}\sum_{i=1}^N p_i \mu$ strongly convex, and $L_\bp :=  \frac{1}{N}\sum_{i=1}^N p_i L $ smooth.
    \begin{proof} 
    The proof mainly follows~\cite[Lemmas 9 and 11]{demidovich2023mast}.
     We first examine the smoothness and convexity parameter of $\tilde F$. For smoothness:
   \begin{align*}
    F_\bp (\bw+\Delta \bw) &= \frac{1}{N}\sum_{i=1}^N \E_{\mask_i \sim Ber(p_i)} f_i(\mask_i\odot (\bw+\Delta\bw) )   \\
   & \leq \frac{1}{N}\sum_{i=1}^N \E_{\mask_i \sim Ber(p_i)} \pare{ f_i(\mask_i \odot \bw  ) + \inprod{\mask_i\odot \Delta \bw}{ \nabla f_i(\mask_i\odot\bw ) } + \frac{L}{2}\norm{\mask_i\odot\Delta\bw}^2 } \\ 
   & = F_\bp(\bw ) + \inprod{ \Delta \bw}{\frac{1}{N}\sum_{i=1}^N \E_{\mask_i \sim Ber(p_i)} \mask_i\odot\nabla f_i(\mask_i\odot\bw ) }    \\
   &\quad + \frac{L}{2} \frac{1}{N}\sum_{i=1}^N \E_{\mask_i \sim Ber(p_i)}\inprod{(\mM_i )^2 \Delta \bw}{\Delta \bw} \\ 
   & = F_\bp (\bw ) + \inprod{ \Delta \bw}{\nabla F_\bp(\bw ) }    +\frac{1}{2}\frac{1}{N}\sum_{i=1}^N p_i L \norm{\Delta \bw}^2,
\end{align*}
where $\mM_i$ is the diagonal matrix with $\mask_i$ is its diagonal entries. 

For convexity we have:

 \begin{align*}
    F_\bp (\bw+\Delta \bw) &= \frac{1}{N}\sum_{i=1}^N \E_{\mask_i \sim Ber(p_i)} f_i(\mask_i\odot (\bw+\Delta\bw) )   \\
   & \geq \frac{1}{N}\sum_{i=1}^N \E_{\mask_i \sim Ber(p_i)} \pare{ f_i(\mask_i\odot \bw  ) + \inprod{\mask_i\odot \Delta \bw}{ \nabla f_i(\mask_i\odot\bw ) } + \frac{\mu}{2}\norm{\mask_i\odot\Delta\bw}^2 } \\ 
   & = F_\bp (\bw ) + \inprod{ \Delta \bw}{\frac{1}{N}\sum_{i=1}^N \E_{\mask_i \sim Ber(p_i)} \mask_i\odot\nabla f_i(\mask_i\odot\bw ) }    \\
   &\quad + \frac{\mu}{2} \frac{1}{N}\sum_{i=1}^N \E_{\mask_i \sim Ber(p_i)} \inprod{(\mM_i)^2 \Delta \bw}{\Delta \bw} \\ 
   & = F_\bp (\bw ) + \inprod{ \Delta \bw}{\nabla F_\bp(\bw ) }    + \frac{\mu}{2}\frac{1}{N}\sum_{i=1}^N p_i \norm{\Delta \bw}^2.
\end{align*}

So $F_\bp$ is $\mu_{\bp} := \frac{1}{N}\sum_{i=1}^N p_i \mu$ strongly convex and $L_\bp :=  \frac{1}{N}\sum_{i=1}^N p_i L $ smooth.
\end{proof}

\end{proposition}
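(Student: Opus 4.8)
The plan is to derive both curvature bounds directly from the defining quadratic inequalities of each $f_i$, pushed through the linear masking map and then averaged over the Bernoulli randomness. Writing $\mM_i = \diag(\mask_i)$ so that $\mask_i\odot\bw = \mM_i\bw$, the first ingredient is the gradient identity: by the chain rule and symmetry of $\mM_i$, $\nabla_\bw\, f_i(\mM_i\bw) = \mM_i \nabla f_i(\mM_i\bw) = \mask_i\odot\nabla f_i(\mask_i\odot\bw)$, so taking expectations and averaging gives $\nabla F_\bp(\bw) = \frac{1}{N}\sum_{i=1}^N \E_{\mask_i}[\mask_i\odot\nabla f_i(\mask_i\odot\bw)]$. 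This identity is the anchor that lets the linear term in each expansion collapse cleanly into $\inprod{\Delta\bw}{\nabla F_\bp(\bw)}$.

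For smoothness, I would apply the $L$-smoothness of $f_i$ at the masked point $\mask_i\odot\bw$ with increment $\mask_i\odot\Delta\bw$:
\begin{align*}
f_i(\mask_i\odot(\bw+\Delta\bw)) \leq f_i(\mask_i\odot\bw) + \inprod{\mask_i\odot\Delta\bw}{\nabla f_i(\mask_i\odot\bw)} + \tfrac{L}{2}\norm{\mask_i\odot\Delta\bw}^2,
\end{align*}
then take $\E_{\mask_i}$, average over $i$, and simplify each of the three terms. The constant term reassembles into $F_\bp(\bw)$, the inner product into $\inprod{\Delta\bw}{\nabla F_\bp(\bw)}$ via the identity above, and the quadratic remainder is handled by the one computation that carries the whole argument: $\E_{\mask_i}\norm{\mask_i\odot\Delta\bw}^2 = \inprod{\E_{\mask_i}[\mM_i^2]\,\Delta\bw}{\Delta\bw} = p_i\norm{\Delta\bw}^2$, where I use $\mM_i^2 = \mM_i$ (its diagonal entries are $\{0,1\}$-valued, hence idempotent) and $\E_{\mask_i}[\mM_i] = p_i\mI$. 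Summing the coefficients yields $\frac{1}{N}\sum_{i=1}^N p_i L = L_\bp$, giving exactly the $L_\bp$-smoothness quadratic upper bound around an arbitrary $\bw$.

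The strong-convexity direction is fully symmetric: I replace the smoothness upper bound by the $\mu$-strong-convexity lower bound on each $f_i$, namely with $\tfrac{\mu}{2}\norm{\mask_i\odot\Delta\bw}^2$ in place of $\tfrac{L}{2}\norm{\mask_i\odot\Delta\bw}^2$ and $\geq$ in place of $\leq$, and then repeat the identical expectation-and-average steps to obtain the coefficient $\frac{1}{N}\sum_{i=1}^N p_i\mu = \mu_\bp$. I do not anticipate a genuine obstacle; the only step deserving care is the moment computation $\E_{\mask_i}[\mM_i^2] = p_i\mI$, which is exactly where the $\{0,1\}$ structure of the mask matters --- idempotence makes the mask act identically inside the gradient and inside the quadratic form, which is precisely what makes each masking probability $p_i$ enter both parameters linearly.
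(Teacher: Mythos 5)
Your proposal is correct and follows essentially the same route as the paper: apply the $L$-smoothness upper bound and $\mu$-strong-convexity lower bound of each $f_i$ at the masked point with masked increment, take expectations and average, and collapse the quadratic term via $\E_{\mask_i}[\mM_i^2] = p_i\mI$ to obtain the coefficients $L_\bp$ and $\mu_\bp$. Your explicit justification of the idempotence $\mM_i^2=\mM_i$ and the gradient identity $\nabla F_\bp(\bw)=\frac{1}{N}\sum_{i=1}^N\E_{\mask_i}[\mask_i\odot\nabla f_i(\mask_i\odot\bw)]$ makes precise two steps the paper leaves implicit, but the argument is the same.
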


\begin{lemma}\label{lem:random lipschitz}
Given a $N$-dimensional vector $\bp\in[0,1]^N$, we define $\bw^*(\bp):= \arg\min_{\bw \in \cW} \cbr{\Phi(\bp,\bw) := \frac{1}{N}\sum_{i=1}^N \E_{\mask_i\sim Ber(p_i)} f_i(\mask_i \odot \bw)} $. Then the following statement holds:
    \begin{align*}
     \|\bw^*(\mathbf{1}) - \bw^*(\bp)\|\leq  \sqrt{\frac{2G^2 + 2W^2L^2}{\mu_\bp^2 N }} \sqrt{\sum_{i=1}^N d(1- p_i)}.
    \end{align*}
   \begin{proof}
   
         First, according to optimality conditions we have:
        \begin{align*}
            \langle \bw - \bw^*(\bp), \nabla_{2} \Phi(\bp, \bw^*(\bp)) \rangle \geq 0,\\
            \langle \bw - \bw^*(\mathbf{1}), \nabla_{2} \Phi(\mathbf{1}, \bw^*(\mathbf{1})) \rangle \geq 0
        \end{align*}
        Substituting $\bw$ with $\bw^*(\mathbf{1})$ and $\bw^*(\bp)$ in the above first and second inequalities respectively yields:
        \begin{align*}
             \langle\bw^*(\mathbf{1}) - \bw^*(\bp), \nabla_{2} \Phi(\bp, \bw^*(\bp)) \rangle \geq 0,\\
            \langle \bw^*(\bp)- \bw^*(\mathbf{1}), \nabla_{2} \Phi(\mathbf{1}, \bw^*(\mathbf{1})) \rangle \geq 0.
        \end{align*}
        Adding up the above two inequalities yields:
        \begin{align}
            \langle \bw^*(\mathbf{1}) - \bw^*(\bp), \nabla_{2} \Phi(\bp, \bw^*(\bp))-\nabla_{2} \Phi(\mathbf{1}, \bw^*(\mathbf{1}))   \rangle \geq 0,\label{eq:random lipschitz 1} 
        \end{align}
        Since $\Phi(\bp,\cdot)$ is $\mu_\bp$ strongly convex, as shown in Proposition~\ref{prop:property of random masked obj}, we have:
        \begin{align}
            \langle \bw^*(\mathbf{1}) - \bw^*(\bp), \nabla_2 \Phi(\bp, \bw^*(\mathbf{1})) - \nabla_2 \Phi(\bp, \bw^*(\bp)) \geq \mu_\bp \|\bw^*(\mathbf{1}) - \bw^*(\bp)\|^2. \label{eq:random lipschitz 2} 
        \end{align}
        Adding up (\ref{eq:random lipschitz 1}) and (\ref{eq:random lipschitz 2}) yields:
        \begin{align*}
            \langle \bw^*(\mathbf{1}) - \bw^*(\bp), \nabla_2 \Phi(\bp, \bw^*(\mathbf{1})) - \nabla_2 \Phi(\mathbf{1}, \bw^*(\mathbf{1})) \geq \mu_\bp \|\bw^*(\mathbf{1}) - \bw^*(\bp)\|^2.
        \end{align*}
        Now we examine the smoothness of $\nabla_2 \Phi(\bp,\bw)$ in terms of the first variable. 
        \begin{align}
            \norm{ \nabla_2 \Phi(\bp,\bw) - \nabla_2 \Phi(\mathbf{1},\bw)   }^2 &= \norm{ \frac{1}{N}\sum_{i=1}^N  \E_{\mask_i \sim Ber(p_i)} \sbr{\mask_i  \odot \nabla f_i(\mask_i \odot \bw)} -  \frac{1}{N}\sum_{i=1}^N   \mathbf{1} \odot \nabla f_i(\mathbf{1}\odot \bw)   }^2  \nonumber\\
           & \leq \frac{1}{N}\sum_{i=1}^N   \norm{  \E_{\mask_i \sim Ber(p_i)} \sbr{\mask_i  \odot \nabla f_i(\mask_i \odot \bw)} -   \mathbf{1} \odot \nabla f_i(\mathbf{1}\odot \bw) }^2 \nonumber\\
           & \leq \frac{1}{N}\sum_{i=1}^N  \pare{ 2G^2 \E_{\mask_i \sim Ber(p_i)}\norm{  \mask_i    -    \mathbf{1}   }^2 
 + 2W^2L^2 \E_{\mask_i \sim Ber(p_i)}\norm{\mask_i    -   \mathbf{1}}^2} \nonumber\\
 & = \frac{2G^2 + 2W^2L^2}{N } \sum_{i=1}^N d(1- p_i)  \label{eq:random lipschitz 1}
        \end{align}
        Finally, putting pieces together will conclude the proof:
        \begin{align*}
           \sqrt{\frac{2G^2 + 2W^2L^2}{N }}\| \bw^*(\bp) - \bw^*(\mathbf{1})\| \sqrt{\sum_{i=1}^N d(1- p_i)} &\geq \mu_\bp \|\bw^*(\bp) - \bw^*(\mathbf{1})\|^2\\
            \Longleftrightarrow   \sqrt{\frac{2G^2 + 2W^2L^2}{\mu_\bp^2 N }} \sqrt{\sum_{i=1}^N d(1- p_i)} &\geq  \|\bw^*(\bp) - \bw^*(\mathbf{1})\|. 
        \end{align*}
    \end{proof}
\end{lemma}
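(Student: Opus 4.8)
The plan is to treat this as a standard minimizer-sensitivity (Lipschitz-of-argmin) result: I want to control how the constrained minimizer $\bw^*(\bp)$ moves as the masking parameter $\bp$ is perturbed away from $\mathbf{1}$. The backbone is the first-order optimality condition over the convex set $\cW$, combined with the strong convexity of $\Phi(\bp,\cdot)$ established in Proposition~\ref{prop:property of random masked obj}. Concretely, I would reduce the target bound on $\norm{\bw^*(\mathbf{1}) - \bw^*(\bp)}$ to a uniform bound on the discrepancy between the two gradient fields $\nabla_2\Phi(\bp,\cdot)$ and $\nabla_2\Phi(\mathbf{1},\cdot)$ evaluated at a single point, and then estimate that discrepancy directly from the masking structure.

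First I would write the variational inequalities characterizing each minimizer: for all $\bw \in \cW$, $\inprod{\bw - \bw^*(\bp)}{\nabla_2\Phi(\bp, \bw^*(\bp))} \geq 0$ and $\inprod{\bw - \bw^*(\mathbf{1})}{\nabla_2\Phi(\mathbf{1}, \bw^*(\mathbf{1}))} \geq 0$. Substituting $\bw = \bw^*(\mathbf{1})$ in the first and $\bw = \bw^*(\bp)$ in the second and adding yields a single inequality in the difference $\bw^*(\mathbf{1}) - \bw^*(\bp)$. Adding to this the $\mu_\bp$-strong-monotonicity inequality for $\nabla_2\Phi(\bp,\cdot)$ (which is exactly what Proposition~\ref{prop:property of random masked obj} supplies), the $\nabla_2\Phi(\bp, \bw^*(\bp))$ terms cancel and the rest rearranges into
$$\mu_\bp\norm{\bw^*(\mathbf{1}) - \bw^*(\bp)}^2 \leq \inprod{\bw^*(\mathbf{1}) - \bw^*(\bp)}{\nabla_2\Phi(\bp, \bw^*(\mathbf{1})) - \nabla_2\Phi(\mathbf{1}, \bw^*(\mathbf{1}))}.$$
A Cauchy--Schwarz step then leaves me needing only a uniform bound on $\norm{\nabla_2\Phi(\bp, \bw) - \nabla_2\Phi(\mathbf{1}, \bw)}$ over $\bw \in \cW$.

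The crux is this last estimate: quantifying how much masking distorts the gradient field. I would expand the per-client difference $\E_{\mask_i}[\mask_i \odot \nabla f_i(\mask_i\odot\bw)] - \nabla f_i(\bw)$, pull the norm inside by convexity/Jensen, and decompose it into two \emph{physically distinct} masking effects: (i) the gradient output being masked, $\mask_i \odot \nabla f_i(\cdot) - \nabla f_i(\cdot)$, controlled by the bounded-gradient Assumption~\ref{assumption: bounded grad} through a factor proportional to $G\norm{\mask_i - \mathbf{1}}$; and (ii) the input argument being masked, $\nabla f_i(\mask_i\odot\bw) - \nabla f_i(\bw)$, controlled by $L$-smoothness (Assumption~\ref{assump: smooth}) together with the bounded-domain Assumption~\ref{assumption: bounded domain}, producing a factor proportional to $LW\norm{\mask_i - \mathbf{1}}$. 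Squaring, using $(a+b)^2 \le 2a^2 + 2b^2$, and taking the expectation, the identity $\E_{\mask_i\sim Ber(p_i)}\norm{\mask_i - \mathbf{1}}^2 = d(1-p_i)$ converts everything into $\frac{2G^2 + 2W^2L^2}{N}\sum_{i=1}^N d(1-p_i)$; feeding this back and dividing by $\mu_\bp$ gives the claimed bound.

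I expect the main obstacle to be step (ii): handling the perturbation of the \emph{input} to $f_i$ so that, after taking expectations, it genuinely produces a factor that scales with $d(1-p_i)$ and vanishes as $p_i \to 1$, rather than a crude $W^2$ bound that would leave a constant residual. The accounting has to deploy $\norm{(\mathbf{1}-\mask_i)\odot\bw} \le \norm{\bw}_\infty \norm{\mathbf{1}-\mask_i}$ together with $\norm{\bw}_\infty \le \norm{\bw} \le W$, and then combine the two error sources into a single uniform expression. A secondary point to keep honest is that $\bw^*(\mathbf{1})$ is the constrained minimizer over $\cW$ (not over all of $\R^d$), which is precisely what makes the bounded-domain assumption available throughout; the remaining optimality-condition and Cauchy--Schwarz manipulations are routine.
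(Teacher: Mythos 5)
Your proposal is correct and follows essentially the same route as the paper's proof: the variational inequalities plus strong monotonicity of $\nabla_2\Phi(\bp,\cdot)$ reduce the problem to bounding the gradient-field discrepancy at $\bw^*(\mathbf{1})$, and your two-part decomposition (output masking via $G$, input masking via $L$ and $W$) together with $\E\norm{\mask_i - \mathbf{1}}^2 = d(1-p_i)$ is exactly the paper's estimate. No gaps to report.
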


\begin{lemma} [Optimality Gap]\label{lem:opt gap}
Let $\Phi(\bp,\bw) := \frac{1}{N}\sum_{i=1}^N \E_{\mask_i\sim Ber(p_i)} f_i(\mask_i \odot \bw)$ . Let $\hat{\bw} = \cP_{\cW}(\tilde{\bw} - \frac{1}{L} \nabla_{\bw} \Phi(\bp,\tilde{\bw}))$. If we assume each $f_i$ is $L$-smooth, $\mu$-strongly convex and with gradient bounded by $G$, then the following statement holds true:
    \begin{align*}
    \E[ F(  \hat{\bw}) -  F(  \bw^* ) ]\leq   \frac{5}{2}L\E\|\tilde{\bw}  - {\bw}^*(\bp)\|^2+ \pare{  \frac{ 5L}{2\mu_{\bp}  }    + \frac{4 }{ L} }\frac{2G^2 + 2W^2L^2}{N } \sum_{i=1}^N d(1-p_i) ,
    \end{align*}
    where    $\bw^* = \arg\min_{\bw \in \cW} F(   {\bw} )$.

    \begin{proof}
Define $\hat\nabla_{\bw} \Phi(\bp, {\bw}):=\frac{1}{N}\sum_{i=1}^N \mask_i \odot \nabla f_i(\mask_i\odot\bw  )   $     According to property of projection, we have:
    \begin{align*}
        0 &\leq \inprod{\bw - \hat{\bw}}{L(\hat{\bw} - \tilde{\bw})+\hat\nabla_{\bw} \Phi(\bp,\tilde{\bw}) }\\
        &= \underbrace{\inprod{\bw - \hat{\bw}}{L(\hat{\bw} - \tilde{\bw})+\nabla_{\bw} \Phi( \mathbf{1},\tilde{\bw}) } }_{T_1}+ \underbrace{\inprod{\bw - \hat{\bw}}{ \hat\nabla_{\bw} \Phi(\bp,\tilde{\bw}) -\nabla_{\bw} \Phi( \mathbf{1},\tilde{\bw})}}_{T_2}.
    \end{align*}

For $T_1$, we notice:
\begin{align*}
   \inprod{\bw - \hat{\bw}}{L(\hat{\bw} - \tilde{\bw})+\nabla_{\bw} \Phi( \mathbf{1} ,\tilde{\bw}) }  & =L \inprod{\bw - \tilde{\bw}} {\hat{\bw} - \tilde{\bw}} + L \inprod{ \tilde{\bw}-\hat{\bw}}{\hat{\bw} - \tilde{\bw}} +  \inprod{\bw - \hat{\bw}}{ \nabla_{\bw} \Phi( \mathbf{1},\tilde{\bw}) }\\
   & =L \inprod{\bw - \tilde{\bw}} {\hat{\bw} - \tilde{\bw}} - L \norm{ \tilde{\bw}-\hat{\bw}}^2 +   \inprod{\bw - \hat{\bw}_i}{ \nabla_{\bw} \Phi( \mathbf{1},\tilde{\bw}) } \\
   & \leq L ( \norm{\bw - \tilde{\bw}}^2 + \frac{1}{4}\norm{\hat{\bw} - \tilde{\bw}}^2 )- L \norm{ \tilde{\bw}-\hat{\bw}}^2 +  \underbrace{\inprod{\bw - \hat{\bw}}{ \nabla_{\bw} \Phi(\mathbf{1},\tilde{\bw}) }}_{\spadesuit } 
\end{align*}
where at last step we used Young's inequality.
To bound $\spadesuit$,  we apply the $L$ smoothness and $\mu$ strongly convexity of $\Phi(\mathbf{1},\cdot)$:
\begin{align*}
    \inprod{\bw - \hat{\bw} }{ \nabla_{\bw} \Phi(\mathbf{1},\tilde{\bw} ) } &= \inprod{\bw - \tilde{\bw} }{ \nabla_{\bw} \Phi( \mathbf{1},\tilde{\bw}) } + \inprod{\tilde{\bw} - \hat{\bw}}{ \nabla_{\bw} \Phi(\mathbf{1},\tilde{\bw}) }\\
    &\leq \Phi( \mathbf{1}, \bw ) - \Phi( \mathbf{1},\tilde{\bw}) - \frac{\mu}{2}\norm{\tilde{\bw} -  {\bw} }^2+ \Phi( \mathbf{1},\tilde{\bw}) - \Phi( \mathbf{1},\hat{\bw}) + \frac{L}{2}\norm{\tilde{\bw} - \hat{\bw}}^2\\
    &\leq \Phi( \mathbf{1}, \bw ) - \Phi( \mathbf{1},\hat{\bw})   - \frac{\mu}{2}\norm{\tilde{\bw} -  {\bw} }^2+  \frac{L}{2}\norm{\tilde{\bw} - \hat{\bw}}^2
\end{align*}
Putting above bound back yields:
\begin{align*}
  T_1 =   \inprod{\bw - \hat{\bw}}{L(\hat{\bw} - \tilde{\bw})+\nabla_{\bw} \Phi( \mathbf{1},\tilde{\bw}) } \leq \Phi( \mathbf{1}, \bw ) - \Phi( \mathbf{1},\hat{\bw})  + L\norm{\tilde{\bw} -  {\bw} }^2 - \frac{ L}{4}   \norm{\tilde{\bw} - \hat{\bw}}^2.
\end{align*}
Now we switch to bounding $T_2$. Applying Cauchy-Schwartz yields:
\begin{align*}
    \inprod{\bw - \hat{\bw} }{\hat \nabla_{\bw} \Phi(\bp,\tilde{\bw} ) -\nabla_{\bw} \Phi( \mathbf{1} ,\tilde{\bw} )} &\leq \frac{L}{4}\norm{\bw - \tilde{\bw} }^2 + \frac{L}{4}\norm{\tilde{\bw} - \hat{\bw} }^2 + \frac{4}{L} \norm{\hat\nabla_{\bw} \Phi(\bp,\tilde{\bw} ) -\nabla_{\bw} \Phi( \mathbf{1} ,\tilde{\bw} )}^2\\ 
\end{align*}
To bound $ \norm{\hat\nabla_{\bw} \Phi(\bp,\tilde{\bw} ) -\nabla_{\bw} \Phi( \mathbf{1} ,\tilde{\bw} )}^2$, we follow the same steps in (\ref{eq:random lipschitz 1}):
 \begin{align}
            \norm{\hat \nabla_2 \Phi(\bp,\tilde{\bw}) - \nabla_2 \Phi(\mathbf{1},\tilde{\bw})   }^2 &= \norm{ \frac{1}{N}\sum_{i=1}^N   \mask_i  \odot \nabla f_i(\mask_i \odot \tilde{\bw})  -  \frac{1}{N}\sum_{i=1}^N   \mathbf{1} \odot \nabla f_i(\mathbf{1}\odot \tilde{\bw})   }^2  \nonumber\\
           & \leq \frac{1}{N}\sum_{i=1}^N   \norm{    \mask_i  \odot \nabla f_i(\mask_i \odot \tilde{\bw})  -   \mathbf{1} \odot \nabla f_i(\mathbf{1}\odot \tilde{\bw}) }^2 \nonumber\\
           & \leq \frac{1}{N}\sum_{i=1}^N  \pare{ 2G^2  \norm{  \mask_i    -    \mathbf{1}   }^2 
 + 2W^2L^2  \norm{\mask_i    -   \mathbf{1}}^2} \nonumber\\
 & = \frac{2G^2 + 2W^2L^2}{N } \sum_{i=1}^N \norm{  \mask_i    -    \mathbf{1}   }^2  .
        \end{align}
 
Putting pieces together yields:
\begin{align*}
    0 &\leq \Phi( \mathbf{1} , \bw ) - \Phi( \mathbf{1} ,\hat{\bw} )  + \frac{5L}{4}\norm{\tilde{\bw}  -  {\bw} }^2     + \frac{4 }{  L} \frac{2G^2 + 2W^2L^2}{N } \sum_{i=1}^N\norm{  \mask_i    -    \mathbf{1}   }^2 .
\end{align*}
Re-arranging terms and setting $\bw = \bw^*(\mathbf{1}) = \arg\min_{\bw\in\cW} \Phi(\mathbf{1},\bw)$ yields:
\begin{align*}
     \Phi( \mathbf{1},\hat{\bw}) -  \Phi( \mathbf{1}, \bw^*(\mathbf{1}) ) \leq   \frac{5L}{4}\norm{\tilde{\bw} -  {\bw}^* }^2     + \frac{4 }{ L}\frac{2G^2 + 2W^2L^2}{N } \sum_{i=1}^N \norm{  \mask_i    -    \mathbf{1}   }^2 .
\end{align*}
Taking expectation over randomness of $\mask_i$ yields
\begin{align*}
   \E [\Phi( \mathbf{1},\hat{\bw}) -  \Phi( \mathbf{1}, \bw^*(\mathbf{1}) )] \leq  \frac{5L}{4}\E\norm{\tilde{\bw} -  {\bw}^* }^2     + \frac{4 }{ L}\frac{2G^2 + 2W^2L^2}{N } \sum_{i=1}^N d(1- p_i).
\end{align*}
At last, due to the Lipschitzness  property of of $\bw^*(\cdot)$  as shown in Lemma~\ref{lem:random lipschitz}, it follows that:
        \begin{align*}
           \frac{5L}{4}\E\|\tilde{\bw}  - {\bw}^*( \mathbf{1}) \|^2 &\leq \frac{5L}{2}\E\|\tilde{\bw} - {\bw}^*(\bp)\|^2+\frac{5L}{2}\E\|{\bw}^*(\bp) - {\bw}^*( \mathbf{1} )\|^2 \\
            &\leq \frac{5L}{2}\E\|\tilde{\bw}  - {\bw}^*(\bp)\|^2+\frac{5L}{2}  \frac{2G^2 + 2W^2L^2}{\mu^2_\bp N } \sum_{i=1}^N d(1- p_i)  ,
        \end{align*}
  as desired.

    \end{proof}
    
    \end{lemma}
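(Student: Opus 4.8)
The plan is to treat the output $\hat{\bw}$ as one projected step of the map $\bw \mapsto \cP_{\cW}\pare{\bw - \tfrac{1}{L}\hat\nabla_{\bw}\Phi(\bp,\bw)}$ evaluated at $\tilde{\bw}$, where $\hat\nabla_{\bw}\Phi(\bp,\bw) = \frac{1}{N}\sum_i \mask_i\odot\nabla f_i(\mask_i\odot\bw)$ is the sampled masked gradient, and to measure the one-step progress against the \emph{unmasked} objective $F(\cdot) = \Phi(\mathbf{1},\cdot)$. The starting point is the variational characterization of the Euclidean projection: for every $\bw\in\cW$,
\[
0 \le \inprod{\bw - \hat{\bw}}{L(\hat{\bw} - \tilde{\bw}) + \hat\nabla_{\bw}\Phi(\bp,\tilde{\bw})}.
\]
I would then split the right-hand side by adding and subtracting the true gradient $\nabla_{\bw}\Phi(\mathbf{1},\tilde{\bw})$, producing a ``signal'' term $T_1$ built from the exact gradient of $F$ and a ``bias'' term $T_2$ built from the mismatch $\hat\nabla_{\bw}\Phi(\bp,\tilde{\bw}) - \nabla_{\bw}\Phi(\mathbf{1},\tilde{\bw})$.

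For $T_1$ I would expand the inner product, apply Young's inequality to the cross term, and invoke $L$-smoothness and $\mu$-strong convexity of $\Phi(\mathbf{1},\cdot)$ (Proposition~\ref{prop:property of random masked obj} with $\bp=\mathbf{1}$) to lower-bound $\inprod{\bw-\hat{\bw}}{\nabla_{\bw}\Phi(\mathbf{1},\tilde{\bw})}$ by a three-point expression; the upshot is
\[
T_1 \le \Phi(\mathbf{1},\bw) - \Phi(\mathbf{1},\hat{\bw}) + L\norm{\tilde{\bw} - \bw}^2 - \tfrac{L}{4}\norm{\tilde{\bw} - \hat{\bw}}^2,
\]
where the negative $\norm{\tilde{\bw}-\hat{\bw}}^2$ term is deliberately retained so as to absorb the Young residuals generated by $T_2$. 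For $T_2$ I would use Cauchy--Schwarz followed by Young's inequality to peel off $\tfrac{L}{4}\norm{\bw-\tilde{\bw}}^2 + \tfrac{L}{4}\norm{\tilde{\bw}-\hat{\bw}}^2$ plus $\tfrac{4}{L}\norm{\hat\nabla_{\bw}\Phi(\bp,\tilde{\bw}) - \nabla_{\bw}\Phi(\mathbf{1},\tilde{\bw})}^2$, and then bound the gradient-mismatch norm exactly as in the computation behind Lemma~\ref{lem:random lipschitz}, namely by $\tfrac{2G^2 + 2W^2L^2}{N}\sum_{i=1}^N\norm{\mask_i - \mathbf{1}}^2$, using bounded gradients ($G$), smoothness ($L$) and the bounded domain ($W$).

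Adding $T_1 + T_2 \ge 0$, the $\pm\tfrac{L}{4}\norm{\tilde{\bw}-\hat{\bw}}^2$ contributions cancel, and setting $\bw = \bw^*(\mathbf{1})$ yields a deterministic bound on $\Phi(\mathbf{1},\hat{\bw}) - \Phi(\mathbf{1},\bw^*(\mathbf{1}))$ in terms of $\tfrac{5L}{4}\norm{\tilde{\bw} - \bw^*(\mathbf{1})}^2$ and the masking residual. Taking expectation over the masks and using $\E\norm{\mask_i - \mathbf{1}}^2 = d(1-p_i)$ (each coordinate vanishes independently with probability $1-p_i$) turns the residual into $\sum_i d(1-p_i)$. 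The final step is to replace $\E\norm{\tilde{\bw} - \bw^*(\mathbf{1})}^2$ by $\E\norm{\tilde{\bw} - \bw^*(\bp)}^2$: I would split $\norm{\tilde{\bw}-\bw^*(\mathbf{1})}^2 \le 2\norm{\tilde{\bw}-\bw^*(\bp)}^2 + 2\norm{\bw^*(\bp)-\bw^*(\mathbf{1})}^2$ and control the last term through the Lipschitz stability of the minimizer map in Lemma~\ref{lem:random lipschitz}, which contributes the $\tfrac{5L}{2\mu_{\bp}}\cdot\tfrac{2G^2+2W^2L^2}{N}\sum_i d(1-p_i)$ part of the residual error. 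The main obstacle is conceptual rather than computational: the iterate descends along the \emph{masked} gradient $\hat\nabla_{\bw}\Phi(\bp,\cdot)$ while the guarantee is stated for the \emph{unmasked} $F$, so the entire argument hinges on cleanly isolating and bounding the bias $\hat\nabla_{\bw}\Phi(\bp,\tilde{\bw})-\nabla_{\bw}\Phi(\mathbf{1},\tilde{\bw})$ and on keeping the spare $-\tfrac{L}{4}\norm{\tilde{\bw}-\hat{\bw}}^2$ from $T_1$ available to soak up the Young's-inequality terms produced in bounding $T_2$.
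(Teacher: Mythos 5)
Your proposal is correct and follows essentially the same route as the paper's own proof: the same projection inequality as the starting point, the same $T_1$/$T_2$ split via adding and subtracting $\nabla_{\bw}\Phi(\mathbf{1},\tilde{\bw})$, the same Young's-inequality bookkeeping that cancels the $\tfrac{L}{4}\norm{\tilde{\bw}-\hat{\bw}}^2$ terms, the same bias bound $\tfrac{2G^2+2W^2L^2}{N}\sum_{i=1}^N\norm{\mask_i-\mathbf{1}}^2$ with expectation $\sum_{i=1}^N d(1-p_i)$, and the same final transfer from $\bw^*(\mathbf{1})$ to $\bw^*(\bp)$ via the minimizer-Lipschitz property of Lemma~\ref{lem:random lipschitz}. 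No gaps.
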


Next we are going to present technical lemmas for proving convergence of Algorithm~\ref{algorithm: Masked FedAvg} to $\bw^*(\bp)$.
For notational convenience, we drop the subscript and use $\bw^*$ to denote $\bw^*(\bp)$, and  we define $\tilde f_i(\bw) :=\E_{\mask_i \sim Ber(p_i)}[f_i(\mask_i\odot \bw)]$.
We define virtual local iterates $\tilde\bw_{r,k}^i$ be such that,for $j \in \mathrm{supp}(\mask_i^r) $, $\tilde\bw_{r,k}^i [j] = \bw_{r,k}^i[j] $;  for $j \notin \mathrm{supp}(\mask_i^r) $, we set $\tilde\bw_{r,k}^i [j] = \bw_{r}[j] $. An important property is that, $ \mask_i^r \odot \nabla \tilde f_i(\mask_i^r \odot \tilde\bw_{r,k}^i;\xi) =  \mask_i^r \odot \nabla \tilde f_i( \mask_i^r \odot \bw_{r,k}^i;\xi) $. Hence, the local updates can be equivalently viewed as conduct on the gradients queried on $\tilde\bw_{r,k}^i$, i.e.,
\begin{align*}
    \bw_{r+1} = \cP_{\cW}\pare{ \bw_r - \eta \sum_{k=0}^{K-1} \frac{1}{N}\sum_{i=1}^N  \tilde \bg_{r,k}^i}
\end{align*}
where $\tilde \bg_{r,k}^i =\mask_i\odot \nabla f_i(\mask_i\odot \tilde \bw_{r,k}^i;\xi_{r,k}^i)$.

\begin{lemma}\label{lem:one iteration}
For Algorithm~\ref{algorithm: Masked FedAvg}, under the condition of Theorem~\ref{thm:randomly masked fedavg}, the following statement holds true:
 \begin{align*}
       \E\norm{ \bw_{r+1} - \bw^* }^2   
          & \leq  (1-\tilde\mu\eta) \E \norm{\bw_r  - \bw^*}^2 - \frac{1}{2}\eta  K  \frac{1}{N}\sum_{i=1}^N    \pare{   \tilde f_i(  \bw_r ) -   \tilde f_i(  \bw^*)  }   \\
         &\quad +  2\eta \tilde L  \frac{1}{N}\sum_{i=1}^N   \sum_{k=0}^{K-1} \E\norm{ \tilde  \bw_{r,k}^i -  \bw_r }^2 + \frac{K \eta^2 \delta^2 }{N}.
    \end{align*}
    \begin{proof}
        According to updating rule we have:
    \begin{align*}
        \E\norm{ \bw_{r+1} - \bw^* }^2 &= \E\norm{ \bw_r - \eta \sum_{k=0}^{K-1} \frac{1}{N}\sum_{i=1}^N  \tilde \bg_{r,k}^i - \bw^* }^2\\
       & =\E \norm{\bw_t  - \bw^*}^2 - \E\inprod{  \eta\sum_{k=0}^{K-1} \frac{1}{N}\sum_{i=1}^N   \tilde\bg_{r,k}^i}{\bw_t  - \bw^*} + \E\norm{\eta \sum_{k=0}^{K-1} \frac{1}{N}\sum_{i=1}^N   \tilde\bg_{r,k}^i}^2\\
       & =\E \norm{\bw_r  - \bw^*}^2  + \E\norm{\eta \sum_{k=0}^{K-1} \frac{1}{N}\sum_{i=1}^N   \tilde \bg_{r,k}^i}^2\\
       & \quad - \E\inprod{  \eta \sum_{k=0}^{K-1} \frac{1}{N}\sum_{i=1}^N \mask^r_i\odot \nabla   f_i(\mask^r_i\odot \tilde\bw_t^i)}{ \bw_r  - \bw^*}\\
       & = \E \norm{\bw_r  - \bw^*}^2  + \E\norm{\eta \sum_{k=0}^{K-1} \frac{1}{N}\sum_{i=1}^N   \tilde \bg_{r,k}^i}^2\\
       & \quad - \E\inprod{  \eta \sum_{k=0}^{K-1} \frac{1}{N}\sum_{i=1}^N  \mask^r_i\odot \nabla f_i(\mask^r_i\odot \tilde \bw_{r,k}^i)}{\bw_r  - \bw^*}\\
         & =\E \norm{\bw_r  - \bw^*}^2 -   \inprod{  \eta \sum_{k=0}^{K-1} \frac{1}{N}\sum_{i=1}^N   \nabla  \tilde f_i(  \tilde \bw_{r,k}^i)}{ \bw_r  - \tilde\bw_{r,k}^i }   \\
         & \quad  -    \inprod{ \eta \sum_{k=0}^{K-1} \frac{1}{N}\sum_{i=1}^N   \nabla \tilde f_i(  \tilde\bw_{r,k}^i)}{ \tilde \bw_{r,t}^i  - \bw^* )} + \E\norm{\eta \sum_{k=0}^{K-1} \frac{1}{N}\sum_{i=1}^N   \tilde \bg_{r,k}^i}^2.
    \end{align*}
    where at last step we use the fact $ \E_{\mask^r_i} [\mask^r_i\odot \nabla f_i(\mask^r_i\odot \tilde \bw_{r,k}^i)] =   \nabla \tilde f_i( \tilde \bw_{r,k}^i )$. 
    Since $\tilde f_i$ is $ L_i=p_i L$ smooth and $ \mu_{i}$ strongly convex, and by definition $\tilde L = \max_{i\in[N]} L_i$, $\tilde \mu = \min_{i\in[N]} \mu_i$,  we have
     \begin{align*}
        \E\norm{ \bw_{r+1} - \bw^* }^2  
         & \leq  (1-\tilde\mu\eta K) \E \norm{\bw_r  - \bw^*}^2 -  \eta K \frac{1}{N}\sum_{i=1}^N    \pare{   \tilde f_i(  \bw_r ) -   \tilde f_i(  \bw^*)  } + \frac{K \eta^2 \delta^2 }{N}  \\
         &\quad + \eta \tilde L \frac{1}{N}\sum_{i=1}^N   \sum_{k=0}^{K-1} \E\norm{   \tilde \bw_{r,k}^i -  \bw_r }^2 + \eta^2\E\norm{  \sum_{k=0}^{K-1}  \frac{1}{N}\sum_{i=1}^N  \mask_i^r\odot  \nabla   f_i( \mask_i^r\odot\tilde \bw_{r,k}^i )  }^2 \\
         & \leq  (1-\tilde\mu\eta K) \E \norm{\bw_r  - \bw^*}^2 -  \eta K \frac{1}{N}\sum_{i=1}^N    \pare{   \tilde f_i(  \bw_r ) -   \tilde f_i(  \bw^*)  }  \\
         &\quad + \eta  \tilde L \frac{1}{N}\sum_{i=1}^N   \sum_{k=0}^{K-1} \E\norm{   \tilde \bw_{r,k}^i -  \bw_r }^2\\
         &\quad +2 \eta^2 K\sum_{k=0}^{K-1}  \frac{1}{N}\sum_{i=1}^N\E\norm{   \mask_i^r\odot  \nabla   f_i(\mask_i^r\odot \tilde \bw_{r,k}^i ) - \mask_i^r\odot\nabla\tilde f_i( \mask_i^r\odot \bw_{r}  )  }^2\\
         &\quad +2 \eta^2 K^2 \E\norm{   \frac{1}{N}\sum_{i=1}^N  \mask_i^r\odot   \nabla  f_i(  \mask_i^r\odot \bw_{r}  )  }^2  + \frac{K \eta^2 \delta^2 }{N}\\
          & \leq  (1-\tilde\mu\eta K)  \E \norm{\bw_r  - \bw^*}^2 -  (\eta  K - 4\eta^2 K^2   L)\frac{1}{N}\sum_{i=1}^N    \pare{   \tilde f_i(  \bw_r ) -   \tilde f_i(  \bw^*)  }   \\
         &\quad + (\eta \tilde L+2 \eta^2 \tilde L^2 K) \frac{1}{N}\sum_{i=1}^N   \sum_{k=0}^{K-1} \E\norm{ \tilde  \bw_{r,k}^i -  \bw_r }^2 + \frac{K \eta^2 \delta^2 }{N},
    \end{align*}

    where the last step is due to~\cite[Lemma 4] {demidovich2023mast} that
    \begin{align*}
      \E\norm{   \frac{1}{N}\sum_{i=1}^N   \mask_i^r\odot  \nabla  f_i( \mask_i^r\odot \bw_{r}  )  }^2
        & \leq 2   L  \frac{1}{N}\sum_{i=1}^N  \E\pare{  \tilde f_i( \bw_r  )  -     \tilde f_i( \bw^*  )  }.
    \end{align*}
    Since $\eta \leq \frac{1}{4L}$, we can conclude the proof.
        \end{proof}

\end{lemma}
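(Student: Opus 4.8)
The plan is to establish this one-round recursion through the standard descent-lemma template for Local SGD, adapted to the masked objective $\tilde f_i$ and the virtual iterates $\tilde\bw_{r,k}^i$. First I would invoke non-expansiveness of the Euclidean projection: since $\bw^* = \bw^*(\bp) \in \cW$, we have $\norm{\bw_{r+1}-\bw^*}^2 \le \norm{\bw_r - \eta\sum_{k=0}^{K-1}\frac{1}{N}\sum_{i=1}^N\tilde\bg_{r,k}^i - \bw^*}^2$, removing $\cP_\cW$. I would then expand this squared displacement into the base term $\norm{\bw_r-\bw^*}^2$, a linear cross term in the aggregated update, and a second-moment term $\eta^2\E\norm{\sum_k\frac{1}{N}\sum_i\tilde\bg_{r,k}^i}^2$. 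Rewriting the update through $\tilde\bg_{r,k}^i$ is legitimate precisely because $\mask_i^r\odot\nabla f_i(\mask_i^r\odot\tilde\bw_{r,k}^i) = \mask_i^r\odot\nabla f_i(\mask_i^r\odot\bw_{r,k}^i)$, the defining property of the virtual iterates.

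For the cross term, I would take the conditional expectation over the sampling noise $\xi_{r,k}^i$ and the masks $\mask_i^r$, using $\E_{\mask_i}[\mask_i\odot\nabla f_i(\mask_i\odot\cdot)] = \nabla\tilde f_i(\cdot)$ to replace $\tilde\bg_{r,k}^i$ by $\nabla\tilde f_i(\tilde\bw_{r,k}^i)$. I would then split $\bw_r - \bw^* = (\bw_r - \tilde\bw_{r,k}^i) + (\tilde\bw_{r,k}^i - \bw^*)$. On the second piece, strong convexity of $\tilde f_i$ (with parameter $\mu_i = p_i\mu$, from Proposition~\ref{prop:property of random masked obj}) yields a functional gap $\tilde f_i(\tilde\bw_{r,k}^i) - \tilde f_i(\bw^*)$ together with the contraction $\frac{\mu_i}{2}\norm{\tilde\bw_{r,k}^i-\bw^*}^2$; after translating back to $\bw_r$ this produces both the $-\frac12\eta K(\tilde f_i(\bw_r)-\tilde f_i(\bw^*))$ descent and the contraction factor $(1-\tilde\mu\eta K)$ (the factor $K$ reflecting that strong convexity is summed across all $K$ local steps). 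On the first piece, $L_i$-smoothness together with Young's inequality converts the inner product into the client-drift term $\norm{\tilde\bw_{r,k}^i - \bw_r}^2$.

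The second-moment term I would handle by $\norm{\sum_k a_k}^2 \le K\sum_k\norm{a_k}^2$, separating the stochastic-gradient variance from the mean gradient. The variance, controlled by Assumption~\ref{assumption: bounded var} and the independence of clients, contributes the $\frac{K\eta^2\delta^2}{N}$ term. For the mean part I would peel off a further drift term via smoothness and reduce to $\norm{\frac{1}{N}\sum_i\mask_i^r\odot\nabla f_i(\mask_i^r\odot\bw_r)}^2$, which Lemma 4 of~\cite{demidovich2023mast} bounds by $2L\cdot\frac{1}{N}\sum_i(\tilde f_i(\bw_r)-\tilde f_i(\bw^*))$. This is precisely the device that lets the $\eta^2K^2$ contribution be absorbed into the negative functional gap, producing a net descent coefficient $\eta K - 4\eta^2K^2L$.

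Assembling the pieces, the descent coefficient $\eta K - 4\eta^2K^2L$ is at least $\frac12\eta K$ under the step-size bound $\eta \le \frac{1}{4L}$ guaranteed by the choice in Theorem~\ref{thm:randomly masked fedavg}; the drift coefficient collapses from $\eta\tilde L + 2\eta^2\tilde L^2 K$ to at most $2\eta\tilde L$; and the variance and contraction terms are as above. I expect the main obstacle to be the mask bookkeeping: the virtual iterate $\tilde\bw_{r,k}^i$ itself depends on $\mask_i^r$, so one must argue carefully that $\E_{\mask_i^r}[\mask_i^r\odot\nabla f_i(\mask_i^r\odot\tilde\bw_{r,k}^i)] = \nabla\tilde f_i(\tilde\bw_{r,k}^i)$ is applied correctly, and to ensure the $\eta^2K^2$ second-moment term is cleanly cancelled by the descent term through Lemma 4 rather than left as a loose positive remainder.
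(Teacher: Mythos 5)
Your proposal follows essentially the same route as the paper's proof: expand the one-round update around $\bw^*$, use the virtual iterates and the unbiasedness $\E_{\mask_i^r}[\mask_i^r\odot\nabla f_i(\mask_i^r\odot\cdot)]=\nabla\tilde f_i(\cdot)$ to handle the cross term via the split $(\bw_r-\tilde\bw_{r,k}^i)+(\tilde\bw_{r,k}^i-\bw^*)$ with strong convexity and smoothness of $\tilde f_i$, separate the variance $\frac{K\eta^2\delta^2}{N}$ from the mean in the second moment, and invoke Lemma 4 of~\cite{demidovich2023mast} to absorb the $\eta^2K^2$ contribution into the negative functional gap, yielding the coefficients $\eta K-4\eta^2K^2L$ and $\eta\tilde L+2\eta^2\tilde L^2K$ under the stated step-size condition. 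This matches the paper's argument step for step, so no further comparison is needed.
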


\begin{lemma}\label{lem:model deviation}
For Algorithm~\ref{algorithm: Masked FedAvg}, under the condition of Theorem~\ref{thm:randomly masked fedavg}, the following statement holds true:
\begin{align*}
   \frac{1}{N}\sum_{i=1}^N  \E\norm{ \tilde \bw_{r,k}^i -  \bw_r  }^2  &  \leq  5K\pare{ 8 \eta^2 K  L\E\pare{      F_{\bp}(  \bw_{r } ) -  F_{\bp}(  \bw^* )  }   + 4\eta^2 K  \sigma_*^2  +  \eta^2 K  {\delta^2} } 
\end{align*}

\begin{proof}
    According to local updating rule we have:
    \begin{align*}
         \E\norm{ \tilde \bw_{r,k}^i -  \bw_r  }^2  
       &=  (1+\frac{1}{K-1})\E\norm{ \tilde \bw_{r,k-1}^i -  \bw_r  }^2 + K \E\norm{  \eta    \tilde \bg_{r,k-1}^i  }^2\\
      &  \leq (1+\frac{1}{K-1})\E\norm{ \tilde \bw_{r,k-1}^i -  \bw_r  }^2 + K \E\norm{  \eta  \mask_i^r\odot  \nabla   f_i(\mask_i^r \odot\tilde \bw_{r,k-1}^i)  }^2  +  \eta^2 K  {\delta^2} \\ 
       &  \leq (1+\frac{1}{K-1})\E\norm{ \tilde \bw_{r,k-1}^i -  \bw_r  }^2 + 2K \E\norm{  \eta \mask_i^r \odot   \nabla   f_i(\mask_i^r \odot\tilde \bw_{r } )  }^2 \\
       &\quad + 2\eta^2  \tilde L^2 K \norm{  \tilde \bw_{r,k-1}^i  -  \tilde \bw_{r} }^2 +  \eta^2 K  {\delta^2} \\ 
       &  \leq (1+\frac{2}{K-1})\E\norm{ \tilde \bw_{r,k-1}^i -  \bw_r  }^2 + 8 \eta^2 K  L \E\pare{      \tilde f_i( \bw_{r } ) -  \tilde f_i(  \bw^* )  } \\
       &\quad + 4\eta^2 K\E\norm{\mask_i^r \odot\nabla   f_i(\mask_i^r \odot\bw^*) }^2 +  \eta^2 K  {\delta^2} \\ 
       &  \leq  \sum_{j=1}^{k} (1+\frac{2}{K-1})^{k-j}\pare{ 8 \eta^2 K  L  \E\pare{      \tilde f_i(  \bw_{r } ) -  \tilde f_i(  \bw^* )  }   + 4\eta^2 K\E\norm{\mask_i^r \odot \nabla   f_i(\mask_i^r \odot\bw^*) }^2  +  \eta^2 K  {\delta^2} }\\ 
       &  \leq  5K\pare{ 8 \eta^2 K  L \E\pare{      \tilde f_i(  \bw_{r } ) -  \tilde f_i(  \bw^* )  }   + 4\eta^2 K\E\norm{\mask_i^r \odot \nabla  f_i(\mask_i^r \odot\bw^*) }^2  +  \eta^2 K  {\delta^2} }\\ 
    \end{align*}
    where the fourth step is due to $2\eta^2 \tilde L^2 K \leq \frac{1}{K-1}$ and~\cite[Lemma 4] {demidovich2023mast} that
    \begin{align*}
      \E\norm{    \mask_i^r\odot  \nabla  f_i( \mask_i^r\odot \bw_{r}  )  }^2 &\leq2\E\norm{    \mask_i^r\odot  \nabla  f_i( \mask_i^r\odot \bw_{r}  )  -  \mask_i^r\odot  \nabla  f_i( \mask_i^r\odot \bw^*  )  }^2 + 2\E\norm{    \mask_i^r\odot  \nabla  f_i( \mask_i^r\odot \bw^*  )  }^2 \\
      &\leq 4   L   \E\pare{  \tilde f_i( \bw_r  )  -     \tilde f_i( \bw^*  )  }+ 2\E\norm{    \mask_i^r\odot  \nabla  f_i( \mask_i^r\odot \bw^*  )  }^2.
    \end{align*}

    Summing $i=1$ to $N$ yields:
    \begin{align*}
         \frac{1}{N}\sum_{i=1}^N  \E\norm{ \tilde \bw_{r,k}^i -  \bw_r  }^2  &  \leq  5K\pare{ 8 \eta^2 K   L \E\pare{      F_{\bp}(  \bw_{r } ) -  F_{\bp}(  \bw^* )  }   + 4\eta^2 K  \sigma_*^2  +  \eta^2 K  {\delta^2} },
    \end{align*}
    which conclude the proof.
    \end{proof}

\end{lemma}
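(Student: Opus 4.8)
The plan is to derive a one-step recursion for the per-client drift $\E\norm{\tilde\bw_{r,k}^i - \bw_r}^2$ and then unroll it across the $K$ local steps, using that each local trajectory is initialized at the common iterate, $\tilde\bw_{r,0}^i = \bw_r$, so that the drift vanishes at $k=0$. Working with the virtual iterates is what makes this clean: their update is exactly $\tilde\bw_{r,k}^i = \tilde\bw_{r,k-1}^i - \eta\tilde\bg_{r,k-1}^i$ with $\tilde\bg_{r,k-1}^i = \mask_i^r\odot\nabla f_i(\mask_i^r\odot\tilde\bw_{r,k-1}^i;\xi_{r,k-1}^i)$, so the deviation from $\bw_r$ accumulates term by term.

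First I would expand $\norm{\tilde\bw_{r,k}^i - \bw_r}^2 = \norm{(\tilde\bw_{r,k-1}^i - \bw_r) - \eta\tilde\bg_{r,k-1}^i}^2$ with the relaxed inequality $\norm{a+b}^2 \le (1+\tfrac{1}{K-1})\norm{a}^2 + K\norm{b}^2$; the mismatch factor $\tfrac{1}{K-1}$ is chosen precisely so that the accumulated multiplicative constant stays $O(1)$ after $K$ unrollings. Taking expectations and peeling off the stochastic noise via the bounded-variance Assumption~\ref{assumption: bounded var} replaces $K\E\norm{\eta\tilde\bg_{r,k-1}^i}^2$ by $K\eta^2\E\norm{\mask_i^r\odot\nabla f_i(\mask_i^r\odot\tilde\bw_{r,k-1}^i)}^2 + K\eta^2\delta^2$. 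Next I would split the deterministic masked gradient around $\bw_r$: $\norm{\mask_i^r\odot\nabla f_i(\mask_i^r\odot\tilde\bw_{r,k-1}^i)}^2 \le 2\norm{\mask_i^r\odot\nabla f_i(\mask_i^r\odot\bw_r)}^2 + 2\tilde L^2\norm{\tilde\bw_{r,k-1}^i - \bw_r}^2$, where the second term uses $L$-smoothness and folds back into the drift, contributing the coefficient $2\eta^2\tilde L^2 K$. The gradient norm at $\bw_r$ is then converted by \cite[Lemma 4]{demidovich2023mast} into $4L(\tilde f_i(\bw_r) - \tilde f_i(\bw^*)) + 2\E\norm{\mask_i^r\odot\nabla f_i(\mask_i^r\odot\bw^*)}^2$, turning the raw gradient into function suboptimality plus the heterogeneity-at-optimum quantity of Definition~\ref{def:drift at optimum}. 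Collecting terms gives the recursion $\E\norm{\tilde\bw_{r,k}^i-\bw_r}^2 \le (1+\tfrac{1}{K-1}+2\eta^2\tilde L^2 K)\E\norm{\tilde\bw_{r,k-1}^i-\bw_r}^2 + 8\eta^2 K L(\tilde f_i(\bw_r)-\tilde f_i(\bw^*)) + 4\eta^2 K\E\norm{\mask_i^r\odot\nabla f_i(\mask_i^r\odot\bw^*)}^2 + \eta^2 K\delta^2$.

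The main obstacle, and the step that consumes the stepsize condition of Theorem~\ref{thm:randomly masked fedavg}, is controlling the growth of this recursion so the unrolled sum scales like $K$ rather than exponentially. I would enforce $2\eta^2\tilde L^2 K \le \tfrac{1}{K-1}$ (valid for the prescribed small $\eta$), so the growth factor is at most $1+\tfrac{2}{K-1}$; unrolling down to the zero initial drift produces the geometric sum $\sum_{j=1}^{k}(1+\tfrac{2}{K-1})^{k-j}$, which for $k \le K-1$ is bounded by $5K$ since $(1+\tfrac{2}{K-1})^{K-1} \le e^2$.

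Finally I would average the resulting per-client bound over $i \in [N]$, using $\tfrac{1}{N}\sum_{i=1}^N(\tilde f_i(\bw_r)-\tilde f_i(\bw^*)) = F_{\bp}(\bw_r)-F_{\bp}(\bw^*)$ together with $\tfrac{1}{N}\sum_{i=1}^N\E\norm{\mask_i^r\odot\nabla f_i(\mask_i^r\odot\bw^*)}^2 \le \sigma_*^2$ from Definition~\ref{def:drift at optimum}. This collapses the source terms into exactly $5K\pare{8\eta^2 K L\,\E\pare{F_{\bp}(\bw_r)-F_{\bp}(\bw^*)} + 4\eta^2 K\sigma_*^2 + \eta^2 K\delta^2}$, as claimed. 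The only genuinely delicate point is the interplay between the Young-split factor and the stepsize bound; the remaining manipulations (variance peeling, smoothness split, application of the Demidovich estimate, and the averaging) are routine.
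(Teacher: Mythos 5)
Your proposal is correct and follows essentially the same route as the paper's own proof: the same $(1+\frac{1}{K-1})$ expansion of the one-step recursion, the same variance peeling via Assumption~\ref{assumption: bounded var}, the same smoothness split around $\bw_r$, the same application of the Demidovich lemma to trade the masked gradient at $\bw_r$ for suboptimality plus the gradient at the optimum, the same stepsize condition $2\eta^2\tilde L^2 K \le \frac{1}{K-1}$, the unrolling from zero initial drift, and the final averaging over clients via Definition~\ref{def:drift at optimum}. One cosmetic caveat: to get the constant $5K$ you should sum the geometric series, $\sum_{m=0}^{k-1}\left(1+\tfrac{2}{K-1}\right)^m \le \tfrac{(e^2-1)(K-1)}{2} < 5K$, rather than bound each of the $k$ terms by $e^2$, which would only yield roughly $7.4K$.
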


\subsubsection{Proof of Theorem~\ref{thm:randomly masked fedavg} } 

\begin{proof}
    Evoking Lemma~\ref{lem:one iteration} yields:
 \begin{align*}
       \E\norm{ \bw_{r+1} - \bw^* }^2   
          & \leq    (1-\tilde\mu\eta)\E \norm{\bw_r  - \bw^*}^2 - \frac{1}{2}\eta  K  \frac{1}{N}\sum_{i=1}^N    \pare{   \tilde f_i(  \bw_r ) -   \tilde f_i(  \bw^*)  }   \\
         &\quad + 2 \eta \tilde L \frac{1}{N}\sum_{i=1}^N   \sum_{k=0}^{K-1} \E\norm{ \tilde  \bw_{r,k}^i -  \bw_r }^2 + \frac{K \eta^2 \delta^2 }{N}.
    \end{align*}
    We plug in Lemma~\ref{lem:model deviation} and get 
        \begin{align*}
       \E\norm{ \bw_{r+1} - \bw^* }^2   
          & \leq  (1-\tilde
          \mu \eta) \E \norm{\bw_r  - \bw^*}^2 - \frac{1}{2}\eta  K       \pare{   F(  \bw_r ) -   F(  \bw^*)  }   \\
         &\quad + 2 \eta \tilde L \cdot 5K^2\pare{ 8 \eta^2 K L\E\pare{      F(  \bw_{r } ) -  F(  \bw^* )  }   + 4\eta^2 K  \sigma_*^2  +  \eta^2 K  {\delta^2} } + \frac{K \eta^2 \delta^2 }{N}\\
         & =  (1-\tilde
          \mu \eta)\E \norm{\bw_r  - \bw^*}^2 - \pare{\frac{1}{2}\eta  K -(\eta   L+2 \eta^2  L^2 K) 40 \eta^2 K^3    }  \pare{   F_{\bp}(  \bw_r ) -   F_{\bp}(  \bw^*)  }   \\
         &\quad + 2 \eta \tilde L\cdot 5K^2\pare{  4\eta^2 K  \sigma_*^2  +  \eta^2 K  {\delta^2} } + \frac{K \eta^2 \delta^2 }{N}.
    \end{align*} 
    Since we choose $\eta \leq \frac{1}{16  L K}$, we know $\pare{\frac{1}{2}\eta  K -(\eta   L+2 \eta^2  L^2 K) 40 \eta^2 K^3    } \geq 0$, so we can drop this term and get:
     \begin{align*}
       \E\norm{ \bw_{r+1} - \bw^* }^2   
          & \leq  (1-\tilde
          \mu \eta) \E \norm{\bw_r  - \bw^*}^2     + 2\eta \tilde L \cdot 5K^2\pare{  4\eta^2 K  \sigma_*^2  +  \eta^2 K  {\delta^2} } + \frac{K \eta^2 \delta^2 }{N}.
    \end{align*} 
    Unrolling the recursion yields:
    \begin{align}
       \E\norm{ \bw_{r+1} - \bw^* }^2   
          & \leq  (1-\tilde  \mu \eta K)^r \E \norm{\bw_0  - \bw^*}^2     + 2 \tilde \kappa \cdot 5K\pare{  4\eta^2 K  \sigma_*^2  +  \eta^2 K  {\delta^2} } + \frac{ \eta \delta^2 }{\tilde \mu N}. \label{eq:point convergence}
    \end{align} 
    Plugging in $\eta = \frac{\log (KR)^2}{\tilde \mu KR}$ will conclude the proof:
  \begin{align*}
       \E\norm{ \bw_{R} - \bw^* }^2   
          & \leq  O\pare{ \frac{\E \norm{\bw_0  - \bw^*}^2 }{K^2 R^2} }  +      \tilde O\pare{    \frac{ \tilde \kappa \frac{1}{N}\sum_{i=1}^N\norm{\nabla \tilde f_i(\bw^*) }^2+    \tilde \kappa  \delta^2 }{\tilde \mu^2   R^2} }   + \tilde O \pare{\frac{  \delta^2 }{\tilde \mu^2 NKR}}.
    \end{align*} 
 
\end{proof}

\subsection{Proof of Nonconvex Setting} \label{app:supp:proof:thm2}
In this subsection, we are going to prove Theorem~\ref{thm:random mask nonconvex}. Since constrained non-convex stochastic optimization suffers from residual noise error unless a large mini-batch is used~\citep{ghadimi2016mini}, here we assume an unconstrained setting, i.e., $\cW = \R^d$. We present the following technical lemma first.
\begin{lemma}\label{lem:model deviation random noncvx}
For Algorithm~\ref{algorithm: Masked FedAvg}, under the condition of Theorem~\ref{thm:randomly masked fedavg}, the following statement holds true:
\begin{align*}
   \frac{1}{N}\sum_{i=1}^N  \E\norm{ \tilde \bw_{r,k}^i -  \bw_r  }^2  &  \leq  5K\pare{ 4\eta^2 K \E\norm{     \nabla    F_{\bp}( \bw_{r } )  }^2 + 4\eta^2 K \zeta_{\bp}^2  +  \eta^2 K  {\delta^2} }.
\end{align*}
\end{lemma}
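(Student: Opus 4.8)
The plan is to follow the same one-step-expansion-and-unroll strategy used in the proof of the convex counterpart, Lemma~\ref{lem:model deviation}, replacing every appeal to convexity (which there converted the masked gradient magnitude at $\bw_r$ into a function-value gap via \cite[Lemma 4]{demidovich2023mast}) by a decomposition that instead exposes the global gradient norm $\norm{\nabla F_{\bp}(\bw_r)}^2$ together with the masked gradient dissimilarity $\zeta^2_{\bp}$. Concretely, fixing the round $r$ and a client $i$ (and conditioning on the mask realization $\mask_i^r$), I would start from the local update $\tilde\bw_{r,k}^i = \tilde\bw_{r,k-1}^i - \eta\,\tilde\bg_{r,k-1}^i$ and apply the relaxed triangle inequality
\begin{align*}
\E\norm{\tilde\bw_{r,k}^i - \bw_r}^2 \leq \pare{1+\tfrac{1}{K-1}}\E\norm{\tilde\bw_{r,k-1}^i - \bw_r}^2 + K\eta^2\,\E\norm{\tilde\bg_{r,k-1}^i}^2 .
\end{align*}

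First I would control the stochastic term. Using the bounded-variance Assumption~\ref{assumption: bounded var} to peel off the noise, then $\norm{a}^2\le 2\norm{a-b}^2+2\norm{b}^2$ together with the $\tilde L$-smoothness of the masked gradient map (Assumption~\ref{assump: smooth}, exactly as in Lemma~\ref{lem:model deviation}) to move the argument from $\tilde\bw_{r,k-1}^i$ back to $\bw_r$, gives
\begin{align*}
\E\norm{\tilde\bg_{r,k-1}^i}^2 \leq 2\tilde L^2\,\E\norm{\tilde\bw_{r,k-1}^i-\bw_r}^2 + 2\,\E\norm{\mask_i^r\odot\nabla f_i(\mask_i^r\odot\bw_r)}^2 + \delta^2 .
\end{align*}
Substituting this back and invoking the stepsize condition $2\eta^2\tilde L^2 K \le \tfrac{1}{K-1}$ (valid for $\eta=\Theta(1/(L\sqrt{KR}))$ once the hidden constant is fixed) lets me absorb the drift term into the leading coefficient, turning $\pare{1+\tfrac{1}{K-1}}$ into $\pare{1+\tfrac{2}{K-1}}$.

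The one genuinely new step is the treatment of the masked gradient at $\bw_r$. Instead of the convexity-based bound of Lemma~\ref{lem:model deviation}, I would average over clients, take the expectation over the masks, and split around $\nabla F_{\bp}(\bw_r)$:
\begin{align*}
\frac{1}{N}\sum_{i=1}^N\E_{\mask_i}\norm{\mask_i\odot\nabla f_i(\mask_i\odot\bw_r)}^2 \leq \frac{2}{N}\sum_{i=1}^N\E_{\mask_i}\norm{\mask_i\odot\nabla f_i(\mask_i\odot\bw_r)-\nabla F_{\bp}(\bw_r)}^2 + 2\norm{\nabla F_{\bp}(\bw_r)}^2,
\end{align*}
whereupon the definition of the masked gradient dissimilarity bounds the first sum by $2\zeta^2_{\bp}$. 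Writing $D_k:=\tfrac1N\sum_i\E\norm{\tilde\bw_{r,k}^i-\bw_r}^2$, this produces the per-step recursion $D_k \le \pare{1+\tfrac{2}{K-1}}D_{k-1} + 4K\eta^2\norm{\nabla F_{\bp}(\bw_r)}^2 + 4K\eta^2\zeta^2_{\bp} + K\eta^2\delta^2$. Unrolling from $D_0=0$ (since $\tilde\bw_{r,0}^i=\bw_r$) and bounding the geometric factor via $\sum_{m=0}^{k-1}\pare{1+\tfrac{2}{K-1}}^m \le \tfrac{K-1}{2}(e^2-1) \le 5K$ for $k\le K$ then yields the claimed inequality.

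The main obstacle is bookkeeping around the two sources of randomness: the per-round mask $\mask_i^r$ must be held fixed while peeling off the SGD noise (so that the virtual-iterate identity $\mask_i^r\odot\nabla f_i(\mask_i^r\odot\tilde\bw_{r,k}^i)=\mask_i^r\odot\nabla f_i(\mask_i^r\odot\bw_{r,k}^i)$ remains applicable), yet the mask expectation must be taken precisely at the step where the dissimilarity definition of $\zeta^2_{\bp}$ is invoked. Getting the order of these expectations right, and verifying that the stepsize condition $2\eta^2\tilde L^2 K\le \tfrac{1}{K-1}$ is compatible with the theorem's choice of $\eta$, are the only points requiring care; the remaining algebra is routine and parallels Lemma~\ref{lem:model deviation} term by term.
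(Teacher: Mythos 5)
Your proposal is correct and follows essentially the same route as the paper's proof: the same relaxed triangle inequality $\pare{1+\tfrac{1}{K-1}}$ expansion, peeling off the SGD noise via Assumption~\ref{assumption: bounded var}, shifting the masked gradient from $\tilde\bw_{r,k-1}^i$ to $\bw_r$ by smoothness and absorbing the drift with the condition $2\eta^2\tilde L^2 K \le \tfrac{1}{K-1}$, splitting around $\nabla F_{\bp}(\bw_r)$ to expose $\zeta_{\bp}^2$, and unrolling with the geometric factor bounded by $5K$. The only cosmetic difference is that you average over clients before unrolling the recursion, whereas the paper unrolls per client and averages at the end; both orderings invoke the dissimilarity definition at the same point and are equivalent.
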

\begin{proof}
    According to local updating rule we have:
    \begin{align*}
        & \E\norm{ \tilde \bw_{r,k}^i -  \bw_r  }^2 \\ 
       &=  (1+\frac{1}{K-1})\E\norm{ \tilde \bw_{r,k-1}^i -  \bw_r  }^2 + K \E\norm{  \eta    \tilde \bg_{r,k-1}^i  }^2\\
      &  \leq (1+\frac{1}{K-1})\E\norm{ \tilde \bw_{r,k-1}^i -  \bw_r  }^2 + K \E\norm{  \eta  \mask_i^r\odot  \nabla   f_i(\mask_i^r \odot\tilde \bw_{r,k-1}^i)  }^2  +  \eta^2 K  {\delta^2} \\ 
       &  \leq (1+\frac{1}{K-1})\E\norm{ \tilde \bw_{r,k-1}^i -  \bw_r  }^2 + 2K \E\norm{  \eta   \mask_i^r \odot \nabla   f_i(\mask_i^r \odot  \bw_{r } )  }^2 + 2\eta^2  \tilde L^2 K \norm{  \tilde \bw_{r,k-1}^i  -   \bw_{r} }^2 +  \eta^2 K  {\delta^2} \\ 
       &  \leq (1+\frac{2}{K-1})\E\norm{ \tilde \bw_{r,k-1}^i -  \bw_r  }^2 +   4\eta^2 K \E\norm{     \nabla  F_{\bp}( \bw_{r } )  }^2\\
       &\quad + 4\eta^2 K\E\norm{ \mask_i^r \odot \nabla   f_i(\mask_i^r \odot  \bw_{r } ) -    \nabla  F_{\bp}(   \bw_{r } )  }^2 +  \eta^2 K  {\delta^2} \\ 
       &  \leq  \sum_{j=1}^{k} (1+\frac{2}{K-1})^{k-j}\pare{4\eta^2 K \E\norm{     \nabla  F_{\bp}(   \bw_{r } )  }^2 + 4\eta^2 K\E\norm{ \mask_i^r \odot \nabla   f_i(\mask_i^r \odot \bw_{r } ) -    \nabla  F_{\bp}(   \bw_{r } )  }^2 +  \eta^2 K  {\delta^2} }\\ 
       &  \leq  5K\pare{ 4\eta^2 K \E\norm{     \nabla  F_{\bp}(   \bw_{r } )  }^2 + 4\eta^2 K\E\norm{ \mask_i^r \odot \nabla   f_i(\mask_i^r \odot  \bw_{r } ) -    \nabla  F_{\bp}(   \bw_{r } )  }^2 +  \eta^2 K  {\delta^2} }, 
    \end{align*}
    where the fourth step is due to $2\eta^2 \tilde L^2 K \leq \frac{1}{K-1}$.

    Summing for $i=1$ to $N$ yields:
    \begin{align*}
         \frac{1}{N}\sum_{i=1}^N  \E\norm{ \tilde \bw_{r,k}^i -  \bw_r  }^2  &  \leq  5K\pare{ 4\eta^2 K \E\norm{     \nabla  F_{\bp}(  \bw_{r } )  }^2 + 4\eta^2 K \zeta_{\bp}^2  +  \eta^2 K  {\delta^2} },
    \end{align*}
    which concludes the proof.
 
    \end{proof}
\subsubsection{Proof of Theorem~\ref{thm:random mask nonconvex}}
\begin{proof}
    
From $L_\bp$-smoothness of $F_{\bp}$ (Proposition~\ref{prop:property of random masked obj}), we have
\begin{align*}
\E [F_{\bp}(\bw_{r+1})] &\leq \E [F_{\bp}(\bw_r) ]+ \E\inprod{\nabla F_{\bp}(\bw_r)}{ \bw_{r+1} - \bw_r } + \frac{L_{\bp}}{2}\E\norm{\bw_{r+1} - \bw_r}^2\\
  &  \leq \E [F_{\bp}(\bw_r) ] - \E\inprod{ \nabla F_{\bp}(\bw_r)}{  \eta  \sum_{k=0}^{K-1} \frac{1}{N}\sum_{i=1}^N \mask^r_i\odot \nabla   f_i(\mask^r_i\odot \tilde\bw_t^i) }\\
  &\quad + \frac{L}{2}\E\norm{\eta \sum_{k=0}^{K-1} \frac{1}{N}\sum_{i=1}^N \mask^r_i\odot \nabla   f_i(\mask^r_i\odot \tilde\bw_t^i) }^2 + \frac{\eta^2 L_{\bp} K\delta^2}{2N}\\
    &= \E [F_{\bp}(\bw_r) ] - \E \eta K \inprod{ \nabla F_{\bp}(\bw_r)}{  \frac{1}{K} \sum_{k=0}^{K-1} \frac{1}{N}\sum_{i=1}^N  \nabla  \tilde f_i(  \tilde\bw_t^i) }\\
  &\quad + \frac{L}{2}\E\norm{\eta \sum_{k=0}^{K-1} \frac{1}{N}\sum_{i=1}^N \mask^r_i\odot \nabla   f_i(\mask^r_i\odot \tilde\bw_t^i) }^2 + \frac{\eta^2 L_{\bp} K\delta^2}{2N}\\
\end{align*}

Applying the identity $\inprod{\ba}{\bb} = \frac{1}{2}\norm{\ba}^2 + \frac{1}{2}\norm{\bb}^2 - \frac{1}{2}\norm{\ba-\bb}^2$ yields:
\begin{align*}
\E [F_{\bp}(\bw_{r+1})] 
  &  \leq \E [F_{\bp}(\bw_r) ] -  \frac{1}{2}\eta K\E\norm{\nabla F_{\bp}(\bw_r)}^2 - \frac{1}{2}\eta  K \E\norm{\frac{1}{K}  \sum_{k=0}^{K-1} \frac{1}{N}\sum_{i=1}^N    \nabla  \tilde f_i(  \tilde\bw_t^i) }^2\\
  &\quad + \frac{1}{2}\eta K \norm{\nabla F_{\bp} (\bw_r) -    \frac{1}{K}\sum_{k=0}^{K-1} \frac{1}{N}\sum_{i=1}^N   \nabla \tilde  f_i( \tilde\bw_t^i)  }^2\\
  &\quad + \frac{L_{\bp}}{2}\eta^2 K^2 \E\norm{\frac{1}{K} \sum_{k=0}^{K-1} \frac{1}{N}\sum_{i=1}^N \mask^r_i\odot \nabla   f_i(\mask^r_i\odot \tilde\bw_t^i) }^2 + \frac{\eta^2 L_{\bp} K\delta^2}{2N}\\
  &  = \E [F_{\bp} (\bw_r) ] -  \frac{1}{2}\eta K\E\norm{\nabla F_{\bp}(\bw_r)}^2 \\
  & \quad - \pare{\frac{1}{2}\eta  K -  \frac{L_{\bp}}{2}\eta^2 K^2}\E\norm{\frac{1}{K}  \sum_{k=0}^{K-1} \frac{1}{N}\sum_{i=1}^N \mask^r_i\odot \nabla   f_i(\mask^r_i\odot \tilde\bw_t^i) }^2\\
  &\quad + \frac{1}{2}\eta K \norm{\nabla F_{\bp}(\bw_r) -    \frac{1}{K}\sum_{k=0}^{K-1} \frac{1}{N}\sum_{i=1}^N  \nabla \tilde  f_i(  \tilde\bw_t^i)  }^2+  \frac{\eta^2 L_{\bp} K V^2}{2N} +  \frac{\eta^2 L_{\bp} K\delta^2}{2N}\\
\end{align*}
where $V := \sup_{i\in[N]}\E_\mask\norm{ \nabla\tilde f_i(\bw) - \mask \odot\nabla f_i(\mask\odot\bw)}^2$. 
Since we choose $\eta \leq \frac{1}{KL}$, we know $\frac{1}{2}\eta  K -  \frac{L_{\bp}}{2}\eta^2 K^2 \leq 0$.
\begin{align*}
\E [F_{\bp}(\bw_{r+1})] 
  &  \leq   \E [F_{\bp}(\bw_r) ] -  \frac{1}{2}\eta K\E\norm{\nabla F_{\bp}(\bw_r)}^2  + \frac{1}{2}\eta K \norm{\nabla F_{\bp}(\bw_r) -    \frac{1}{K}\sum_{k=0}^{K-1} \frac{1}{N}\sum_{i=1}^N   \nabla  \tilde f_i(  \tilde\bw_t^i)  }^2\\
  &\quad +  \frac{\eta^2 L_{\bp} K(\delta^2+V^2)}{2N}\\
  &  \leq   \E [F_{\bp}(\bw_r) ] -  \frac{1}{2}\eta K\E\norm{\nabla F_{\bp}(\bw_r)}^2  + \frac{1}{2}\eta K \frac{1}{K}\sum_{k=0}^{K-1} \frac{1}{N}\sum_{i=1}^N \tilde L^2 \norm{  \bw_r -  \tilde\bw_t^i  }^2 \\
  &\quad +  \frac{\eta^2 L_{\bp} K(\delta^2+V^2)}{2N}.
\end{align*}
Plugging in Lemma~\ref{lem:model deviation random noncvx} above yields:
\begin{align*}
 \E [F_{\bp}(\bw_{r+1})]
  &  \leq   \E [F_{\bp}(\bw_r) ] -  \frac{1}{2}\eta K\E\norm{\nabla F_{\bp}(\bw_r)}^2  \\
  &\quad + \frac{1}{2}\eta K \tilde L^2       5K\pare{ 4\eta^2 K \E\norm{     \nabla  F_{\bp}( \tilde \bw_{r } )  }^2 + 4\eta^2 K \zeta_{\bp}^2  +  \eta^2 K  {\delta^2} } +  \frac{\eta^2 L_{\bp} K(V^2+\delta^2)}{2N}\\
    & =   \E [F_{\bp}(\bw_r) ] - \pare{ \frac{1}{2}\eta K - 10\eta^3 K^3 L^2       
 }\E\norm{\nabla F_{\bp}(\bw_r)}^2 \\
 &\quad +     20\eta^3 K^3 L_{\bp}^2 \zeta_{\bp}^2  +  5\eta^3 K^3 L_{\bp}^2 \delta^2   +  \frac{\eta^2 L_{\bp} K(V^2+\delta^2)}{2N}\\
& \leq \E [F_{\bp}(\bw_r) ] -   \frac{1}{4}\eta K \E\norm{\nabla F_{\bp}(\bw_r)}^2  +     20\eta^3 K^3  L_{\bp}^2 \zeta_{\bp}^2  +  5\eta^3 K^3 L_{\bp}^2 \delta^2   +  \frac{\eta^2 L_{\bp} K(V^2+\delta^2) }{2N}\\
\end{align*}

Re-arranging terms yields:
\begin{align*}
 \E\norm{\nabla F_{\bp}(\bw_r)}^2 
& \leq 4\frac{\E [F_{\bp}(\bw_r) ] -  \E [F_{\bp}(\bw_{r+1})]   }{\eta K }  +     80 \eta^2 K^2  L_{\bp}^2 \zeta_{\bp}^2  +  20\eta^2 K^2 L_{\bp}^2 \delta^2   +  \frac{4\eta  L_{\bp}  (V^2+\delta^2)}{2N}\\
\end{align*}
Summing over $r=1$ to $R$ yields:
\begin{align*}
 \frac{1}{R}\sum_{r=1}^R\E\norm{\nabla F_{\bp}(\bw_r)}^2 
& \leq 4\frac{\E [F_{\bp}(\bw_0) ]   }{\eta R K }  +     80 \eta^2 K^2  L_{\bp}^2 \zeta_{\bp}  +  20\eta^2 K^2 L_{\bp}^2 \delta^2   +  \frac{4\eta  L_{\bp} (V^2+ \delta^2)}{2N}.
\end{align*}
Finally plugging in $\eta = \frac{1}{L\sqrt{RK}}$ will give the desired rate:
\begin{align*}
 \frac{1}{R}\sum_{r=1}^R\E\norm{\nabla F_{\bp}(\bw_r)}^2 
& \leq O\pare{\frac{L\E [F_{\bp}(\bw_0) ]   }{\sqrt{R K} }  +     \frac{K\zeta_{\bp}^2}{R}    +  \frac{K\delta^2}{R}          +  \frac{ \delta^2}{ N\sqrt{RK}}} .
\end{align*}

\end{proof}

\section{Proof of Convergence of Rolling}\label{app:proof_fedrolex}
In this section, we are going to present convergence proof of Algorithm~\ref{algorithm: Rolling Masked FedAvg}. 
At the start of each epoch, the server shuffles these sub-models and assigns them sequentially to clients. This introduces complexity in analysis due to the interaction between both the model drift caused by partial training on sub-models and the impact of permutation-based assignments on convergence.

\subsection{Proof of Convex Setting} 
In this section, we will present proof of Algorithm~\ref{algorithm: Rolling Masked FedAvg} in convex setting (Theorem~\ref{thm:rolling mask}). We present useful lemmas first.
\begin{proposition}\label{prop:property of masked obj}
Define function $ F_{\mask}(\bw) = \frac{1}{N}\sum_{i=1}^N \frac{1}{d} \sum_{j=1}^d f_i(\mask_i^j\odot \bw) $. If each $f_i$ is $L$ smooth and $\mu$ strongly convex, then $ F_{\mask}$ is also $L$ smooth and $\mu$ strongly convex.
\begin{proof}
     We first examine the smoothness and convexity parameter of $ F_{\mask}$. For smoothness:
   \begin{align*}
     F_{\mask} (\bw+\Delta \bw) &= \frac{1}{N}\sum_{i=1}^N \frac{1}{d} \sum_{j=1}^d f_i(\mask^j_i\odot (\bw+\Delta\bw) )   \\
   & \leq \frac{1}{N}\sum_{i=1}^N \frac{1}{d} \sum_{j=1}^d \pare{ f_i(\mask_i^j\odot \bw  ) + \inprod{\mask^j_i\odot \Delta \bw}{ \nabla f_i(\mask^j_i\odot\bw ) } + \frac{L}{2}\norm{\mask^j_i\odot\Delta\bw}^2 } \\ 
   & =  F_{\mask}(\bw ) + \inprod{ \Delta \bw}{\frac{1}{N}\sum_{i=1}^N \frac{1}{d} \sum_{j=1}^d \mask^j_i\odot\nabla f_i(\mask^j_i\odot\bw ) }    \\
   &\quad + \frac{L}{2} \frac{1}{N}\sum_{i=1}^N \frac{1}{d} \sum_{j=1}^d \inprod{(\mM_i^j)^2 \Delta \bw}{\Delta \bw} \\
   & \leq  F_{\mask}(\bw ) + \inprod{ \Delta \bw}{\nabla  F_{\mask}(\bw ) }    + \frac{L}{2}\pare{   \frac{1}{N}\sum_{i=1}^N \mu_{\max}(\underbrace{\frac{1}{d} \sum_{j=1}^d \mM_i^2)}_{=\mI}   }\norm{\Delta \bw}^2\\
   & =  F_{\mask}(\bw ) + \inprod{ \Delta \bw}{\nabla  F_{\mask}(\bw ) }    + \frac{L}{2} \norm{\Delta \bw}^2.
\end{align*}

For convexity:

 \begin{align*}
     F_{\mask} (\bw+\Delta \bw) &= \frac{1}{N}\sum_{i=1}^N \frac{1}{d} \sum_{j=1}^d f_i(\mask^j_i\odot (\bw+\Delta\bw) )   \\
   & \geq \frac{1}{N}\sum_{i=1}^N \frac{1}{d} \sum_{j=1}^d \pare{ f_i(\mask_i^j\odot \bw  ) + \inprod{\mask^j_i\odot \Delta \bw}{ \nabla f_i(\mask^j_i\odot\bw ) } + \frac{\mu}{2}\norm{\mask^j_i\odot\Delta\bw}^2 } \\ 
   & =  F_{\mask}(\bw ) + \inprod{ \Delta \bw}{\frac{1}{N}\sum_{i=1}^N \frac{1}{d} \sum_{j=1}^d \mask^j_i\odot\nabla f_i(\mask^j_i\odot\bw ) }    \\
   &\quad + \frac{\mu}{2} \frac{1}{N}\sum_{i=1}^N \frac{1}{d} \sum_{j=1}^d \inprod{(\mM_i^j)^2 \Delta \bw}{\Delta \bw} \\
   & \geq  F_{\mask}(\bw ) + \inprod{ \Delta \bw}{\nabla  F_{\mask}(\bw ) }    + \frac{\mu}{2}\pare{   \frac{1}{N}\sum_{i=1}^N \mu_{\min}(\underbrace{\frac{1}{d} \sum_{j=1}^d \mM_i^2)}_{=\mI}   }\norm{\Delta \bw}^2\\
   & =  F_{\mask}(\bw ) + \inprod{ \Delta \bw}{\nabla  F_{\mask}(\bw ) }    + \frac{\mu}{2} \norm{\Delta \bw}^2.
\end{align*}

So $ F_{\mask}$ is also $L$ smooth and $\mu$ strongly convex.
\end{proof}
\end{proposition}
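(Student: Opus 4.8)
The statement is the rolling-mask counterpart of Proposition~\ref{prop:property of random masked obj}, so the plan is to mirror that argument, replacing the expectation over a Bernoulli mask by the uniform average over the $d$ rolling masks $\mask_i^1,\ldots,\mask_i^d$. First I would fix a client $i$ and a shift $j$ and apply the $L$-smoothness of $f_i$ at the masked point $\mask_i^j\odot\bw$, writing $f_i(\mask_i^j\odot(\bw+\Delta\bw)) \le f_i(\mask_i^j\odot\bw) + \inprod{\mask_i^j\odot\Delta\bw}{\nabla f_i(\mask_i^j\odot\bw)} + \frac{L}{2}\norm{\mask_i^j\odot\Delta\bw}^2$, using that $\mask_i^j\odot(\bw+\Delta\bw) - \mask_i^j\odot\bw = \mask_i^j\odot\Delta\bw$. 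Averaging over $i$ and $j$, the constant term reproduces $F_{\mask}(\bw)$, and the linear term collects into $\inprod{\Delta\bw}{\nabla F_{\mask}(\bw)}$ once one observes that, by the chain rule for the diagonal mask, $\nabla F_{\mask}(\bw) = \frac1N\sum_i\frac1d\sum_j \mask_i^j\odot\nabla f_i(\mask_i^j\odot\bw)$.

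The crux is the quadratic term. Writing $\mM_i^j$ for the diagonal matrix carrying $\mask_i^j$ on its diagonal, we have $\norm{\mask_i^j\odot\Delta\bw}^2 = \inprod{(\mM_i^j)^2\Delta\bw}{\Delta\bw}$, so the averaged second-order term equals $\frac{L}{2}\inprod{\pare{\frac1N\sum_i\frac1d\sum_j(\mM_i^j)^2}\Delta\bw}{\Delta\bw}$. The key structural fact I would establish is that the rolling construction covers every coordinate uniformly across the cycle, which forces $\frac1d\sum_{j=1}^d(\mM_i^j)^2 = \mI$ for each $i$; plugging this in collapses the quadratic term to exactly $\frac{L}{2}\norm{\Delta\bw}^2$ and yields $L$-smoothness of $F_{\mask}$. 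Repeating the identical computation with the $\mu$-strong-convexity lower bound $f_i(\by)\ge f_i(\bx)+\inprod{\nabla f_i(\bx)}{\by-\bx}+\frac{\mu}{2}\norm{\by-\bx}^2$ in place of the smoothness upper bound gives $F_{\mask}(\bw+\Delta\bw) \ge F_{\mask}(\bw) + \inprod{\Delta\bw}{\nabla F_{\mask}(\bw)} + \frac{\mu}{2}\norm{\Delta\bw}^2$, i.e.\ $\mu$-strong convexity.

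The main obstacle is isolating and justifying the averaging identity $\frac1d\sum_j(\mM_i^j)^2=\mI$, which is exactly what distinguishes the rolling case from the random case: there the analogous average is $p_i\mI$, so the constants degrade to $p_iL$ and $p_i\mu$ as in Proposition~\ref{prop:property of random masked obj}, whereas here the deterministic rolling schedule is engineered so that the squared masks average to the identity and the original constants $L$ and $\mu$ are preserved verbatim. Since each $(\mM_i^j)^2=\mM_i^j$ is a $0/1$ diagonal, verifying the identity reduces to the combinatorial observation that each coordinate is selected with the same (normalized) total weight as $j$ ranges over all rolling shifts; this is the only place the specific geometry of the rolling masks enters, and everything else is the routine expansion above.
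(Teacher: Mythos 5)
Your proposal is correct and follows essentially the same route as the paper's own proof: expand $F_{\mask}$ via the $L$-smoothness (resp.\ $\mu$-strong-convexity) inequality at each masked point, identify the averaged linear term with $\inprod{\Delta\bw}{\nabla F_{\mask}(\bw)}$, and collapse the quadratic term using the identity $\frac{1}{d}\sum_{j=1}^{d}(\mM_i^j)^2=\mI$, which is precisely the fact the paper invokes (via $\mu_{\max}$ and $\mu_{\min}$ of that average both equalling one). The only difference is presentational: you isolate and name the combinatorial averaging identity as the crux, whereas the paper asserts it inline with an underbrace.
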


\begin{lemma}\label{lem:lipschitz} Given a mask $\mask= \sbr{\sbr{\mask_i^1,\ldots,\mask_i^R}}_{i=1}^N \in \cbr{0,1}^{dNR}$, we define $\bw^*(\bp):= \arg\min_{\bw \in \cW} \cbr{\Phi(\mask,\bw) := \frac{1}{N}\sum_{i=1}^N  \frac{1}{R}\sum_{j=1}^R f_i(\mask_i^j \odot \bw)} $. We further define $\bar\mask = \sbr{\sbr{\mathbf{1},\ldots,\mathbf{1}}}_{i=1}^N $ and $\bw^*(\bar{\mask}) := \arg\min_{\bw} F(\bw)$. If each $f_i$ is $\mu$ strongly convex, and $\sup_{\bw\in\cW}\norm{\nabla f_i(\bw)} \leq G$, 
 then the following statement holds:
    \begin{align*}
     \|\bw^*(\bar{\mask}) - \bw^*(\mask)\|\leq  \sqrt{\frac{2G^2 + 2W^2L^2}{\mu^2 NR}} \| \mask  -  \bar{\mask} \|   . 
    \end{align*}
   \begin{proof}
   We define $\Phi(\mask,\bw):= \frac{1}{N}\frac{1}{R}\sum_{i=1}^N\sum_{j=1}^R f_i(\mask_i^j \odot \bw) $.
         First, according to optimality conditions we have:
        \begin{align*}
            \langle \bw - \bw^*(\mask), \nabla_{2} \Phi(\mask, \bw^*(\mask)) \rangle \geq 0,\\
            \langle \bw - \bw^*(\bar{\mask}), \nabla_{2} \Phi(\bar{\mask}, \bw^*(\bar{\mask})) \rangle \geq 0
        \end{align*}
        Substituting $\bw$ with $\bw^*(\bar{\mask})$ and $\bw^*(\mask)$ in the above first and second inequalities respectively yields:
        \begin{align*}
             \langle\bw^*(\bar{\mask}) - \bw^*(\mask), \nabla_{2} \Phi(\mask, \bw^*(\mask)) \rangle \geq 0,\\
            \langle \bw^*(\mask)- \bw^*(\bar{\mask}), \nabla_{2} \Phi(\bar{\mask}, \bw^*(\bar{\mask})) \rangle \geq 0.
        \end{align*}
        Adding up the above two inequalities yields:
        \begin{align}
            \langle \bw^*(\bar{\mask}) - \bw^*(\mask), \nabla_{2} \Phi(\mask, \bw^*(\mask))-\nabla_{2} \Phi(\bar{\mask}, \bw^*(\bar{\mask}))   \rangle \geq 0. \label{eq: lipschitz 1} 
        \end{align}
        Since $F(\mask,\cdot)$ is $\mu$ strongly convex, as shown in Proposition~\ref{prop:property of masked obj}, we have:
        \begin{align}
            \langle \bw^*(\bar{\mask}) - \bw^*(\mask), \nabla_2 \Phi(\mask, \bw^*(\bar{\mask})) - \nabla_2 \Phi(\mask, \bw^*(\mask)) \geq \mu \|\bw^*(\bar{\mask}) - \bw^*(\mask)\|^2. \label{eq: lipschitz 2} 
        \end{align}
        Adding up (\ref{eq: lipschitz 1}) and (\ref{eq: lipschitz 2}) yields:
        \begin{align*}
            \langle \bw^*(\bar{\mask}) - \bw^*(\mask), \nabla_2 \Phi(\mask, \bw^*(\bar{\mask})) - \nabla_2 \Phi(\bar{\mask}, \bw^*(\bar{\mask})) \geq \mu \|\bw^*(\bar{\mask}) - \bw^*(\mask)\|^2
        \end{align*}
        Now we examine the smoothness of $\nabla_2 \Phi(\mask,\bw)$ in terms of the first variable.

        \begin{align*}
            \norm{ \nabla_2 \Phi(\mask,\bw) - \nabla_2 \Phi(\bar{\mask},\bw)   }^2 &= \norm{ \frac{1}{N}\sum_{i=1}^N \frac{1}{R}\sum_{j=1}^R \mask_i^j \odot \nabla f_i(\mask_i^j\odot \bw) -  \frac{1}{N}\sum_{i=1}^N \frac{1}{R}\sum_{j=1}^R \mathbf{1} \odot \nabla f_i(\bar{\mask}_i^j\odot \bw)  }^2\\
           & \leq \frac{1}{N}\sum_{i=1}^N \frac{1}{R}\sum_{j=1}^R \norm{  \mask_i^j \odot \nabla f_i(\mask_i^j\odot \bw) -    \bar{\mask}_i^j \odot \nabla f_i(\mathbf{1}\odot \bw)  }^2\\
           & \leq \frac{1}{N}\sum_{i=1}^N \frac{1}{R}\sum_{j=1}^R \pare{ 2G^2\norm{  \mask_i^j   -  \mathbf{1}   }^2 
 + 2W^2L^2\norm{\mask_i^j   -    \mathbf{1}}^2}\\
 & = \frac{2G^2 + 2W^2L^2}{NR}\norm{ \mask  -    \bar{\mask} }^2.
        \end{align*}
        Finally, using $ \sqrt{\frac{2G^2 + 2W^2L^2}{NR}}$ smoothness of $\nabla_2 \Phi(\cdot,\bw)$ will conclude the proof:
        \begin{align*}
           \sqrt{\frac{2G^2 + 2W^2L^2}{NR}}\| \bw^*(\bar{\mask}) - \bw^*(\mask)\| \|\mask   -  \bar{\mask} \| &\geq \mu \|\bw^*(\bar{\mask}) - \bw^*(\mask)\|^2\\
            \Longleftrightarrow   \sqrt{\frac{2G^2 + 2W^2L^2}{\mu^2 NR}} \| \mask  -  \bar{\mask} \| &\geq  \|\bw^*(\bar{\mask}) - \bw^*(\mask)\|. 
        \end{align*}
    \end{proof}
\end{lemma}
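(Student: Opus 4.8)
The plan is to mirror the argument used for the random-masking counterpart in Lemma~\ref{lem:random lipschitz}, treating $\Phi(\cdot,\bw)$ as a function jointly parametrized by the mask and the iterate, and exploiting three ingredients: the first-order optimality of the two constrained minimizers over $\cW$, the $\mu$-strong convexity of $\Phi(\mask,\cdot)$ established in Proposition~\ref{prop:property of masked obj}, and a Lipschitz-in-the-mask bound on the partial gradient $\nabla_2\Phi$. The only structural difference from the random case is that the averaging here is the deterministic $\frac{1}{NR}\sum_{i,j}$ over clients and sub-models rather than a Bernoulli expectation, but the skeleton is identical.

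First I would write the variational inequalities characterizing the two projected minimizers: for every $\bw\in\cW$,
\begin{align*}
\inprod{\bw - \bw^*(\mask)}{\nabla_2\Phi(\mask,\bw^*(\mask))}\geq 0, \qquad \inprod{\bw - \bw^*(\bar\mask)}{\nabla_2\Phi(\bar\mask,\bw^*(\bar\mask))}\geq 0.
\end{align*}
Substituting $\bw=\bw^*(\bar\mask)$ into the first and $\bw=\bw^*(\mask)$ into the second and summing yields
\begin{align*}
\inprod{\bw^*(\bar\mask)-\bw^*(\mask)}{\nabla_2\Phi(\mask,\bw^*(\mask))-\nabla_2\Phi(\bar\mask,\bw^*(\bar\mask))}\geq 0.
\end{align*}
Next I would invoke the strong convexity of $\Phi(\mask,\cdot)$ to get
\begin{align*}
\inprod{\bw^*(\bar\mask)-\bw^*(\mask)}{\nabla_2\Phi(\mask,\bw^*(\bar\mask))-\nabla_2\Phi(\mask,\bw^*(\mask))}\geq \mu\norm{\bw^*(\bar\mask)-\bw^*(\mask)}^2,
\end{align*}
and add the two inequalities so that the gradients evaluated at $\bw^*(\mask)$ cancel, leaving $\mu\norm{\bw^*(\bar\mask)-\bw^*(\mask)}^2$ on the right and the cross-configuration difference $\nabla_2\Phi(\mask,\bw^*(\bar\mask))-\nabla_2\Phi(\bar\mask,\bw^*(\bar\mask))$ on the left.

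The crux of the argument is then to bound that cross-configuration gradient difference, evaluated at the \emph{same} point $\bw^*(\bar\mask)$. Writing $\nabla_2\Phi(\mask,\bw)=\frac{1}{NR}\sum_{i,j}\mask_i^j\odot\nabla f_i(\mask_i^j\odot\bw)$, I would split each summand as $\mask_i^j\odot(\nabla f_i(\mask_i^j\odot\bw)-\nabla f_i(\bw))+(\mask_i^j-\mathbf{1})\odot\nabla f_i(\bw)$, controlling the first piece by $L$-smoothness together with $\norm{(\mask_i^j-\mathbf{1})\odot\bw}^2\leq W^2\norm{\mask_i^j-\mathbf{1}}^2$ (valid since mask entries lie in $\{0,1\}$ and every $\bw\in\cW$ obeys $\norm{\bw}\leq W$) and the second by the gradient bound $G$, exactly as in the smoothness-in-the-mask computation \eqref{eq:random lipschitz 1}. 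This gives $\norm{\nabla_2\Phi(\mask,\bw)-\nabla_2\Phi(\bar\mask,\bw)}^2\leq \frac{2G^2+2W^2L^2}{NR}\norm{\mask-\bar\mask}^2$, and a final Cauchy--Schwarz on the left inner product, followed by cancelling one factor of $\norm{\bw^*(\bar\mask)-\bw^*(\mask)}$, delivers the claim. I expect the main obstacle to be purely bookkeeping: ensuring the strong-convexity modulus is the full $\mu$ here rather than a mask-weighted $\mu_\bp$ as in the random case, which hinges on the rolling partition satisfying $\frac{1}{R}\sum_{j}(\mM_i^j)^2=\mI$ as used in Proposition~\ref{prop:property of masked obj}, and confirming that the $0/1$ structure of the masks lets the $\frac{1}{NR}$ normalization collapse the double sum cleanly into $\norm{\mask-\bar\mask}^2$.
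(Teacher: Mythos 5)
Your proposal is correct and follows essentially the same route as the paper's own proof: the same pair of variational inequalities for the constrained minimizers, the same invocation of $\mu$-strong convexity of $\Phi(\mask,\cdot)$ from Proposition~\ref{prop:property of masked obj} (resting, as you note, on $\frac{1}{R}\sum_{j}(\mM_i^j)^2=\mI$), and the same Lipschitz-in-the-mask bound $\norm{\nabla_2\Phi(\mask,\bw)-\nabla_2\Phi(\bar\mask,\bw)}^2 \leq \frac{2G^2+2W^2L^2}{NR}\norm{\mask-\bar\mask}^2$ finished off by Cauchy--Schwarz and cancellation. Your explicit splitting of each summand into a smoothness piece and a gradient-bound piece is just the decomposition the paper uses implicitly to obtain the identical $2G^2+2W^2L^2$ constant, so there is no substantive difference.
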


\begin{lemma} [Optimality Gap]\label{lem:opt gap}
Let $\Phi(\mask,\bw) := \frac{1}{N}\sum_{i=1}^N \frac{1}{R}\sum_{i=1}^j f_i(\mask_i^j \odot \bw)$ . Let $\hat{\bv} = \cP_{\cW}(\tilde{\bw} - \frac{1}{L} \nabla_{\bw} \Phi(\mask,\tilde{\bw}))$. If we assume each $f_i$ is $L$-smooth, $\mu$-strongly convex and with gradient bounded by $G$, then the following statement holds true:
    \begin{align*}
      F(  \hat{\bw}) -  F(  \bw^* ) \leq   2L\|\tilde{\bw}  - {\bw}^*(\mask)\|^2+ \pare{  \frac{ 2L}{\mu  }    + \frac{4 }{ L}  }{\frac{2G^2 + 2W^2L^2}{NR}} \sum_{j=1}^R\| \mask_i^j  - \mathbf{1} \|^2,
    \end{align*}
    where    ${\bw}^* = \arg\min_{\bw \in \cW} \Phi( \mathbf{1},  {\bw} )$.

    \begin{proof}
    From Lemma~\ref{lem:lipschitz} we know $\nabla_{\bw} \Phi(\cdot,\bw)$ is $ \sqrt{\frac{2G^2 + 2W^2L^2}{NR}} $ Lipschitz and we know $\bw^*(\balpha)$ is $\kappa_{\Phi}:= \frac{\sqrt{N}G}{\mu}$ Lipschitz. According to property of projection, we have:
    \begin{align*}
        0 &\leq \inprod{\bw - \hat{\bw}}{L(\hat{\bw} - \tilde{\bw})+\nabla_{\bw} \Phi(\mask,\tilde{\bw}) }\\
        &= \underbrace{\inprod{\bw - \hat{\bw}}{L(\hat{\bw} - \tilde{\bw})+\nabla_{\bw} \Phi( \mathbf{1},\tilde{\bw}) } }_{T_1}+ \underbrace{\inprod{\bw - \hat{\bw}}{ \nabla_{\bw} \Phi(\mask,\tilde{\bw}) -\nabla_{\bw} \Phi( \mathbf{1},\tilde{\bw})}}_{T_2}.
    \end{align*}

For $T_1$, we notice:
\begin{align*}
   \inprod{\bw - \hat{\bw}}{L(\hat{\bw} - \tilde{\bw})+\nabla_{\bw} \Phi( \mathbf{1},\tilde{\bw}) }  & =L \inprod{\bw - \tilde{\bw}} {\hat{\bw} - \tilde{\bw}} + L \inprod{ \tilde{\bw}-\hat{\bw}}{\hat{\bw} - \tilde{\bw}} +  \inprod{\bw - \hat{\bw}}{ \nabla_{\bw} \Phi( \mathbf{1},\tilde{\bw}) }\\
   & =L \inprod{\bw - \tilde{\bw}} {\hat{\bw} - \tilde{\bw}} - L \norm{ \tilde{\bw}-\hat{\bw}}^2 +   \inprod{\bw - \hat{\bw}_i}{ \nabla_{\bw} \Phi( \mathbf{1},\tilde{\bw}) } \\
   & \leq L ( \norm{\bw - \tilde{\bw}}^2 + \frac{1}{4}\norm{\hat{\bw} - \tilde{\bw}}^2 )- L \norm{ \tilde{\bw}-\hat{\bw}}^2 +  \underbrace{\inprod{\bw - \hat{\bw}}{ \nabla_{\bw} \Phi(\mathbf{1},\tilde{\bw}) }}_{\spadesuit } 
\end{align*}
where at last step we used Young's inequality.
To bound $\spadesuit$,  we apply the $L$ smoothness and $\mu$ strongly convexity of $\Phi(\mathbf{1},\cdot)$:
\begin{align*}
    \inprod{\bw - \hat{\bw} }{ \nabla_{\bw} \Phi(\mathbf{1},\tilde{\bw} ) } &= \inprod{\bw - \tilde{\bw} }{ \nabla_{\bw} \Phi( \mathbf{1},\tilde{\bw}) } + \inprod{\tilde{\bw} - \hat{\bw}}{ \nabla_{\bw} \Phi(\mathbf{1},\tilde{\bw}) }\\
    &\leq \Phi( \mathbf{1}, \bw ) - \Phi( \mathbf{1},\tilde{\bw}) - \frac{\mu}{2}\norm{\tilde{\bw} -  {\bw} }^2+ \Phi( \mathbf{1},\tilde{\bw}) - \Phi( \mathbf{1},\hat{\bw}) + \frac{L}{2}\norm{\tilde{\bw} - \hat{\bw}}^2\\
    &\leq \Phi( \mathbf{1}, \bw ) - \Phi( \mathbf{1},\hat{\bw})   - \frac{\mu}{2}\norm{\tilde{\bw} -  {\bw} }^2+  \frac{L}{2}\norm{\tilde{\bw} - \hat{\bw}}^2
\end{align*}
Putting above bound back yields:
\begin{align*}
     \inprod{\bw - \hat{\bw}}{L(\hat{\bw} - \tilde{\bw})+\nabla_{\bw} \Phi( \mathbf{1},\tilde{\bw}) } \leq \Phi( \mathbf{1}, \bw ) - \Phi( \mathbf{1},\hat{\bw})  + \frac{1}{2\eta}\norm{\tilde{\bw} -  {\bw} }^2 -\pare{\frac{3L}{4}- \frac{L}{2}}  \norm{\tilde{\bw} - \hat{\bw}}^2
\end{align*}
Now we switch to bounding $T_2$. Applying Cauchy-Schwartz yields:
\begin{align*}
    \inprod{\bw - \hat{\bw} }{ \nabla_{\bw} \Phi(\mask,\tilde{\bw} ) -\nabla_{\bw} \Phi( \mathbf{1} ,\tilde{\bw} )} &\leq \frac{L}{4}\norm{\bw - \tilde{\bw} }^2 + \frac{L}{4}\norm{\tilde{\bw} - \hat{\bw} }^2 + \frac{4}{L} \norm{\nabla_{\bw} \Phi(\mask,\tilde{\bw} ) -\nabla_{\bw} \Phi( \mathbf{1} ,\tilde{\bw} )}^2\\
    &\leq \frac{L}{4}\norm{\bw - \tilde{\bw} }^2 + \frac{L}{4}\norm{\tilde{\bw} - \hat{\bw} }^2 + \frac{4 }{L} {\frac{2G^2 + 2W^2L^2}{NR}}  \norm{  \mask  -  \mathbf{1}  }^2
\end{align*}
where at last step we apply $\sqrt{\frac{2G^2 + 2W^2L^2}{NR}}$ smoothness of $\nabla_2 \Phi(\cdot,\bw)$.
Putting pieces together yields:
\begin{align*}
    0 &\leq \Phi( \mathbf{1} , \bw ) - \Phi( \mathbf{1} ,\hat{\bw} )  + \frac{L}{2}\norm{\tilde{\bw}  -  {\bw} }^2    +\frac{L}{2}\norm{\bw - \tilde{\bw} }^2   + \frac{4 }{  L} {\frac{2G^2 + 2W^2L^2}{NR}}\norm{ \mask   -  \mathbf{1}_i }^2
\end{align*}
Re-arranging terms and setting $\bw = \bw^*(\mathbf{1}) = \arg\min_{\bw\in\cW} \Phi(\mathbf{1},\bw)$ yields:
\begin{align*}
     \Phi( \mathbf{1},\hat{\bw}) -  \Phi( \mathbf{1}, \bw^*(\mathbf{1}) ) \leq   L\norm{\tilde{\bw} -  {\bw}^* }^2     + \frac{4 }{ L}{\frac{2G^2 + 2W^2L^2}{NR}} \norm{  \mask  -  \mathbf{1} }^2.
\end{align*}

At last, due to the $\kappa_{\Phi}$-Lipschitzness  property of of $\bw^*(\cdot)$  as shown in Lemma~\ref{lem:lipschitz}, it follows that:
        \begin{align*}
            L\|\tilde{\bw}  - {\bw}^*(\mathbf{1})\|^2 &\leq L\|\tilde{\bw} - {\bw}^*(\mask)\|^2+L\|{\bw}^*(\mask) - {\bw}^*(\mathbf{1})\|^2 \\
            &\leq 2L\|\tilde{\bw}  - {\bw}^*(\mask)\|^2+2{\frac{2G^2 + 2W^2L^2}{\mu NR}} L\| \mask -  \mathbf{1} \|^2,
        \end{align*}
        as desired.

    \end{proof}
    
    \end{lemma}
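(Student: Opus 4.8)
The plan is to bound the optimality gap of the projected gradient step $\hat{\bw} = \cP_{\cW}(\tilde{\bw} - \tfrac{1}{L}\nabla_{\bw}\Phi(\mask,\tilde{\bw}))$ by comparing it against a step taken with the \emph{unmasked} gradient, exploiting that $\Phi(\mathbf{1},\cdot)=F$ by definition, so that $\bw^* = \arg\min_{\bw\in\cW}\Phi(\mathbf{1},\bw)$ is exactly the unmasked optimum. First I would write the first-order (variational) optimality characterization of the projection: for every $\bw\in\cW$,
\[
0 \leq \inprod{\bw - \hat{\bw}}{L(\hat{\bw} - \tilde{\bw}) + \nabla_{\bw}\Phi(\mask,\tilde{\bw})}.
\]
Then I would split the right-hand side by adding and subtracting $\nabla_{\bw}\Phi(\mathbf{1},\tilde{\bw})=\nabla F(\tilde{\bw})$, obtaining $T_1 + T_2$, where $T_1 = \inprod{\bw-\hat{\bw}}{L(\hat{\bw}-\tilde{\bw})+\nabla_{\bw}\Phi(\mathbf{1},\tilde{\bw})}$ carries the unmasked descent behavior and $T_2 = \inprod{\bw-\hat{\bw}}{\nabla_{\bw}\Phi(\mask,\tilde{\bw})-\nabla_{\bw}\Phi(\mathbf{1},\tilde{\bw})}$ isolates the masking perturbation.

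For $T_1$ I would run the standard smooth/strongly-convex descent argument: expand $L\inprod{\bw-\tilde{\bw}}{\hat{\bw}-\tilde{\bw}}$ by Young's inequality, then apply the $L$-smoothness upper bound of $\Phi(\mathbf{1},\cdot)$ at $\hat{\bw}$ together with its $\mu$-strong-convexity lower bound at $\tilde{\bw}$, to reach an estimate of the form $T_1 \leq \Phi(\mathbf{1},\bw) - \Phi(\mathbf{1},\hat{\bw}) + cL\norm{\tilde{\bw}-\bw}^2 - c'L\norm{\tilde{\bw}-\hat{\bw}}^2$. For $T_2$ I would apply Cauchy--Schwarz and Young's inequality to get $T_2 \leq \tfrac{L}{4}\norm{\bw-\tilde{\bw}}^2 + \tfrac{L}{4}\norm{\tilde{\bw}-\hat{\bw}}^2 + \tfrac{4}{L}\norm{\nabla_{\bw}\Phi(\mask,\tilde{\bw})-\nabla_{\bw}\Phi(\mathbf{1},\tilde{\bw})}^2$, and control the final term by the Lipschitzness of the masked gradient in the \emph{mask} argument established inside Lemma~\ref{lem:lipschitz}, namely $\norm{\nabla_2\Phi(\mask,\bw)-\nabla_2\Phi(\mathbf{1},\bw)}^2 \leq \tfrac{2G^2+2W^2L^2}{NR}\norm{\mask-\mathbf{1}}^2$.

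Combining $T_1+T_2\geq 0$, choosing $\bw=\bw^*(\mathbf{1})=\bw^*$, and arranging the coefficients so that the $\norm{\tilde{\bw}-\hat{\bw}}^2$ contributions net out to something nonpositive (hence droppable), I would arrive at
\[
F(\hat{\bw}) - F(\bw^*) \leq L\norm{\tilde{\bw}-\bw^*(\mathbf{1})}^2 + \tfrac{4}{L}\cdot\tfrac{2G^2+2W^2L^2}{NR}\norm{\mask-\mathbf{1}}^2.
\]
The last step converts the distance to the unmasked optimum into a distance to the masked optimum via the triangle inequality $\norm{\tilde{\bw}-\bw^*(\mathbf{1})}^2 \leq 2\norm{\tilde{\bw}-\bw^*(\mask)}^2 + 2\norm{\bw^*(\mask)-\bw^*(\mathbf{1})}^2$, and I would bound the second term by the $\bw^*(\cdot)$-Lipschitz estimate of Lemma~\ref{lem:lipschitz}, which turns $\norm{\bw^*(\mask)-\bw^*(\mathbf{1})}^2$ into a multiple of $\tfrac{2G^2+2W^2L^2}{NR}\norm{\mask-\mathbf{1}}^2$. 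Collecting the two masking contributions—one originating from the optimum-shift step and one from $T_2$—produces the stated residual with prefactor $\tfrac{2L}{\mu}+\tfrac{4}{L}$.

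The main obstacle I anticipate is the bookkeeping in $T_1$: the coefficients of $\norm{\tilde{\bw}-\hat{\bw}}^2$ coming from the smoothness/strong-convexity expansion and from the two separate Young's-inequality applications must be made to line up so that their net sign is nonpositive (so the term can be discarded) while the coefficient of $\norm{\tilde{\bw}-\bw}^2$ stays controlled at $O(L)$. A secondary care point is applying the mask-gradient Lipschitz constant $\sqrt{(2G^2+2W^2L^2)/(NR)}$ consistently in both places it enters—inside $T_2$ and inside the $\bw^*(\cdot)$-Lipschitz step—since it is precisely the two different powers of $\mu$ arising there that split the final constant into its $\tfrac{2L}{\mu}$ and $\tfrac{4}{L}$ pieces.
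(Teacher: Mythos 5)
Your proposal is correct and follows essentially the same route as the paper's own proof: the projection variational inequality, the $T_1+T_2$ split around $\nabla_{\bw}\Phi(\mathbf{1},\tilde{\bw})$, the smoothness/strong-convexity bound on $T_1$ and the Cauchy--Schwarz plus mask-Lipschitz bound on $T_2$, the choice $\bw=\bw^*(\mathbf{1})$, and the final triangle-inequality conversion via the $\bw^*(\cdot)$-Lipschitz estimate of Lemma~\ref{lem:lipschitz}. The only point to note is that this last step actually yields a $1/\mu^2$ factor from Lemma~\ref{lem:lipschitz} (the paper's stated $2L/\mu$ prefactor is itself slightly inconsistent with that lemma), but your decomposition and the origin of the two residual pieces match the paper exactly.
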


\begin{lemma}[Recursion between each round]\label{lm:round recursion}
For Algorithm~\ref{algorithm: Rolling Masked FedAvg}, under the assumptions of Theorem~\ref{thm:rolling mask}, the following statement holds:
    \begin{align*}
          \bw_{e,r+1}  
            = \bw_{e,r} - \eta \frac{1}{N}\sum_{i=1}^N  K \mask_i^{\sigma_e(r)}\odot \nabla f_i(\mask_i^{\sigma_e(r)}\odot \bw_{e,r} )  - \eta\frac{1}{N}\sum_{i=1}^N\sum_{k=0}^{K-1}\br^i_{e,r,k}  - \bxi_{e,r}.
    \end{align*}
     where $\bxi_{e,r} = \eta \frac{1}{N}\sum_{i=1}^N  \sum_{k=0}^{K-1}\pare{\mask_i^{\sigma_e(r)}\odot \nabla f_i(\mask_i^{\sigma_e(r)}\odot \bw_{e,r,k} ) - \mask_i^{\sigma_e(r)}\odot \nabla f_i(\mask_i^{\sigma_e(r)}\odot \bw_{e,r,k};\xi_{e,r,k}^i)}$, and $\br^i_{e,r,k} = \mask_i^{\sigma_e(r)}\odot\mH^i_{e,r,k}\pare{\mask_i^{\sigma_e(r)}\odot \bw^i_{e,r,k}    -   \mask_i^{\sigma_e(r)}\odot \bw_{e,r} } $, $\mH^i_{e,r,k} \in \R^{d\times d}$ are some real matrix such that $\mH^i_{e,r,k} \preceq L \mathbf{I}$.
    \begin{proof}
        According to updating rule we have
        \begin{align*}
            \bw_{e,r+1} = \bw_{e,r} - \eta \frac{1}{N}\sum_{i=1}^N  \sum_{k=0}^{K-1}\mask_i^{\sigma_e(r)}\odot \nabla f_i(\mask_i^{\sigma_e(r)}\odot \bw_{e,r,k};\xi_{e,r,k}^i)\\
            = \bw_{e,r} - \eta \frac{1}{N}\sum_{i=1}^N  \sum_{k=0}^{K-1}\mask_i^{\sigma_e(r)}\odot \nabla f_i(\mask_i^{\sigma_e(r)}\odot \bw_{e,r,k} ) - \bxi_{e,r}\\
        \end{align*}

        Notice the following fact:
        \begin{align*}
         \mask_i^{\sigma_e(r)}\odot  \nabla f_i(\mask_i^{\sigma_e(r)}\odot \bw^i_{e,r,k} ) &= \mask_i^{\sigma_e(r)}\odot\nabla f_i(\mask_i^{\sigma_e(r)}\odot \bw_{e,r} )\\
         &\quad + \mask_i^{\sigma_e(r)}\odot\nabla f_i(\mask_i^{\sigma_e(r)}\odot \bw^i_{e,r,k} ) - \mask_i^{\sigma_e(r)}\odot\nabla f_i(\mask_i^{\sigma_e(r)}\odot \bw_{e,r} )\\
          & =\mask_i^{\sigma_e(r)}\odot\nabla f_i(\mask_i^{\sigma_e(r)}\odot \bw_{e,r} ) \\
          &\quad +  \underbrace{\mask_i^{\sigma_e(r)}\odot\mH^i_{e,r,k}\pare{\mask_i^{\sigma_e(r)}\odot \bw^i_{e,r,k}    -   \mask_i^{\sigma_e(r)}\odot \bw_{e,r} }}_{\br^i_{e,r,k}}.
        \end{align*}
        Hence we can conclude that:
        \begin{align*}
             \bw_{e,r+1}  
            = \bw_{e,r} - \eta \frac{1}{N}\sum_{i=1}^N  K \mask_i^{\sigma_e(r)}\odot \nabla f_i(\mask_i^{\sigma_e(r)}\odot \bw_{e,r} )  - \eta\frac{1}{N}\sum_{i=1}^N\sum_{k=0}^{K-1}\br^i_{e,r,k}  - \bxi_{e,r}.
        \end{align*}
    \end{proof}
\end{lemma}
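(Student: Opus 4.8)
The plan is to start from the server's averaging rule, strip off the projection to expose the additive structure of one communication round, and then successively decompose the aggregated stochastic gradient into three pieces: the anchor gradient evaluated at $\bw_{e,r}$, a client drift term, and the stochastic noise. First I would substitute the local recursion $\bw_{e,r,k+1}^i = \bw_{e,r,k}^i - \eta \mask_i^{\sigma_e(r)}\odot \nabla f_i(\mask_i^{\sigma_e(r)}\odot \bw_{e,r,k}^i;\xi_{e,r,k}^i)$, initialized at $\bw_{e,r,0}^i = \mask_i^{\sigma_e(r)}\odot\bw_{e,r}$, into the definition of $\bw_{e,r,K}^i$, giving
\begin{align*}
\bw_{e,r,K}^i = \mask_i^{\sigma_e(r)}\odot\bw_{e,r} - \eta\sum_{k=0}^{K-1}\mask_i^{\sigma_e(r)}\odot \nabla f_i(\mask_i^{\sigma_e(r)}\odot \bw_{e,r,k}^i;\xi_{e,r,k}^i).
\end{align*}
The server's fill-in term $(\mathbf{1}-\mask_i^{\sigma_e(r)})\odot\bw_{e,r}$ then recombines with the masked coordinates, since the accumulated gradient sum lives only on the support of $\mask_i^{\sigma_e(r)}$ and $\mask_i^{\sigma_e(r)}\odot\bw_{e,r} + (\mathbf{1}-\mask_i^{\sigma_e(r)})\odot\bw_{e,r} = \bw_{e,r}$. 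Averaging over $i$ collapses the per-client contribution into $\bw_{e,r} - \eta\frac{1}{N}\sum_{i}\sum_{k}\mask_i^{\sigma_e(r)}\odot \nabla f_i(\mask_i^{\sigma_e(r)}\odot \bw_{e,r,k}^i;\xi_{e,r,k}^i)$.

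Next I would split each stochastic gradient into its conditional expectation and a zero-mean fluctuation; collecting the fluctuations across $i,k$ and scaling by $\eta/N$ produces exactly $\bxi_{e,r}$, leaving the deterministic aggregate $\bw_{e,r} - \eta\frac{1}{N}\sum_{i}\sum_{k}\mask_i^{\sigma_e(r)}\odot \nabla f_i(\mask_i^{\sigma_e(r)}\odot \bw_{e,r,k}^i) - \bxi_{e,r}$. I would then anchor each deterministic gradient at $\bw_{e,r}$ by writing $\nabla f_i$ at $\mask_i^{\sigma_e(r)}\odot\bw_{e,r,k}^i$ as the anchor gradient at $\mask_i^{\sigma_e(r)}\odot\bw_{e,r}$ plus a difference. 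Summed over $k$, the anchor gradients supply the factor $K$ in the first term, and the differences are precisely the $\br_{e,r,k}^i$.

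The crux is representing each gradient difference as a matrix action with a spectral bound. For this I would invoke the integral form of the mean value theorem, $\nabla f_i(\bu)-\nabla f_i(\bv) = \left(\int_0^1 \nabla^2 f_i(\bv + t(\bu-\bv))\,dt\right)(\bu-\bv)$, with $\bu = \mask_i^{\sigma_e(r)}\odot\bw_{e,r,k}^i$ and $\bv = \mask_i^{\sigma_e(r)}\odot\bw_{e,r}$, and set $\mH_{e,r,k}^i$ equal to that averaged Hessian. The $L$-smoothness of Assumption~\ref{assump: smooth} forces each $\nabla^2 f_i \preceq L\mathbf{I}$, hence $\mH_{e,r,k}^i \preceq L\mathbf{I}$. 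Multiplying through by $\mask_i^{\sigma_e(r)}$ reproduces $\br_{e,r,k}^i$ exactly, and reassembling the anchor, drift, and noise pieces yields the claimed identity.

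The main obstacle I anticipate is bookkeeping rather than hard analysis. I must track the masked (virtual) iterates carefully so that the fill-in step genuinely restores $\bw_{e,r}$ on the complement of the support, and I should either assume $f_i\in C^2$ to justify the integral-Hessian construction or replace it with a mean-value argument valid under mere Lipschitz-gradient smoothness (the upper bound $\mH_{e,r,k}^i\preceq L\mathbf{I}$ needs only $L$-smoothness, not convexity). I would also flag that the recursion is stated for the pre-projection argument of $\cP_{\cW}$, deferring the nonexpansiveness of the projection to the downstream convergence proof that consumes this lemma.
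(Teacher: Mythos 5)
Your proposal is correct and follows essentially the same route as the paper's proof: write out the aggregated update over the $K$ local steps, split the stochastic gradients into their deterministic parts plus the noise term $\bxi_{e,r}$, and anchor each deterministic gradient at $\bw_{e,r}$ via a mean-value representation $\br^i_{e,r,k}$ with $\mH^i_{e,r,k}\preceq L\mathbf{I}$. Your version is in fact slightly more explicit than the paper's --- you spell out how the server's fill-in term $(\mathbf{1}-\mask_i^{\sigma_e(r)})\odot\bw_{e,r}$ restores the unmasked coordinates, construct $\mH^i_{e,r,k}$ concretely as an averaged Hessian (flagging the $C^2$ caveat), and note that the lemma concerns the pre-projection iterate --- but these are refinements of the same argument, not a different one.
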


 \begin{lemma}[Recursion between each epoch]\label{lem:epoch recursion}
 For Algorithm~\ref{algorithm: Rolling Masked FedAvg}, under the assumptions of Theorem~\ref{thm:rolling mask}, the following statement holds:
\begin{align*}
          \bw_{e,R} - \bw_{e}   
           &=-     \eta R K   \nabla  F_{\mask}(  \bw_{e } )  - \sum_{r=0}^{R-1} \mA_r \pare{\eta\frac{1}{N}\sum_{i=1}^N\sum_{k=0}^{K-1}\br^i_{e,r,k}  + \bxi_{e,r}} \\
            &\quad   + \eta^2 K^2 \sum_{r=0}^{R-2} \mA_{r+1}\frac{1}{N}\sum_{i=1}^N   \mM_i^{\sigma_e(r+1)} \mH^i_{e,r+1} \mM_i^{\sigma_e(r+1)}   \sum_{j=0}^r \frac{1}{N}\sum_{i=1}^N     \nabla f_i^{\sigma_e(j)}(  \bw_{e } ),
    \end{align*}
    where  $\mA_r:=  \prod_{r'=R-1}^{r+1} \pare{\mI - \eta K  \frac{1}{N}\sum_{i=1}^N   \mM_i^{\sigma_e(r')} \mH^i_{e,r'} \mM_i^{\sigma_e(r')} }$ .
    \begin{proof}
    Define $f_i^{\sigma_e(r)}(  \bw  ) : =   f_i(\mask_i^{\sigma_e(r)}\odot \bw ) $. According to Lemma~\ref{lm:round recursion} we have
    \begin{align*}
       \bw_{e,r+1} - \bw_{e}  
            &= \bw_{e,r} - \bw_{e}  - \eta \frac{1}{N}\sum_{i=1}^N  K    \nabla f_i^{\sigma_e(r)}(  \bw_{e,r} )  -\eta \frac{1}{N}\sum_{i=1}^N\sum_{k=0}^{K-1}\br^i_{e,r,k}  - \bxi_{e,r}\nonumber\\
            &= \bw_{e,r} - \bw_{e}  - \eta K \frac{1}{N}\sum_{i=1}^N     \nabla f_i^{\sigma_e(r)}(  \bw_{e } ) \nonumber\\
            &\quad -\eta K  \frac{1}{N}\sum_{i=1}^N    \pare{  \nabla f_i^{\sigma_e(r)}(  \bw_{e,r} ) -    \nabla f_i^{\sigma_e(r)}(  \bw_{e } ) }  - \eta \frac{1}{N}\sum_{i=1}^N\sum_{k=0}^{K-1}\br^i_{e,r,k}  - \bxi_{e,r}\nonumber.
    \end{align*}
    Applying mean-value theorem on the $\nabla f_i^{\sigma_e(r)}(\cdot)$ yields:
    \begin{align}
            \bw_{e,r+1} - \bw_{e}        &= \bw_{e,r} - \bw_{e}  - \eta K \frac{1}{N}\sum_{i=1}^N     \nabla f_i^{\sigma_e(r)}(  \bw_{e } ) \nonumber \\
            &\quad -\eta K  \frac{1}{N}\sum_{i=1}^N   \mask_i^{\sigma_e(r)}\odot\mH^i_{e,r} \mask_i^{\sigma_e(r)}\odot \pare{    \bw_{e,r}   -     \bw_{e }   }  - \eta\frac{1}{N}\sum_{i=1}^N\sum_{k=0}^{K-1}\br^i_{e,r,k}  - \bxi_{e,r}\nonumber\\
            &=\pare{\mI - \eta K  \frac{1}{N}\sum_{i=1}^N   \mM_i^{\sigma_e(r)} \mH^i_{e,r} \mM_i^{\sigma_e(r)} } \pare{\bw_{e,r} - \bw_{e}}  - \eta K \frac{1}{N}\sum_{i=1}^N     \nabla f_i^{\sigma_e(r)}(  \bw_{e } ) \nonumber\\
            &\quad -\eta \frac{1}{N}\sum_{i=1}^N\sum_{k=0}^{K-1}\br^i_{e,r,k}  - \bxi_{e,r} .\label{eq:epoch recursion 0}
    \end{align}
   Unrolling the recursion from $r=R-1$ to $0$ yields:
    \begin{align*}
        \bw_{e,R} - \bw_{e}  =\sum_{r=0}^{R-1}  \mA_r  \pare{  - \eta K \frac{1}{N}\sum_{i=1}^N     \nabla f_i^{\sigma_e(r)}(  \bw_{e } )  -\eta \frac{1}{N}\sum_{i=1}^N\sum_{k=0}^{K-1}\br^i_{e,r,k}  - \bxi_{e,r}}.
    \end{align*} 
    According to summation by part$\sum_{r=0}^{R-1} \mA_r \bb_r = \mA_{R-1} \sum_{r=0}^{R-1}\bb_r - \sum_{i=0}^{R-2} ( \mA_{i+1} - \mA_i ) \sum_{j=0}^i \bb_j  $ we have
    \begin{align*}
          \bw_{e,R} - \bw_{e}   
            &=-\sum_{r=0}^{R-1}    \pare{   \eta K \frac{1}{N}\sum_{i=1}^N     \nabla f_i^{\sigma_e(r)}(  \bw_{e } )   } + \eta K \sum_{i=0}^{R-2} ( \mA_{i+1} - \mA_i ) \sum_{j=0}^i \frac{1}{N}\sum_{i=1}^N     \nabla f_i^{\sigma_e(j)}(  \bw_{e } )   \\
            &\quad - \sum_{r=0}^{R-1} \mA_r \pare{\eta\frac{1}{N}\sum_{i=1}^N\sum_{k=0}^{K-1}\br^i_{e,r,k}  + \bxi_{e,r}}\\
             &=-     \eta R K   \nabla  F_{\mask}(  \bw_{e } )  + \eta K \sum_{r=0}^{R-2} ( \mA_{r+1} - \mA_r ) \sum_{j=0}^r \frac{1}{N}\sum_{i=1}^N     \nabla f_i^{\sigma_e(j)}(  \bw_{e } )   \\
            &\quad - \sum_{r=0}^{R-1} \mA_r \pare{\eta\frac{1}{N}\sum_{i=1}^N\sum_{k=0}^{K-1}\br^i_{e,r,k}  + \bxi_{e,r}}\\
            &=-     \eta R K   \nabla  F_{\mask}(  \bw_{e } ) \\
            & \quad  + \eta^2 K^2 \sum_{r=0}^{R-2} \mA_{r+1}  \frac{1}{N}\sum_{i=1}^N   \mM_i^{\sigma_e(r+1)} \mH^i_{e,r+1} \mM_i^{\sigma_e(r+1)}   \sum_{j=0}^r \frac{1}{N}\sum_{i=1}^N     \nabla f_i^{\sigma_e(j)}(  \bw_{e } )   \\
            &\quad - \sum_{r=0}^{R-1} \mA_r \pare{\eta\frac{1}{N}\sum_{i=1}^N\sum_{k=0}^{K-1}\br^i_{e,r,k}  + \bxi_{e,r}}.
    \end{align*}

    \end{proof}
\end{lemma}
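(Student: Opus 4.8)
The plan is to promote the one-round recursion of Lemma~\ref{lm:round recursion} into an exact one-epoch identity by iterating it across the $R$ rounds of epoch $e$ and then reorganizing the accumulated gradient terms via summation by parts. Throughout I write $f_i^{\sigma_e(r)}(\bw) := f_i(\mask_i^{\sigma_e(r)}\odot\bw)$, so that $\nabla f_i^{\sigma_e(r)}(\bw) = \mask_i^{\sigma_e(r)}\odot\nabla f_i(\mask_i^{\sigma_e(r)}\odot\bw)$, and I recall from Definition~\ref{def:rolling mask grad dis} that $\nabla F_{\mask}(\bw_e) = \tfrac{1}{NR}\sum_{i=1}^N\sum_{r=0}^{R-1}\nabla f_i^{\sigma_e(r)}(\bw_e)$, where summing over a permutation of the $R$ sub-models recovers the sum over all of them, since $\sigma_e$ is a bijection.

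First I would subtract $\bw_e$ from both sides of Lemma~\ref{lm:round recursion} and linearize the gradient about $\bw_e$. Writing $\nabla f_i^{\sigma_e(r)}(\bw_{e,r}) = \nabla f_i^{\sigma_e(r)}(\bw_e) + \mask_i^{\sigma_e(r)}\odot\mH^i_{e,r}\mask_i^{\sigma_e(r)}\odot(\bw_{e,r}-\bw_e)$ via the mean-value theorem (with $\mH^i_{e,r}\preceq L\mI$ an integrated Hessian) yields the affine recursion
\begin{align*}
\bw_{e,r+1}-\bw_e &= \pare{\mI - \eta K\frac{1}{N}\sum_{i=1}^N \mM_i^{\sigma_e(r)}\mH^i_{e,r}\mM_i^{\sigma_e(r)}}(\bw_{e,r}-\bw_e) \\
&\quad - \eta K\frac{1}{N}\sum_{i=1}^N\nabla f_i^{\sigma_e(r)}(\bw_e) - \eta\frac{1}{N}\sum_{i=1}^N\sum_{k=0}^{K-1}\br^i_{e,r,k} - \bxi_{e,r}.
\end{align*}
Unrolling from $r=R-1$ down to $r=0$ with $\bw_{e,0}=\bw_e$ produces the matrix products $\mA_r=\prod_{r'=R-1}^{r+1}(\mI-\eta K\tfrac{1}{N}\sum_i\mM_i^{\sigma_e(r')}\mH^i_{e,r'}\mM_i^{\sigma_e(r')})$ premultiplying each round's forcing term, giving $\bw_{e,R}-\bw_e = \sum_{r=0}^{R-1}\mA_r\pare{-\eta K\tfrac{1}{N}\sum_i\nabla f_i^{\sigma_e(r)}(\bw_e) - \eta\tfrac{1}{N}\sum_i\sum_k\br^i_{e,r,k} - \bxi_{e,r}}$.

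The crux is the gradient piece $-\eta K\sum_{r}\mA_r\tfrac{1}{N}\sum_i\nabla f_i^{\sigma_e(r)}(\bw_e)$. I would apply the Abel summation identity $\sum_{r=0}^{R-1}\mA_r\bb_r = \mA_{R-1}\sum_{r=0}^{R-1}\bb_r - \sum_{i=0}^{R-2}(\mA_{i+1}-\mA_i)\sum_{j=0}^i\bb_j$ with $\bb_r=\eta K\tfrac{1}{N}\sum_i\nabla f_i^{\sigma_e(r)}(\bw_e)$. Since $\mA_{R-1}=\mI$ (empty product) and $\sum_{r=0}^{R-1}\bb_r=\eta KR\,\nabla F_{\mask}(\bw_e)$ by the permutation-invariance above, the first piece collapses to the clean leading term $-\eta RK\,\nabla F_{\mask}(\bw_e)$. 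For the second piece the telescoping structure $\mA_r=\mA_{r+1}(\mI-\eta K\tfrac{1}{N}\sum_i\mM_i^{\sigma_e(r+1)}\mH^i_{e,r+1}\mM_i^{\sigma_e(r+1)})$ gives $\mA_{r+1}-\mA_r=\eta K\,\mA_{r+1}\tfrac{1}{N}\sum_i\mM_i^{\sigma_e(r+1)}\mH^i_{e,r+1}\mM_i^{\sigma_e(r+1)}$, and substituting turns $-\sum_i(\mA_{i+1}-\mA_i)\sum_j\bb_j$ into exactly the stated $\eta^2K^2$ correction term. The noise-and-drift terms $-\sum_r\mA_r(\eta\tfrac{1}{N}\sum_i\sum_k\br^i_{e,r,k}+\bxi_{e,r})$ carry through untouched, which assembles the claimed identity.

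I expect the main obstacle to be bookkeeping rather than any deep estimate: keeping the direction and index range of the decreasing product defining $\mA_r$ consistent through the unrolling, verifying $\mA_{R-1}=\mI$ and the factorization $\mA_r=\mA_{r+1}(\cdots)$, and confirming that $\sigma_e$ drops out when the per-round gradients are summed to recover $R\,\nabla F_{\mask}(\bw_e)$. The linearization step also deserves care: because $\nabla f_i^{\sigma_e(r)}$ is vector-valued, $\mH^i_{e,r}$ must be read as $\int_0^1\nabla^2 f_i(\mask_i^{\sigma_e(r)}\odot(\bw_e+t(\bw_{e,r}-\bw_e)))\,dt$, and one must check that the masks project correctly so that only the sandwiched block $\mM_i^{\sigma_e(r)}\mH^i_{e,r}\mM_i^{\sigma_e(r)}$ enters the transition operator.
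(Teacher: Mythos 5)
Your proposal is correct and follows essentially the same route as the paper's own proof: linearize the round recursion of Lemma~\ref{lm:round recursion} about $\bw_e$ via the mean-value theorem, unroll across the $R$ rounds to get $\bw_{e,R}-\bw_e=\sum_{r=0}^{R-1}\mA_r(\cdot)$, then apply Abel summation to the gradient piece using $\mA_{R-1}=\mI$, permutation-invariance of $\sum_r \nabla f_i^{\sigma_e(r)}(\bw_e)$, and the factorization $\mA_{r+1}-\mA_r=\eta K\,\mA_{r+1}\tfrac{1}{N}\sum_i\mM_i^{\sigma_e(r+1)}\mH^i_{e,r+1}\mM_i^{\sigma_e(r+1)}$. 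Your explicit remarks on $\mA_{R-1}=\mI$ and the integrated-Hessian reading of $\mH^i_{e,r}$ are details the paper leaves implicit, but the argument is the same.
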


\begin{lemma}[Local model deviation]\label{lem:local model deviation}
For Algorithm~\ref{algorithm: Rolling Masked FedAvg}, under the assumptions of Theorem~\ref{thm:rolling mask}, the following statement holds with probability at least $1-\nu$:
\begin{align*}
&   \sum_{r=0}^{R-1}\sum_{k=0}^{K-1}  \frac{1}{N}\sum_{i=1}^N \E \norm{ \bw_{e,r} - \bw_{e,r,k+1}^i}^2 \\
           & \leq    48\eta^2 R K^3   \zeta + \pare{ 12 \pare{ 6R^3\eta^4 K^5 L^2+  18R^5  \eta^6 K^7    L^4  } 
 + 48\eta^2 R K^3}\E\norm{\nabla  F_{\mask}(\bw_e)}^2 \\
         & \quad +   12    \pare{48R^2\eta^4 K^5 L^2+432 R^4 \eta^6 K^7        L^4 } G  { \log(2RK/\nu)}  \\ 
            &  \quad +   216     R^3 \eta^4 K^5 L^2 \frac{\delta^2}{N}+ 6\eta^2 R K^3 \delta^2 .
          \end{align*} 

and for any $i \in [N]$ we have
  \begin{align*}
   \sum_{r=0}^{R-1}\sum_{k=0}^{K-1}   \E \norm{ \bw_{e,r} - \bw_{e,r,k+1}^i}^2 
           & \leq    48\eta^2 R K^3   \zeta_i + \pare{ 12 \pare{ 6R^3\eta^4 K^5 L^2+  18R^5  \eta^6 K^7    L^4  } 
 + 48\eta^2 R K^3}\E\norm{\nabla  F_{\mask}(\bw_e)}^2 \\
         &  +   12    \pare{48R^2\eta^4 K^5 L^2+432 R^4 \eta^6 K^7        L^4 } G  { \log(2RK/\nu)}  \\ 
            &  +   216     R^3 \eta^4 K^5 L^2 \frac{\delta^2}{N}+ 6\eta^2 R K^3 \delta^2 .
          \end{align*} 
    \begin{proof}
    According to updating rule we have:
        \begin{align*}
         \E \norm{ \bw_{e,r} - \bw_{e,r,k+1}^i}^2  &= \pare{1+\frac{1}{K-1}}  \E \norm{\bw_{e,r} - \bw_{e,r,k }^i }^2  + \eta^2 K     \E \norm{\mask_i^{\sigma_e(r)} \odot \nabla f_i(\mask_i^{\sigma_e(r)} \odot \bw_{e,r,k}^i;\xi_{e,r,k}^i  )}^2\\
          & = \pare{1+\frac{1}{K-1}}  \E \norm{\bw_{e,r} - \bw_{e,r,k }^i }^2  + \eta^2 K     \E \norm{\mask_i^{\sigma_e(r)} \odot \nabla f_i(\mask_i^{\sigma_e(r)} \odot \bw_{e,r,k}^i;\xi_{e,r,k}^i  )}^2 + \eta^2 K \delta^2\\
         &  \leq \pare{1+\frac{1}{K-1}}  \E \norm{\bw_{e,r} - \bw_{e,r,k }^i }^2  + 2\eta^2 K     \E \norm{\mask_i^{\sigma_e(r)} \odot \nabla f_i(\mask_i^{\sigma_e(r)} \odot \bw_{e,r }    )}^2 + \eta^2 K \delta^2\\
         &\quad + 2\eta^2 K  L^2   \E \norm{  \bw_{e,r,k}^i  -  \bw_{e,r }   }^2\\ 
         &  \leq \pare{1+\frac{1}{K-1}}  \E \norm{\bw_{e,r} - \bw_{e,r,k }^i }^2    + 4\eta^2 K     \E \norm{\mask_i^{\sigma_e(r)} \odot \nabla f_i(\mask_i^{\sigma_e(r)} \odot \bw_{e }    )}^2  \\
         & \quad +4\eta^2 K     \E \norm{\mask_i^{\sigma_e(r)} \odot \nabla f_i(\mask_i^{\sigma_e(r)} \odot \bw_{e,r }    ) - \mask_i^{\sigma_e(r)} \odot \nabla f_i(\mask_i^{\sigma_e(r)} \odot \bw_{e  }    )}^2  + \eta^2 K \delta^2\\
         &\quad + 2\eta^2 K  L^2   \E \norm{  \bw_{e,r,k}^i  -  \bw_{e,r }   }^2\\ 
         &  \leq \pare{1+\frac{1}{K-1}}  \E \norm{\bw_{e,r} - \bw_{e,r,k }^i }^2    + 4\eta^2 K     \E \norm{\mask_i^{\sigma_e(r)} \odot \nabla f_i(\mask_i^{\sigma_e(r)} \odot \bw_{e }    )}^2  \\
         & \quad +4\eta^2 K L^2    \E \norm{  \bw_{e,r }    -   \bw_{e  }   }^2  + \eta^2 K \delta^2 + 2\eta^2 K  L^2   \E \norm{  \bw_{e,r,k}^i  -  \bw_{e,r }   }^2.
          \end{align*}
          Due to $2\eta^2 K L^2 \leq \frac{1}{K-1}$, we have
          \begin{align}
         \E \norm{ \bw_{e,r} - \bw_{e,r,k+1}^i}^2    
         &  \leq \pare{1+\frac{2}{K-1}}  \E \norm{\bw_{e,r} - \bw_{e,r,k }^i }^2    + 4\eta^2 K     \E \norm{  \nabla f_i^{\sigma_e(r)}(  \bw_{e }    )}^2  \\
         & \quad +4\eta^2 K L^2    \E \norm{  \bw_{e,r }    -   \bw_{e  }   }^2  + \eta^2 K \delta^2.  \label{eq:local model deviation 0}
          \end{align}
        Due to (\ref{eq:epoch recursion 0}) we have
         \begin{align*}
       \bw_{e,r } - \bw_{e}  
            &=\pare{\mI - \eta K  \frac{1}{N}\sum_{i=1}^N   \mM_i^{\sigma_e(r-1)} \mH^i_{e,r-1} \mM_i^{\sigma_e(r-1)} } \pare{\bw_{e,r-1} - \bw_{e}}\\
            &\quad - \eta K \frac{1}{N}\sum_{i=1}^N     \nabla f_i^{\sigma_e(r-1)}(  \bw_{e } )  - \eta\frac{1}{N}\sum_{i=1}^N\sum_{k=0}^{K-1}\br^i_{e,r-1,k}  - \bxi_{e,r-1}, r = 1,\ldots, R. 
    \end{align*}
    Unrolling the recursion yields:
         \begin{align*}
        \bw_{e,r} - \bw_{e}  
            &=\sum_{p=0}^{r-1}   \underbrace{\prod_{r'=r-1}^{p+1} \pare{\mI - \eta K  \frac{1}{N}\sum_{i=1}^N   \mM_i^{\sigma_e(r')} \mH^i_{e,r'} \mM_i^{\sigma_e(r')} }}_{\mA_p}   \pare{  - \eta K \frac{1}{N}\sum_{i=1}^N     \nabla f_i^{\sigma_e(p)}(  \bw_{e } )  -\eta \frac{1}{N}\sum_{i=1}^N\sum_{k=0}^{K-1}\br^i_{e,p,k}  - \bxi_{e,p}}.  
    \end{align*}
 According to summation by part$\sum_{p=0}^{r-1} \mA_p \bb_p = \mA_{r-1} \sum_{j=0}^{r-1}\bb_j - \sum_{i=0}^{r-2} ( \mA_{i+1} - \mA_i ) \sum_{j=0}^i \bb_j  $ we have
 \begin{align*}
      &  \bw_{e,r} - \bw_{e} \\ 
            &=\sum_{p=0}^{r-1}   \prod_{r'=r-1}^{p+1} \pare{\mI - \eta K  \frac{1}{N}\sum_{i=1}^N   \mM_i^{\sigma_e(r')} \mH^i_{e,r'} \mM_i^{\sigma_e(r')} }   \pare{  - \eta K \frac{1}{N}\sum_{i=1}^N \nabla f_i^{\sigma_e(p)}(  \bw_{e } )  }  \\
             &\quad +\sum_{p=0}^{r-1}   \prod_{r'=r-1}^{p+1} \pare{\mI - \eta K  \frac{1}{N}\sum_{i=1}^N   \mM_i^{\sigma_e(r')} \mH^i_{e,r'} \mM_i^{\sigma_e(r')} }   \pare{   - \eta\frac{1}{N}\sum_{i=1}^N\sum_{k=0}^{K-1}\br^i_{e,p,k}  - \bxi_{e,p}}\\
             &=\sum_{p=0}^{r-1}     \pare{  - \eta K \frac{1}{N}\sum_{i=1}^N \nabla f_i^{\sigma_e(p)}(  \bw_{e } )  } - \sum_{p=0}^{r-2} \pare{ \mA_{p+1} - \mA_{p} } \sum_{j=0}^p  \pare{  - \eta K \frac{1}{N}\sum_{i=1}^N \nabla f_i^{\sigma_e(j)}(  \bw_{e } )  }  \\
             &\quad +\sum_{p=0}^{r-1}   \prod_{r'=r-1}^{p+1} \pare{\mI - \eta K  \frac{1}{N}\sum_{i=1}^N   \mM_i^{\sigma_e(r')} \mH^i_{e,r'} \mM_i^{\sigma_e(r')} }   \pare{   - \eta\frac{1}{N}\sum_{i=1}^N\sum_{k=0}^{K-1}\br^i_{e,p,k}  - \bxi_{e,p}}\\ 
              &=\sum_{p=0}^{r-1}     \pare{  - \eta K \frac{1}{N}\sum_{i=1}^N \nabla f_i^{\sigma_e(p)}(  \bw_{e } )  }\\
              &  + \sum_{p=0}^{r-2} \prod_{r'=r-1}^{p+2} \pare{\mI - \eta K  \frac{1}{N}\sum_{i=1}^N   \mM_i^{\sigma_e(r')} \mH^i_{e,r'} \mM_i^{\sigma_e(r')} } \eta^2 K^2  \frac{1}{N}\sum_{i=1}^N   \mM_i^{\sigma_e(p+1)} \mH^i_{e,p+1} \mM_i^{\sigma_e(p+1)}\sum_{j=0}^p  \pare{     \frac{1}{N}\sum_{i=1}^N \nabla f_i^{\sigma_e(j)}(  \bw_{e } )  }  \\
             &  +\sum_{p=0}^{r-1}   \prod_{r'=r-1}^{p+1} \pare{\mI - \eta K  \frac{1}{N}\sum_{i=1}^N   \mM_i^{\sigma_e(r')} \mH^i_{e,r'} \mM_i^{\sigma_e(r')} }   \pare{   - \eta\frac{1}{N}\sum_{i=1}^N\sum_{k=0}^{K-1}\br^i_{e,p,k}  - \bxi_{e,p}}.
    \end{align*}

    Taking expected norm on both side yields:
     \begin{align*}
      & \E \norm{\bw_{e,r} - \bw_{e}}^2   \\
            &=     3\eta^2 K^2\E \norm{\sum_{p=0}^{r-1} \frac{1}{N}\sum_{i=1}^N \nabla f_i^{\sigma_e(p)}(  \bw_{e } )}^2  + 3\eta^4 K^4 r \sum_{p=0}^{r-2} \prod_{r'=r-1}^{p+2} (1+\eta K L)^{2r'}  L^2 \E\norm{  \sum_{j=0}^p \frac{1}{N}\sum_{i=1}^N     \nabla f_i^{\sigma_e(j)}(  \bw_{e } )}^2   \\
            &\quad + 3 r \sum_{p=0}^{r-1}  \prod_{r'=r-1}^{p+1} (1+\eta K L)^{2r'} \E \norm{\eta\frac{1}{N}\sum_{i=1}^N\sum_{k=0}^{K-1}\br^i_{e,p,k}  + \bxi_{e,p}}^2\\
            &\leq 3\eta^2 K^2\E \norm{\sum_{p=0}^{r-1} \frac{1}{N}\sum_{i=1}^N \nabla f_i^{\sigma_e(p)}(  \bw_{e } )}^2  + 3\eta^4 K^4 r \sum_{p=0}^{r-2}  9  L^2 \E\norm{  \sum_{j=0}^p \frac{1}{N}\sum_{i=1}^N     \nabla f_i^{\sigma_e(j)}(  \bw_{e } )}^2   \\
            &\quad + 18\eta^2 r \sum_{p=0}^{r-1}  \frac{1}{N}\sum_{i=1}^N KL^2\sum_{k=0}^{K-1} \E \norm{\bw^i_{e,p,k} - \bw_{e,p}   }^2+ 18 r \sum_{p=0}^{r-1}    \E \norm{ \bxi_{e,p}}^2.
    \end{align*}
    According to Hoeffding-Serfling inequality~\cite[Theorem 2]{pmlr-v51-schneider16} we know
    \begin{align*}
      \norm{\sum_{p=0}^{r-1} \frac{1}{N}\sum_{i=1}^N \nabla f_i^{\sigma_e(p)}(  \bw_{e } )}^2 & \leq  2r^2 \norm{\nabla  F_{\mask}(\bw_e) - \frac{1}{r} \sum_{p=0}^{r-1} \frac{1}{N}\sum_{i=1}^N \nabla f_i^{\sigma_e(p)}(  \bw_{e } )}^2 + 2r^2 \norm{\nabla  F_{\mask}(\bw_e)}^2\\
      &\leq 2r^2  G  \frac{8(1-\frac{r-1}{N})\log(2/\nu)}{r} + 2r^2 \norm{\nabla  F_{\mask}(\bw_e)}^2. 
    \end{align*}
    Similarly we know 
    \begin{align*}
       \norm{  \sum_{j=0}^p \frac{1}{N}\sum_{i=1}^N     \nabla f_i^{\sigma_e(j)}(  \bw_{e } )}^2
      &\leq 2(p+1)^2  G  \frac{8 \log(2RK/\nu)}{p+1} + 2(p+1)^2 \norm{\nabla  F_{\mask}(\bw_e)}^2. 
    \end{align*}
    For $ \E \norm{ \bxi_{e,p}}^2$, we have
    \begin{align*}
        \E \norm{ \bxi_{e,p}}^2 &= \E \norm{ \eta \frac{1}{N}\sum_{i=1}^N  \sum_{k=0}^{K-1}\pare{\mask_i^{\sigma_e(p)}\odot \nabla f_i(\mask_i^{\sigma_e(p)}\odot \bw_{e,p,k} ) - \mask_i^{\sigma_e(p)}\odot \nabla f_i(\mask_i^{\sigma_e(p)}\odot \bw_{e,p,k};\xi_{e,p,k}^i)} }^2 \\
        &\leq \eta^2 K^2 \frac{\delta^2}{N}.
    \end{align*}
    Putting piece together yields:
     \begin{align}
       \E \norm{\bw_{e,r} - \bw_{e}}^2    
            &\leq 3\eta^2 K^2\pare{16r G  { \log(2/\nu)}  + 2r^2 \norm{\nabla  F_{\mask}(\bw_e)}^2  } \nonumber\\
            &\quad + 3\eta^4 K^4 r \sum_{p=0}^{r-2}  9  L^2 \pare{16(p+1)   G  {  \log(2RK/\nu)}  + 2(p+1)^2 \norm{\nabla  F_{\mask}(\bw_e)}^2} \nonumber \\
            &\quad + 18 \eta^2 r \sum_{p=0}^{r-1}  KL^2\sum_{k=0}^{K-1} \frac{1}{N}\sum_{i=1}^N  \E \norm{\bw^i_{e,p,k} - \bw_{e,p}   }^2+ 18 r^2 \eta^2 K^2 \frac{\delta^2}{N}\nonumber\\
             &\leq 3\eta^2 K^2\pare{16r G  { \log(2RK/\nu)}  + 2r^2 \norm{\nabla  F_{\mask}(\bw_e)}^2  }\nonumber \\
            &\quad + 27\eta^4 K^4 r      L^2 \pare{16 r^2   G  {  \log(2RK/\nu)}  +  \frac{ 2r^3}{3} \norm{\nabla  F_{\mask}(\bw_e)}^2}  \nonumber\\
            &\quad + 18 \eta^2 r \sum_{p=0}^{r-1}  KL^2\sum_{k=0}^{K-1} \frac{1}{N}\sum_{i=1}^N  \E \norm{\bw^i_{e,p,k} - \bw_{e,p}   }^2+ 18 r^2 \eta^2 K^2 \frac{\delta^2}{N} \nonumber\\
            & =  \pare{48r\eta^2 K^2+ 432 r^3 \eta^4 K^4        L^2 } G  { \log(2RK/\nu)}  + \pare{ 6r^2\eta^2 K^2+  18 r^4  \eta^4 K^4    L^2  }\norm{\nabla  F_{\mask}(\bw_e)}^2  \nonumber\\ 
            &\quad + 18\eta^2 r \sum_{p=0}^{r-1}  KL^2\sum_{k=0}^{K-1} \frac{1}{N}\sum_{i=1}^N  \E \norm{\bw^i_{e,p,k} - \bw_{e,p}   }^2+ 18 r^2 \eta^2 K^2 \frac{\delta^2}{N}. \label{eq:w_er - w_e}
    \end{align}
    Plugging above bound back to (\ref{eq:local model deviation 0}) yields:
       \begin{align*}
      &   \E \norm{ \bw_{e,r} - \bw_{e,r,k+1}^i}^2    \\
         &  \leq \pare{1+\frac{2}{K-1}}  \E \norm{\bw_{e,r} - \bw_{e,r,k }^i }^2    + 4\eta^2 K     \E \norm{  \nabla f_i^{\sigma_e(r)}(  \bw_{e }    )}^2+ \eta^2 K \delta^2   \\
         & \quad +4\eta^2 K L^2  \pare{ \pare{48r\eta^2 K^2+432 r^3 \eta^4 K^4        L^2 } G  { \log(2RK/\nu)}  + \pare{ 6r^2\eta^2 K^2+  18 r^4  \eta^4 K^4    L^2  }\norm{\nabla  F_{\mask}(\bw_e)}^2  }\\
            & \quad +4\eta^2 K L^2  \pare{  18\eta^2 r \sum_{p=0}^{r-1}  KL^2\sum_{k=0}^{K-1} \frac{1}{N}\sum_{i=1}^N  \E \norm{\bw^i_{e,p,k} - \bw_{e,p}   }^2+ 18 r^2 \eta^2 K^2 \frac{\delta^2}{N}}.
          \end{align*}
          Unrolling the recursion from $k$ to $0$ yields
    \begin{align*}
        &  \E \norm{ \bw_{e,r} - \bw_{e,r,k+1}^i}^2    
          \leq   \sum_{k'=0}^k \pare{1+\frac{2}{K-1}}^{k'} \pare{4\eta^2 K     \E \norm{  \nabla f_i^{\sigma_e(r)}(  \bw_{e }    )}^2+ \eta^2 K \delta^2}   \\
         &   + \sum_{k'=0}^k \pare{1+\frac{2}{K-1}}^{k'}  
 4\eta^2 K L^2  \pare{ \pare{48r\eta^2 K^2+432 r^3 \eta^4 K^4        L^2 } G  { \log(2/\nu)}  + \pare{ 6r^2\eta^2 K^2+  18 r^4  \eta^4 K^4    L^2  }\norm{\nabla  F_{\mask}(\bw_e)}^2  }\\
            &  +\sum_{k'=0}^k \pare{1+\frac{2}{K-1}}^{k'}   4\eta^2 K L^2  \pare{  18\eta^2 r \sum_{p=0}^{r-1}  KL^2\sum_{k=0}^{K-1} \frac{1}{N}\sum_{i=1}^N  \E \norm{\bw^i_{e,p,k} - \bw_{e,p}   }^2+ 18 r^2 \eta^2 K^2 \frac{\delta^2}{N}}\\
            &\leq    12\eta^2 K^2     \E \norm{  \nabla f_i^{\sigma_e(r)}(  \bw_{e }    )}^2+ 3\eta^2 K^2 \delta^2   \\
         &   +    12\eta^2 K^2  
  L^2  \pare{ \pare{48R\eta^2 K^2+432 R^3 \eta^4 K^4        L^2 } G  { \log(2/\nu)}  + \pare{ 6R^2\eta^2 K^2+ 18R^4  \eta^4 K^4    L^2  }\norm{\nabla  F_{\mask}(\bw_e)}^2  }\\
            &   +   12\eta^2 K^2     L^2  \pare{  18\eta^2 R  \sum_{p=0}^{R-1}  KL^2\sum_{k=0}^{K-1} \frac{1}{N}\sum_{i=1}^N  \E \norm{\bw^i_{e,p,k} - \bw_{e,p}   }^2+ 18 R^2 \eta^2 K^2 \frac{\delta^2}{N}}.
          \end{align*}

          We further sum above inequality over $k=0$ to $K-1$ and get
  \begin{align*}
      \sum_{r=0}^{R-1}\sum_{k=0}^{K-1} & \frac{1}{N}\sum_{i=1}^N \E \norm{ \bw_{e,r} - \bw_{e,r,k+1}^i}^2 
            \leq    12\eta^2 K^3   \sum_{r=0}^{R-1} \frac{1}{N}\sum_{i=1}^N  \E \norm{  \nabla f_i^{\sigma_e(r)}(  \bw_{e }    )}^2+ 3\eta^2 R K^3 \delta^2   \\
         &  +    12\eta^2 R K^3  
  L^2  \pare{ \pare{48R\eta^2 K^2+72 R^3 \eta^4 K^4        L^2 } G  { \log(2/\nu)}  + \pare{ 6R^2\eta^2 K^2+  6R^4  \eta^4 K^4    L^2  }\norm{\nabla  F_{\mask}(\bw_e)}^2  }\\
            &  +   12\eta^2 R K^3     L^2  \pare{  18\eta^2 R KL^2  \sum_{p=0}^{R-1} \sum_{k=0}^{K-1} \frac{1}{N}\sum_{i=1}^N  \E \norm{\bw^i_{e,p,k} - \bw_{e,p}   }^2+ 18 R^2 \eta^2 K^2 \frac{\delta^2}{N}}.
          \end{align*}
          Re-arranging the terms yields:
\begin{align*}
  &\underbrace{\pare{1 -  216\eta^4 R^2 K^4     L^4 }}_{\geq \frac{1}{2}}    \sum_{r=0}^{R-1}\sum_{k=0}^{K-1}  \frac{1}{N}\sum_{i=1}^N \E \norm{ \bw_{e,r} - \bw_{e,r,k+1}^i}^2  \leq    12\eta^2 K^3   \sum_{r=0}^{R-1} \frac{1}{N}\sum_{i=1}^N  \E \norm{  \nabla f_i^{\sigma_e(r)}(  \bw_{e }    )}^2+ 3\eta^2 R K^3 \delta^2   \\
         &  +    12\eta^2 R K^3   L^2  \pare{ \pare{48R\eta^2 K^2+432 R^3 \eta^4 K^4        L^2 } G  { \log(2RK/\nu)}  + \pare{ 6R^2\eta^2 K^2+  18 R^4  \eta^4 K^4    L^2  }\norm{\nabla  F_{\mask}(\bw_e)}^2  }\\
            &  +   12\eta^2 R K^3     L^2  \pare{   18 R^2 \eta^2 K^2 \frac{\delta^2}{N}}.
\end{align*} 
          Hence we conclude that
              \begin{align*}
   \sum_{r=0}^{R-1}\sum_{k=0}^{K-1}  \frac{1}{N}\sum_{i=1}^N \E \norm{ \bw_{e,r} - \bw_{e,r,k+1}^i}^2 
           & \leq    24\eta^2 K^3   \sum_{r=0}^{R-1} \frac{1}{N}\sum_{i=1}^N  \E \norm{  \nabla f_i^{\sigma_e(r)}(  \bw_{e }    )}^2+ 6\eta^2 R K^3 \delta^2   \\
         &  +   12    \pare{48R^2\eta^4 K^5 L^2+432 R^4 \eta^6 K^7        L^4 } G  { \log(2RK/\nu)}  \\
         & +  12 \pare{ 6R^3\eta^4 K^5 L^2+  18R^5  \eta^6 K^7    L^4  }\norm{\nabla  F_{\mask}(\bw_e)}^2 \\
            &  +   216     R^3 \eta^4 K^5 L^2 \frac{\delta^2}{N}.
          \end{align*} 
Finally, recall the definition of gradient dissimilarity and get
\begin{align*}
     \sum_{r=0}^{R-1} \frac{1}{N}\sum_{i=1}^N  \E \norm{  \nabla f_i^{\sigma_e(r)}(  \bw_{e }    )}^2 \leq  2\sum_{r=0}^{R-1} \frac{1}{N}\sum_{i=1}^N  \E \norm{  \nabla f_i^{\sigma_e(r)}(  \bw_{e }    ) - \nabla F(  \bw_{e }    ) }^2 +   2R \E \norm{  \nabla F(  \bw_{e }    )}^2,
\end{align*}
which concludes the proof.
The proof of second statement follows the same reasoning.
    \end{proof}
\end{lemma}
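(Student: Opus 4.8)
The plan is to set up two coupled recursions: an inner one running across the $K$ local SGD steps within a round, and an outer one running across the $R$ rounds of an epoch, and then to close the resulting self-referential inequality using the smallness of the stepsize. Throughout I write $f_i^{\sigma_e(r)}(\bw) := f_i(\mask_i^{\sigma_e(r)}\odot\bw)$ and measure every local iterate relative to the epoch-start model $\bw_e$.

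First I would derive the inner one-step recursion. Starting from the local update and applying Young's inequality with weight $(1+\tfrac{1}{K-1})$, together with the bounded-variance Assumption~\ref{assumption: bounded var rolling} and $L$-smoothness of each $f_i$, I expand $\E\norm{\bw_{e,r}-\bw_{e,r,k+1}^i}^2$ to reach a recursion of the form
\begin{align*}
\E\norm{\bw_{e,r}-\bw_{e,r,k+1}^i}^2 &\leq \pare{1+\tfrac{2}{K-1}}\E\norm{\bw_{e,r}-\bw_{e,r,k}^i}^2 + 4\eta^2 K\,\E\norm{\nabla f_i^{\sigma_e(r)}(\bw_e)}^2\\
&\quad + 4\eta^2 K L^2\,\E\norm{\bw_{e,r}-\bw_e}^2 + \eta^2 K\delta^2,
\end{align*}
where $2\eta^2 K L^2 \le \tfrac{1}{K-1}$ is used to absorb the local-drift term into the contraction factor. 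The key feature is that the driving quantity $\E\norm{\bw_{e,r}-\bw_e}^2$ — the deviation of the round-$r$ server model from the epoch start — must now be controlled separately.

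Next I would bound $\E\norm{\bw_{e,r}-\bw_e}^2$ using the epoch-level structure. Evoking Lemma~\ref{lem:epoch recursion} (equivalently, unrolling the round recursion of Lemma~\ref{lm:round recursion} via summation-by-parts), I decompose $\bw_{e,r}-\bw_e$ into a leading ``signal'' term $-\eta K\sum_{p<r}\tfrac1N\sum_i \nabla f_i^{\sigma_e(p)}(\bw_e)$, a higher-order curvature-correction term carrying the product of masking matrices $\mA_p$, and the stochastic/drift terms $\bxi_{e,p}$ and $\br^i_{e,p,k}$. The partial sum $\sum_{p<r}\tfrac1N\sum_i\nabla f_i^{\sigma_e(p)}(\bw_e)$ is precisely where the permutation enters: since $\sigma_e$ shuffles the $R$ sub-models without replacement, this is a sample-without-replacement average, and I would bound its deviation from the full mean $r\,\nabla F_{\mask}(\bw_e)$ by the Hoeffding--Serfling inequality \cite[Theorem 2]{pmlr-v51-schneider16}, yielding a high-probability term of order $r\,G\log(RK/\nu)$ after bounding each $\nabla f_i^{\sigma_e(p)}$ by $G$. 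The noise term obeys $\E\norm{\bxi_{e,p}}^2 \le \eta^2 K^2\delta^2/N$ by the variance bound, and the curvature-correction term reduces to the same concentrated partial sum.

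Finally I would substitute this bound back into the inner recursion, unroll over $k=0,\dots,K-1$ (using $\sum_{k'}(1+\tfrac{2}{K-1})^{k'}\le 3K$), and sum over $r$. The resulting inequality is self-referential: the total deviation $\sum_{r,k}\tfrac1N\sum_i\E\norm{\bw_{e,r}-\bw_{e,r,k+1}^i}^2$ reappears on the right-hand side with coefficient $216\,\eta^4 R^2 K^4 L^4$. The crucial closing step is to note that the stepsize is small enough — this is exactly where the condition $T\ge 512\kappa^2\log T$ of Theorem~\ref{thm:rolling mask} is invoked, forcing $216\,\eta^4 R^2K^4L^4\le \tfrac12$ — so that $1-216\,\eta^4R^2K^4L^4\ge\tfrac12$ and the self-referential term can be moved to the left and absorbed by doubling the constants. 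A last application of Definition~\ref{def:rolling mask grad dis}, namely $\sum_r\tfrac1N\sum_i\norm{\nabla f_i^{\sigma_e(r)}(\bw_e)}^2\le 2\zeta_\mask^2 + 2R\,\norm{\nabla F_{\mask}(\bw_e)}^2$, converts the gradient-norm terms into the stated $\zeta$ and $\norm{\nabla F_{\mask}(\bw_e)}^2$ contributions; the per-client bound follows identically by skipping the averaging over $i$ and replacing $\zeta_\mask^2$ with its client-$i$ analogue $\zeta_i$. The main obstacle I anticipate is the coupling between the without-replacement concentration and the self-referential drift bound: one must carry the partial-sum concentration estimate through the summation-by-parts expansion while simultaneously tracking all polynomial-in-$(R,K,\eta,L)$ coefficients, so that after closing the recursion the dominant $\eta^2 RK^3$ scaling survives and the stepsize-smallness condition genuinely secures the contraction factor $\ge\tfrac12$.
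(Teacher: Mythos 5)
Your proposal mirrors the paper's proof step for step: the same inner one-step recursion with the $(1+\tfrac{2}{K-1})$ contraction and the $2\eta^2KL^2\le\tfrac{1}{K-1}$ absorption, the same epoch-level expansion of $\bw_{e,r}-\bw_e$ via Lemma~\ref{lem:epoch recursion} and summation by parts, the same Hoeffding--Serfling concentration for the without-replacement partial sums, the same closing of the self-referential inequality through the factor $1-216\eta^4R^2K^4L^4\ge\tfrac12$, and the same final invocation of the gradient-dissimilarity definition (with the per-client case handled identically). The approach and all key ingredients coincide with the paper's argument, so the proposal is correct as a blueprint of that proof.
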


\subsubsection{Proof of Theorem~\ref{thm:rolling mask}} 

In the convergence analysis of Theorem~\ref{thm:rolling mask}, we account for the fact that the full model is partitioned into $R$ sub-models. At the start of each epoch, the server shuffles these sub-models and assigns them sequentially to clients. This introduces complexity in analysis due to the interaction between both the model drift caused by partial training on sub-models and the impact of permutation-based assignments on convergence. Our technical contribution is to  tackle these challenges jointly to establish the convergence, which could be interesting by its own.
Here we briefly illustrate our proof strategy. Consider an alternative objective induced by a mask configuration $\mask$:
\begin{align}
    F_{\mask}(\bw) = \frac{1}{N}\sum\nolimits_{i=1}^N  \frac{1}{R}\sum\nolimits_{j=1}^R  f_i(\mask_i^j\odot\bw) \label{eq: obj rolling}
\end{align}
and define $\bw^*(\mask):= \arg\min_{\bw\in\cW}F_\mask(\bw)$ as the optimal model given a masking configuration $\mask= \sbr{\sbr{\mask_i^1,\ldots,\mask_i^R}}_{i=1}^N \in \cbr{0,1}^{dNR}$. Apparently, when $\mask = \mathbf{1}$, $F_{\mask}(\bw)$ becomes original objective $F(\bw)$.
Our proof relies on a key Lipschitz property of $\bw^*(\mask)$. That is, if $f_i(\bw)$ is strongly convex and with bounded gradient, then
\begin{align*}
    \norm{\bw^*(\mask) -\bw^*(\mathbf{1})} \leq c\cdot \norm{\mask - \mathbf{1}},
\end{align*}
for some constant $c$. As a result, we can decompose the objective value into (1) convergence error to ${\bw}^*(\mask)$ and (2) residual error due to masking:
 \begin{align*}
      F(  \hat{\bw}) -  F(  \bw^* ) & \leq   2L\|\tilde{\bw}  - {\bw}^*(\mask)\|^2 + \pare{  \frac{ 2L}{\mu  }    + \frac{4 }{ L}  }{\frac{2G^2 + 2W^2L^2}{NR}}\| \mask  - \mathbf{1} \|^2 .
    \end{align*}
Then the heart of the proof is to prove the convergence of Algorithm~\ref{algorithm: Rolling Masked FedAvg}  to ${\bw}^*(\mask)$. Algorithm~\ref{algorithm: Rolling Masked FedAvg} performs a variant of Local SGD where each client shuffles its local component function $f_i(\mask_i^1\odot\bw),\ldots,f_i(\mask_i^R\odot \bw)$, and conduct local updates on one of them for $K$ steps during one communication round. 

Having outlined the key ideas, we turn to  rigorously establish the convergence rate.
\begin{proof}
Due to updating rule we know
\begin{align*}
   \E\norm{\bw_{e+1} - \bw^*}^2 & =  \E\norm{\cP_{\cW}(\bw_{e,R}) - \bw^*}^2\\
   & \leq  \E\norm{ \bw_{e,R} -\bw_{e} + \bw_{e} - \bw^*}^2\\
   & \leq \E\norm{ \bw_e - \bw^*}^2 + 2\inprod{ \bw_e - \bw^* }{-\eta K   \nabla  F_{\mask}(  \bw_{e } ) }\\
   &\quad + 2\inprod{ \bw_e - \bw^* }{ \eta^2 K^2 \sum_{r=0}^{R-2} \mA_{r+2}  \frac{1}{N}\sum_{i=1}^N   \mM_i^{\sigma_e(r)} \mH^i_{e,r} \mM_i^{\sigma_e(r)}   \sum_{j=0}^r \frac{1}{N}\sum_{i=1}^N     \nabla f_i^{\sigma_e(j)}(  \bw_{e } )}\\
   & \quad + 2\inprod{\bw_e - \bw^*}{ \sum_{r=0}^{R-1} \mA_r \pare{
            \eta \frac{1}{N}\sum_{i=1}^N\sum_{k=0}^{K-1}\br^i_{e,r,k}  + \bxi_{e,r}} } + \E \norm{\bw_e - \bw_{e,R}}^2,
\end{align*}
where we plug in Lemma~\ref{lem:epoch recursion}.

Applying Cauchy-Schwartz inequality and taking the expectation over randomness of $\xi$ yields:
\begin{align*}
   \E\norm{\bw_{e+1} - \bw^*}^2  
   & \leq \E\norm{ \bw_e - \bw^*}^2 + 2\inprod{ \bw_e - \bw^* }{-\eta R K   \nabla  F_{\mask}(  \bw_{e } ) }\\
   &\quad + 2\norm{ \bw_e - \bw^* }\norm{ \eta^2 K^2 \sum_{r=0}^{R-2} \mA_{r+2}  \frac{1}{N}\sum_{i=1}^N   \mM_i^{\sigma_e(r)} \mH^i_{e,r} \mM_i^{\sigma_e(r)}   \sum_{j=0}^r \frac{1}{N}\sum_{i=1}^N     \nabla f_i^{\sigma_e(j)}(  \bw_{e } )}\\
   & \quad + 2\norm{\bw_e - \bw^*}\norm{ \sum_{r=0}^{R-1} \mA_r \pare{
            \eta \frac{1}{N}\sum_{i=1}^N\sum_{k=0}^{K-1}\br^i_{e,r,k}  + \bxi_{e,r}} } + \E \norm{\bw_e - \bw_{e,R}}^2\\
            & \leq (1-\mu \eta RK) \E\norm{ \bw_e - \bw^*}^2  - \underbrace{\eta RK \pare{    F_{\mask}(  \bw_{e } ) -  F_{\mask}(  \tilde\bw^* ) }}_{\geq 0}\\
   &\quad + \underbrace{2\eta^2 K^2 \norm{ \bw_e - \bw^* }\pare{\sum_{r=0}^{R-2}(1+\eta K L)^RL\norm{   \sum_{j=0}^r \frac{1}{N}\sum_{i=1}^N     \nabla f_i^{\sigma_e(j)}(  \bw_{e } )}}}_{T_1}\\
   & \quad + \underbrace{2\norm{\bw_e - \bw^*}\pare{\sum_{r=0}^{R-1} (1+\eta K L)^R \norm{   \eta \frac{1}{N}\sum_{i=1}^N\sum_{k=0}^{K-1}\br^i_{e,r,k}  } }}_{T_2} + \E \norm{\bw_e - \bw_{e,R}}^2.
\end{align*}

Notice that we choose $\eta$ such that $\eta K L \leq \frac{1}{R}$, so we know $(1+\eta K L)^R \leq e \leq 3$.

To bound $T_1$, we again use Hoeffding-Serfling inequality:
    \begin{align*}
    \sum_{r=0}^{R-2}3L  \norm{\sum_{j=0}^{r } \frac{1}{N}\sum_{i=1}^N \nabla f_i^{\sigma_e(j)}(  \bw_{e } )}  & \leq   \sum_{r=0}^{R-2}3L \pare{ \norm{\nabla  F_{\mask}(\bw_e) - \frac{1}{r} \sum_{j=0}^{r} \frac{1}{N}\sum_{i=1}^N \nabla f_i^{\sigma_e(j)}(  \bw_{e } )} +  r  \norm{\nabla  F_{\mask}(\bw_e)} }\\
      &\leq   \sum_{r=0}^{R-2}3L \pare{  \sqrt{G  \frac{8 \log(2RK/\nu)}{r}} + r \norm{\nabla  F_{\mask}(\bw_e)}}\\
    & \leq  3L \pare{ \frac{2R^{3/2}}{3} \sqrt{G   {8 \log(2RK/\nu)} } + R^2 \norm{\nabla  F_{\mask}(\bw_e)}}.
    \end{align*}
    Hence we can bound $T_1$ as:
    \begin{align*}
        T_1 &\leq  2\eta^2 K^2\norm{\bw_e - \tilde\bw^* }\pare{3L \pare{ \frac{2R^{3/2}}{3} \sqrt{G   {8 \log(2RK/\nu)} } + R^2 \norm{\nabla  F_{\mask}(\bw_e)}}}\\
       & =   4L \eta^2 K^2\norm{\bw_e - \tilde\bw^* }   R^{3/2}   \sqrt{8G    \log(2RK/\nu)  } + 3L\eta^2 R^2 K^2 \norm{\nabla  F_{\mask}(\bw_e)}^2  \\
       & \leq  4L \eta^2 K^2\pare{\frac{1}{4\sqrt{\eta L K}}R^{1/2} \norm{\bw_e -\tilde\bw^*} \cdot 4 \sqrt{\eta L K} R \sqrt{G   {8 \log(2RK/\nu)} }} + 6L\eta^2 R^2 K^2 \norm{\bw_e - \tilde\bw^* }\norm{\nabla  F_{\mask}(\bw_e)}   \\
       & \leq   4 L \eta^2 K^2\pare{\frac{1}{16 \eta L K}R  \norm{\bw_e - \bw^* }^2 +  4\eta L R^2  K G    \log(2RK/\nu)}    + 6L\eta^2 R^2 K^2 \norm{\bw_e - \tilde\bw^* }\norm{\nabla  F_{\mask}(\bw_e)}   \\
       & =  \frac{1}{4   }\eta R K   \norm{\bw_e - \tilde\bw^* }^2 +  16 \eta^3 K^3 L^2 R^2   G    \log(2RK/\nu)     +3L \eta^2 R^2 K^2 \norm{\nabla  F_{\mask}(\bw_e)}^2+3L \eta^2 R^2 K^2 \norm{\bw_e - \tilde\bw^*}^2.
    \end{align*}
    To bound $T_2$ we notice
    \begin{align*}
     T_2 &\leq   6\eta\pare{\frac{1}{4}\sqrt{RK}\norm{ \bw_e - \tilde\bw^*}\cdot  4\frac{1}{\sqrt{RK}}\pare{\sum_{r=0}^{R-1} \sum_{k=0}^{K-1}L\frac{1}{N}\sum_{i=1}^N\norm{   \bw^i_{e,r,k}  - \bw_{e,r} } }}\\
     &\leq  \frac{6}{32}\eta   {RK}\norm{\bw_e - \tilde\bw^* }^2 +  \frac{48\eta}{ {RK}}\pare{\sum_{r=0}^{R-1} \sum_{k=0}^{K-1}L\frac{1}{N}\sum_{i=1}^N\norm{   \bw^i_{e,r,k}  - \bw_{e,r} } }^2  \\
     &\leq  \frac{3}{8}\eta   {RK}\norm{ \bw_e - \tilde\bw^* }^2 +   {48\eta }  \sum_{r=0}^{R-1} \sum_{k=0}^{K-1}L\frac{1}{N}\sum_{i=1}^N\norm{   \bw^i_{e,r,k}  - \bw_{e,r} }^2 .
    \end{align*}
    We plug in Lemma~\ref{lem:local model deviation} and get
    \begin{align*}
      &  48\eta L  \sum_{r=0}^{R-1} \sum_{k=0}^{K-1} \frac{1}{N}\sum_{i=1}^N\norm{   \bw^i_{e,r,k}  - \bw_{e,r} }^2 \\
      &\leq         48^2\eta^3 L R K^3   \zeta +  48\eta L \pare{ 12 \pare{ 6R^3\eta^4 K^5 L^2+  18R^5  \eta^6 K^7    L^4  } 
 + 48\eta^2 R K^3}\E\norm{\nabla  F_{\mask}(\bw_e)}^2 \\
         &  \quad +   576 \eta L   \pare{48R^2\eta^4 K^5 L^2+432 R^4 \eta^6 K^7        L^4 } G  { \log(2RK/\nu)}  \\ 
            &  \quad +  10368    R^3 \eta^5 K^5 L^3 \frac{\delta^2}{N}+   288  L  \eta^3 R K^3 \delta^2 .
    \end{align*} 
    Putting pieces together yields:
\begin{align*}
   \E\norm{\bw_{e+1} - \bw^*}^2   
            & \leq \pare{1- \frac{3}{8}\mu \eta RK - 3L\eta^2 R^2 K^2} \E\norm{ \bw_e - \bw^*}^2  - \eta RK \pare{    F_{\mask}(  \bw_{e } ) -  F_{\mask}(  \tilde\bw^* ) } \\
   &\quad  +  \pare{16 \eta^3 K^3 L^2 R^2 +576 \eta L   \pare{48R^2\eta^4 K^5 L^2+432 R^4 \eta^6 K^7        L^4 }  } G    \log(2RK/\nu)  \\
   & \quad   + \E \norm{\bw_e - \bw_{e,R}}^2+ 48^2\eta^3 L R K^3   \zeta \\
   &\quad +   \pare{   3456 R^3\eta^5 K^5 L^3+  10368 R^5  \eta^7 K^7    L^5  
 + 2304  L \eta^3 R K^3+3L \eta^2 R^2 K^2}\E\norm{\nabla  F_{\mask}(\bw_e)}^2 \\ 
            & \quad +  10368    R^3 \eta^5 K^5 L^3 \frac{\delta^2}{N}+   288  L  \eta^3 R K^3 \delta^2.
\end{align*} 
Due to our choice of $\eta = \frac{2\log(T^2)}{\mu RKT}$ and $T \geq 128 \kappa L \log T$ we know $\eta RK \leq  \frac{2\log(T^2)}{\mu RK 512 \kappa \log T^2}   RK = \frac{1}{256 L^2}   $
\begin{align*}
     3\eta^2 R^2 K^2 L \leq \frac{1}{256L } \eta R K \\
    2304  L    \eta^3 R K^3 \leq   \frac{2304}{256^2L}\eta RK\\
    3456 R^3 \eta^5 K^5 L^3 \leq \frac{3456}{256^4L} \eta RK\\
    10368 R^5 \eta^7 K^7 L^5 \leq \frac{10368}{256^6L} \eta RK.
\end{align*}
Hence we have:
\begin{align*}
   \E\norm{\bw_{e+1} - \bw^*}^2   
            & \leq \pare{1- \frac{3}{8}\mu \eta RK - 3L\eta^2 R^2 K^2} \E\norm{ \bw_e - \bw^*}^2  - \eta RK \pare{    F_{\mask}(  \bw_{e } ) -  F_{\mask}(  \tilde\bw^* ) } \\
   &\quad  +  \pare{16 \eta^3 K^3 L^2 R^2 +576 \eta L   \pare{48R^2\eta^4 K^5 L^2+432 R^4 \eta^6 K^7        L^4 }  } G    \log(2RK/\nu)  \\
   & \quad   + \E \norm{\bw_e - \bw_{e,R}}^2+ 48^2\eta^3 L R K^3   \zeta \\
   &\quad +   \frac{\eta RK}{16L}\E\norm{\nabla  F_{\mask}(\bw_e)}^2 \\ 
            & \quad +  10368    R^3 \eta^5 K^5 L^3 \frac{\delta^2}{N}+   288  L  \eta^3 R K^3 \delta^2\\
            & \leq \pare{1- \frac{3}{8}\mu \eta RK - 3L\eta^2 R^2 K^2} \E\norm{ \bw_e - \bw^*}^2  - \frac{7}{8}\eta RK \pare{    F_{\mask}(  \bw_{e } ) -  F_{\mask}(  \tilde\bw^* ) } \\
   &\quad  +  \pare{16 \eta^3 K^3 L^2 R^2 +576 \eta L   \pare{48R^2\eta^4 K^5 L^2+432 R^4 \eta^6 K^7        L^4 }  } G    \log(2RK/\nu)  \\
   & \quad   + \E \norm{\bw_e - \bw_{e,R}}^2+ 48^2\eta^3 L R K^3   \zeta  +  10368    R^3 \eta^5 K^5 L^3 \frac{\delta^2}{N}+   288  L  \eta^3 R K^3 \delta^2,  
\end{align*}
where at last step we use the $L$ smoothness of $ F_{\mask}$ such that $\norm{\nabla  F_{\mask}(\bw_e)}^2 \leq 2 L  \pare{    F_{\mask}(  \bw_{e } ) -  F_{\mask}(  \tilde\bw^* ) }$.

It remains to bound $ \norm{\bw_{e+1} - \bw_e}^2$. We again evoke Lemma~\ref{lem:epoch recursion}:
\begin{align*}
    \E\norm{\bw_{e,R} - \bw_e}^2 &\leq   3\norm{  \eta R K   \nabla  F_{\mask}(  \bw_{e } ) }^2\\
            &  + 3 \eta^4 K^4 R \sum_{r=0}^{R-2} \prod_{r'=R-1}^{r+2} (1+\eta K L)^{2r'}   L^2 \E\norm{  \sum_{j=0}^r \frac{1}{N}\sum_{i=1}^N     \nabla f_i^{\sigma_e(j)}(  \bw_{e } )   }^2\\
            &\quad +3  \E\norm{ \sum_{r=0}^{R-1} \mA_r \pare{\eta\frac{1}{N}\sum_{i=1}^N\sum_{k=0}^{K-1}\br^i_{e,r,k}  + \bxi_{e,r}}}^2\\
            &\leq    3\E\norm{  \eta R K   \nabla  F_{\mask}(  \bw_{e } ) }^2  +   {27L^2} \eta^4 K^4 R \sum_{r=0}^{R-2}     \E\norm{  \sum_{j=0}^r \frac{1}{N}\sum_{i=1}^N     \nabla f_i^{\sigma_e(j)}(  \bw_{e } )   }^2 \\
            &\quad + {27 }  R\sum_{r=0}^{R-1} \E\norm{   \eta\frac{1}{N}\sum_{i=1}^N\sum_{k=0}^{K-1}\br^i_{e,r,k}  }^2 + {27L}  R\sum_{r=0}^{R-1}\E \norm{   \bxi_{e,r} }^2\\ 
             &\leq  3\E\norm{  \eta R K   \nabla  F_{\mask}(  \bw_{e } ) }^2  +   {27L^2}  \eta^4 K^4 R \underbrace{\sum_{r=0}^{R-2}    \E \norm{  \sum_{j=0}^r \frac{1}{N}\sum_{i=1}^N     \nabla f_i^{\sigma_e(j)}(  \bw_{e } )   }^2 }_{T_1}\\
            &\quad +  27 \eta^2 RKL^2 \underbrace{\sum_{r=0}^{R-1}   \frac{1}{N}\sum_{i=1}^N\sum_{k=0}^{K-1}\E\norm{ \bw^i_{e,r,k} - \bw^i_{e,r}  }^2}_{T_2} + {27 }    \eta^2 R^2 K^2 \frac{\delta^2}{N}. 
\end{align*}
For $T_1$, we know that 
\begin{align*}
    T_1 &\leq 2\sum_{r=0}^{R-2}   r^2  \E \norm{      \nabla  F_{\mask}(  \bw_{e } )   }^2 + 2\sum_{r=0}^{R-2}  r^2  \E \norm{  \frac{1}{r}\sum_{j=0}^r \frac{1}{N}\sum_{i=1}^N     \nabla f_i^{\sigma_e(j)}(  \bw_{e } ) - \nabla  F_{\mask}(  \bw_{e } )   }^2\\
   & \leq 2 \frac{R^3}{3}  \E \norm{      \nabla  F_{\mask}(  \bw_{e } )   }^2 + 2\sum_{r=0}^{R-2}  r^2  8G\frac{\log(2RK/\nu)}{r}\\
   & \leq 2   R^3  \E \norm{      \nabla  F_{\mask}(  \bw_{e } )   }^2 + 16  R^2   G  \log(2RK/\nu).
\end{align*}
with probability at least $1-\nu$.

For $T_2$, we again evoke Lemma~\ref{lem:local model deviation}:
\begin{align*}
  T_2 \leq &48\eta^2 R K^3   \zeta + \pare{ 12 \pare{ 6R^3\eta^4 K^5 L^2+  18R^5  \eta^6 K^7    L^4  } 
 + 48\eta^2 R K^3}\E\norm{\nabla  F_{\mask}(\bw_e)}^2 \\
         &  +   12    \pare{48R^2\eta^4 K^5 L^2+432 R^4 \eta^6 K^7        L^4 } G  { \log(2RK/\nu)}  \\ 
            &  +   216     R^3 \eta^4 K^5 L^2 \frac{\delta^2}{N}+ 6\eta^2 R K^3 \delta^2 .
\end{align*}
Putting pieces together yields:
\begin{align*}
  &   \E\norm{\bw_{e+1} - \bw_e}^2  \\
  &\leq  3\E\norm{  \eta R K   \nabla  F_{\mask}(  \bw_{e } ) }^2  +   {27L^2}  \eta^4 K^4 R \pare{ 2   R^3  \E \norm{      \nabla  F_{\mask}(  \bw_{e } )   }^2 + 16  R^2   G  \log(2RK/\nu) } \\
            &\quad +  27 \eta^2 RKL^2 \pare{48\eta^2 R K^3   \zeta + \pare{ 12 \pare{ 6R^3\eta^4 K^5 L^2+  18R^5  \eta^6 K^7    L^4  } 
 + 48\eta^2 R K^3}\E\norm{\nabla  F_{\mask}(\bw_e)}^2} \\
         & \quad +  27 \eta^2 RKL^2 \pare{ 12    \pare{48R^2\eta^4 K^5 L^2+432 R^4 \eta^6 K^7        L^4 } G  { \log(2RK/\nu)} } \\ 
            & \quad +  27 \eta^2 RKL^2 \pare{ 216     R^3 \eta^4 K^5 L^2 \frac{\delta^2}{N}+ 6\eta^2 R K^3 \delta^2 }  + {27 }    \eta^2 R^2 K^2 \frac{\delta^2}{N}\\ 
            & = \pare{3\eta^2 R^2 K^2 + 54L^2  \eta^4 R^4 K^4 +        1944 R^4\eta^6 K^6 L^4+  5832R^6  \eta^8 K^8    L^6    
 + 1296\eta^4 R^2 K^4L^2 } \E\norm{    \nabla  F_{\mask}(  \bw_{e } ) }^2\\
 & \quad +  \pare{27 \eta^2 RKL^2  12    \pare{48R^2\eta^4 K^5 L^2+432 R^4 \eta^6 K^7        L^4 } + 432 L^2 \eta^4 R^3 K^4 } G  { \log(2RK/\nu)}\\ 
       & \quad +  27 \eta^2 RKL^2 \pare{ 216     R^3 \eta^4 K^5 L^2 \frac{\delta^2}{N}+ 6\eta^2 R K^3 \delta^2 }  + {27 }    \eta^2 R^2 K^2 \frac{\delta^2}{N}  + 1296 \eta^4 R^2 K^4 L^2         \zeta.
\end{align*}
Due to our choice of $\eta = \frac{4\log(T^2)}{\mu RKT}$ and $T \geq 512 \kappa^2 \log T$ we know $\eta RK \leq  \frac{2\log(T^2)}{\mu RK 512 \kappa^2 \log T^2}   RK = \frac{1}{256 \kappa L }   $
\begin{align*}
     3\eta^2 R^2 K^2  \leq \frac{3}{256L } \eta R K \leq \frac{1}{16L}\eta RK \\
   54L^2  \eta^4 R^4 K^4 \leq   \frac{54}{256^3L}\eta RK\leq \frac{1}{16L}\eta RK\\
  1944 R^4\eta^6 K^6 L^4\leq \frac{1944}{256^5L} \eta RK\leq \frac{1}{16L}\eta RK\\
   5832R^6  \eta^8 K^8    L^6 \leq \frac{5832}{256^7L} \eta RK\leq \frac{1}{16L}\eta RK\\
     1296\eta^4 R^2 K^4L^2 \leq \frac{1296}{256^3 L} \eta RK \leq \frac{1}{16L}\eta RK.
\end{align*}
Hence we know 
\begin{align}
     \E\norm{\bw_{e+1} - \bw_e}^2   
            & \leq \frac{5}{16L}\eta RK \E\norm{    \nabla  F_{\mask}(  \bw_{e } ) }^2\nonumber\\
 & \quad +  \pare{324 \eta^2 RKL^2    \pare{48R^2\eta^4 K^5 L^2+432 R^4 \eta^6 K^7        L^4 } + 432 L^2 \eta^4 R^3 K^4 } G  { \log(2RK/\nu)}\nonumber \\ 
       & \quad +  27 \eta^2 RKL^2 \pare{ 216     R^3 \eta^4 K^5 L^2 \frac{\delta^2}{N}+ 6\eta^2 R K^3 \delta^2 }  + {27 }    \eta^2 R^2 K^2 \frac{\delta^2}{N}\nonumber\\ 
       & \quad + 1296 \eta^4 R^2 K^4 L^2         \zeta^2. \label{eq: iterate difference bound }
\end{align}
Putting pieces together yields:
\begin{align*}
   \E\norm{\bw_{e+1} - \bw^*}^2   
            & \leq \pare{1- \frac{3}{8}\mu \eta RK + 3L\eta^2 R^2 K^2} \E\norm{ \bw_e - \bw^*}^2  - \frac{1}{4}\eta RK \underbrace{\pare{    F_{\mask}(  \bw_{e } ) -  F_{\mask}(  \tilde\bw^* ) } }_{\geq 0}\\
   &\quad  +  \pare{16 \eta^3 K^3 L^2 R^2 +576 \eta L   \pare{48R^2\eta^4 K^5 L^2+432 R^4 \eta^6 K^7        L^4 }  } G    \log(2RK/\nu)  \\
   & \quad    + 48^2\eta^3 L R K^3   \zeta  +  10368    R^3 \eta^5 K^5 L^3 \frac{\delta^2}{N}+   288  L  \eta^3 R K^3 \delta^2 + {27 }    \eta^2 R^2 K^2 \frac{\delta^2}{N}\\
   & \quad +  \pare{324 \eta^2 RKL^2    \pare{48R^2\eta^4 K^5 L^2+432 R^4 \eta^6 K^7        L^4 } + 432 L^2 \eta^4 R^3 K^4 } G  { \log(2RK/\nu)}\\ 
       & \quad +  27 \eta^2 RKL^2 \pare{ 216     R^3 \eta^4 K^5 L^2 \frac{\delta^2}{N}+ 6\eta^2 R K^3 \delta^2 }  \\ 
       & \quad + 1296 \eta^4 R^2 K^4 L^2         \zeta^2.
\end{align*}
 Due to our choice of $\eta$, we know $ 3L\eta^2 R^2 K^2 \leq 3L \eta RK \frac{\mu}{256 L^2}\leq \frac{1}{8  } \mu \eta RK  $. We unroll the recursion from $T$ to $0$ to get:
\begin{align*}
   \E\norm{\bw_{e+1} - \bw^*}^2   
            & \leq \pare{1- \frac{1}{4}\mu \eta RK }^T \E\norm{ \bw_0 - \bw^*}^2  \\
   &\quad  +  4\pare{16 \eta^2 K^2 \kappa L  R  +576  \kappa  \pare{48R \eta^4 K^4 L^2+432 R^3 \eta^6 K^6        L^4 }  } G    \log(2RK/\nu)  \\
   & \quad    +4 \cdot 48^2\eta^2 \kappa   K^2   \zeta  +  4 \cdot 10368    R^2 \eta^4 K^4 \kappa L^2 \frac{\delta^2}{N}+  4 \cdot 288  \kappa \eta^2  K^2 \delta^2 +4 \cdot {27 }    \eta  R  K  \frac{\delta^2}{\mu N}\\
   & \quad +  4\pare{324 \eta  \kappa L     \pare{48R^2\eta^4 K^5 L^2+432 R^4 \eta^6 K^7        L^4 } + 432 \kappa L  \eta^3 R^2 K^3 } G  { \log(2RK/\nu)}\\ 
       & \quad + 4 \cdot 27 \eta   \kappa L \pare{ 216     R^3 \eta^4 K^5 L^2 \frac{\delta^2}{N}+ 6\eta^2 R K^3 \delta^2 }  \\ 
       & \quad + 4 \cdot 1296 \eta^3 R  K^3 \kappa L         \zeta.
\end{align*}

Plugging $\eta = \frac{4\log T^2}{\mu TRK}$ yields:
   \begin{align*}
              \E\norm{\bw_{T} - \bw^*}^2 
            &\leq O\pare{ \frac{ \E\norm{ \bw_0 - \bw^*}^2 }{T^2} } + O\pare{\frac{\kappa^2        \log(2RK/\nu)  }{\mu T^2 R  }  } +  O\pare{ \frac{\kappa   \zeta \log^2(T )}{\mu^2 T^2   R^2}    }    +  O\pare{      \frac{  \delta^2 \log(T)}{\mu^2 T N}}. 
\end{align*}
\end{proof}

\subsection{Proof of Convergence in Nonconvex Setting (Theorem~\ref{thm:rolling mask nonconvex})}

In this section we will present the proof of convergence of Algorithm~\ref{algorithm: Rolling Masked FedAvg} in nonconvex setting. Since constrained non-convex stochastic optimization suffers from residual noise error unless a large mini-batch is used~\citep{ghadimi2016mini}, here we assume an unconstrained setting, i.e., $\cW = \R^d$.
\begin{proof}
    
 From the smoothness of $ F_{\mask}$ and Lemma~\ref{lem:epoch recursion}, we have

    \begin{align*}
         F_{\mask}(\bw_{e+1}) &\leq  F_{\mask}(\bw_e) + \inprod{\nabla  F_{\mask}(\bw_e)}{\bw_{e+1} - \bw_e} + \frac{L}{2}\norm{\bw_{e+1} - \bw_e}^2\\
        &\leq  F_{\mask}(\bw_e) - \eta RK \norm{ \nabla  F_{\mask}(\bw_e)}^2 + \inprod{\nabla  F_{\mask}(\bw_e)}{\bg_e} + \frac{L}{2}\norm{\bw_{e+1} - \bw_e}^2\\
    \end{align*}
   where 
      \begin{align*}
          \bg_e &=  \eta^2 K^2 \sum_{r=0}^{R-2} \mA_{r+1}   \frac{1}{N}\sum_{i=1}^N   \mM_i^{\sigma_e(r+1)} \mH^i_{e,r+1} \mM_i^{\sigma_e(r+1)}   \sum_{j=0}^r \frac{1}{N}\sum_{i=1}^N     \nabla f_i^{\sigma_e(j)}(  \bw_{e } )\\
           &\quad - \sum_{r=0}^{R-1} \mA_r \pare{\eta\frac{1}{N}\sum_{i=1}^N\sum_{k=0}^{K-1}\br^i_{e,r,k}  + \bxi_{e,r}}.
      \end{align*} 
    Taking expectation over randomness of samples yields:
    \begin{align*}
       \E [ F_{\mask}(\bw_{e+1}) ] 
        &\leq \E [ F_{\mask}(\bw_e)] - \eta RK \E \norm{ \nabla  F_{\mask}(\bw_e)}^2 + \E \inprod{\nabla  F_{\mask}(\bw_e)}{\bh_e} + \frac{L}{2}\E \norm{\bw_{e+1} - \bw_e}^2\\
        &\leq \E [ F_{\mask}(\bw_e)] - \eta RK \E \norm{ \nabla  F_{\mask}(\bw_e)}^2\\
        &+ \underbrace{\E \inprod{\nabla  F_{\mask}(\bw_e)}{\eta^2 K^2 \bh_e}}_{T_1}  + \underbrace{\E \inprod{\nabla  F_{\mask}(\bw_e)}{- \sum_{r=0}^{R-1} \mA_r \pare{\eta\frac{1}{N}\sum_{i=1}^N\sum_{k=0}^{K-1}\br^i_{e,r,k}  } } }_{T_2} + \frac{L}{2}\E \norm{\bw_{e+1} - \bw_e}^2\\
    \end{align*}
      where 
      \begin{align*}
          \bh_e &=    \sum_{r=0}^{R-2} \prod_{r'=R-1}^{r+2} \pare{\mI - \eta K  \frac{1}{N}\sum_{i=1}^N   \mM_i^{\sigma_e(r')} \mH^i_{e,r'} \mM_i^{\sigma_e(r')} }   \frac{1}{N}\sum_{i=1}^N   \mM_i^{\sigma_e(r+1)} \mH^i_{e,r+1} \mM_i^{\sigma_e(r+1)}   \sum_{j=0}^r \frac{1}{N}\sum_{i=1}^N     \nabla f_i^{\sigma_e(j)}(  \bw_{e } ).
      \end{align*} 
    For $T_1$:
    \begin{align*}
        T_1  = \E \inprod{\nabla  F_{\mask}(\bw_e)}{\eta^2 K^2 \bh_e}  \leq \eta^2 K^2 \E \sbr{\frac{1}{\sqrt{\eta K}}\norm{\nabla  F_{\mask}(\bw_e)}\sqrt{\eta K}\norm{ \bh_e}}. 
    \end{align*}
    Then we bound $ \norm{ \bh_e} $ as
    \begin{align*}
       \norm{ \bh_e}  \leq  \sum_{r=0}^{R-2}(1+\eta K L)^RL\norm{   \sum_{j=0}^r \frac{1}{N}\sum_{i=1}^N     \nabla f_i^{\sigma_e(j)}(  \bw_{e } )}  .
    \end{align*}
  and applying Hoeffding-Serfling inequality~\cite[Theorem 2]{pmlr-v51-schneider16} gives:
  \begin{align*}
    \sum_{r=0}^{R-2} L  \norm{\sum_{j=0}^{r } \frac{1}{N}\sum_{i=1}^N \nabla f_i^{\sigma_e(j)}(  \bw_{e } )}  & \leq   \sum_{r=0}^{R-2} L \pare{ \norm{\nabla  F_{\mask}(\bw_e) - \frac{1}{r} \sum_{j=0}^{r} \frac{1}{N}\sum_{i=1}^N \nabla f_i^{\sigma_e(j)}(  \bw_{e } )}  +  r  \norm{\nabla  F_{\mask}(\bw_e)} }\\
      &\leq   \sum_{r=0}^{R-2} L \pare{  \sqrt{G  \frac{8 \log(2RK/\nu)}{r}} + r \norm{\nabla  F_{\mask}(\bw_e)}}\\
    & \leq   L \pare{ \frac{2R^{3/2}}{3} \sqrt{G   {8 \log(2RK/\nu)} } + R^2 \norm{\nabla  F_{\mask}(\bw_e)}}.
    \end{align*}
    Putting pieces together yields:
      \begin{align*}
        T_1    & \leq \eta^2 K^2 \E \sbr{ \norm{\nabla  F_{\mask}(\bw_e)} \norm{ \bh_e}}\\ 
        & \leq    \eta^2 K^2 \E \sbr{ \norm{\nabla  F_{\mask}(\bw_e)}  3   L \pare{ \frac{2R^{3/2}}{3} \sqrt{G   {8 \log(2RK/\nu)} } + R^2 \norm{\nabla  F_{\mask}(\bw_e)}}  }\\ 
        & \leq    2 L \E \sbr{\sqrt{\frac{\eta K R}{4L}} \norm{\nabla  F_{\mask}(\bw_e)}      \pare{ 4\sqrt{L}\eta^{3/2} K^{3/2} R    \sqrt{G   {8 \log(2RK/\nu)} }  }  } +    3   L\eta^2 R^2 K^2 \E \sbr{ \norm{\nabla  F_{\mask}(\bw_e)}^2  }\\  
         & \leq      L \E \sbr{ {\frac{\eta K R}{4L}} \norm{\nabla  F_{\mask}(\bw_e)}^2  +    \pare{128  {L}\eta^{3 } K^{3 } R^2    {G   {  \log(2RK/\nu)} }  }  } +    3   L\eta^2 R^2 K^2 \E \sbr{ \norm{\nabla  F_{\mask}(\bw_e)}^2  }\\  
         & =   \pare{{\frac{\eta K R}{4 }}+ 3   L\eta^2 R^2 K^2 }  \E \sbr{  \norm{\nabla  F_{\mask}(\bw_e)}^2 } +    128  {L}^2 \eta^{3 } K^{3 } R^2    {G   {  \log(2RK/\nu)} }     . 
    \end{align*}
   
    For $T_2$ we have:
    \begin{align*}
        T_2 &=\E \inprod{\nabla  F_{\mask}(\bw_e)}{- \sum_{r=0}^{R-1} \mA_r \pare{\eta\frac{1}{N}\sum_{i=1}^N\sum_{k=0}^{K-1}\br^i_{e,r,k}  } } \\
        & \leq \eta\E \sbr{ \sqrt{RK}\norm{\nabla  F_{\mask}(\bw_e)} \frac{1}{\sqrt{RK}}\norm{ \sum_{r=0}^{R-1} \mA_r \pare{\frac{1}{N}\sum_{i=1}^N\sum_{k=0}^{K-1}\br^i_{e,r,k}  } } }\\
        &\leq  \frac{1}{4}\eta   RK \E\norm{\nabla  F_{\mask}(\bw_e)}^2 + \eta\frac{1}{  RK }\E\norm{ \sum_{r=0}^{R-1} \mA_r \pare{\frac{1}{N}\sum_{i=1}^N\sum_{k=0}^{K-1}\br^i_{e,r,k}  } }^2 .
    \end{align*}
    For $\E\norm{ \sum_{r=0}^{R-1} \mA_r \pare{\frac{1}{N}\sum_{i=1}^N\sum_{k=0}^{K-1}\br^i_{e,r,k}  } }^2$ we have
    \begin{align*}
        \E\norm{ \sum_{r=0}^{R-1} \mA_r \pare{\frac{1}{N}\sum_{i=1}^N\sum_{k=0}^{K-1}\br^i_{e,r,k}  } }^2 &\leq R\sum_{r=0}^{R-1} (1+\eta K L)^{2R} \norm{   \frac{1}{N}\sum_{i=1}^N\sum_{k=0}^{K-1}\br^i_{e,r,k}  }^2 \\
        &\leq RK\sum_{r=0}^{R-1}    \sum_{k=0}^{K-1}(1+\eta K L)^{2R} \frac{1}{N}\sum_{i=1}^N\norm{   \bw^i_{e,r,k}  - \bw_{e,r} }^2.
    \end{align*} 
    We plug in Lemma~\ref{lem:local model deviation} to get
    \begin{align*}
        &    \E\norm{ \sum_{r=0}^{R-1} \mA_r \pare{\frac{1}{N}\sum_{i=1}^N\sum_{k=0}^{K-1}\br^i_{e,r,k}  } }^2  \\
        &\leq RK\sum_{r=0}^{R-1}    \sum_{k=0}^{K-1}(1+\eta K L)^{2R}   \pare{48\eta^2 R K^3   \zeta + \pare{ 12 \pare{ 6R^3\eta^4 K^5 L^2+  18R^5  \eta^6 K^7    L^4  } 
 + 48\eta^2 R K^3}\E\norm{\nabla  F_{\mask}(\bw_e)}^2} \\
         &\quad   +  RK\sum_{r=0}^{R-1}    \sum_{k=0}^{K-1}(1+\eta K L)^{2R}   12    \pare{48R^2\eta^4 K^5 L^2+432 R^4 \eta^6 K^7        L^4 } G  { \log(2RK/\nu)}  \\ 
            & \quad  +  RK\sum_{r=0}^{R-1}    \sum_{k=0}^{K-1}(1+\eta K L)^{2R}   \pare{216     R^3 \eta^4 K^5 L^2 \frac{\delta^2}{N}+ 6\eta^2 R K^3 \delta^2}  \\ 
            &\leq 9R^2K^2   \pare{48\eta^2 R K^3   \zeta + \pare{ 12 \pare{ 6R^3\eta^4 K^5 L^2+  18R^5  \eta^6 K^7    L^4  } 
 + 48\eta^2 R K^3}\E\norm{\nabla  F_{\mask}(\bw_e)}^2} \\
         & \quad +  9R^2K^2  12    \pare{48R^2\eta^4 K^5 L^2+432 R^4 \eta^6 K^7        L^4 } G  { \log(2RK/\nu)}    +  9R^2K^2   \pare{216     R^3 \eta^4 K^5 L^2 \frac{\delta^2}{N}+ 6\eta^2 R K^3 \delta^2}.
    \end{align*}
Putting pieces together yields:
   
    \begin{align*}
        T_2  &\leq  \frac{1}{4}\eta   RK \E\norm{\nabla  F_{\mask}(\bw_e)}^2 + \eta\frac{1}{  RK }\E\norm{ \sum_{r=0}^{R-1} \mA_r \pare{\frac{1}{N}\sum_{i=1}^N\sum_{k=0}^{K-1}\br^i_{e,r,k}  } }^2 \\
        &\leq  \frac{1}{4}\eta   RK \E\norm{\nabla  F_{\mask}(\bw_e)}^2   +  9\eta R K    \pare{48\eta^2 R K^3   \zeta + \pare{ 12 \pare{ 6R^3\eta^4 K^5 L^2+  18R^5  \eta^6 K^7    L^4  } 
 + 48\eta^2 R K^3}\E\norm{\nabla  F_{\mask}(\bw_e)}^2} \\
         & \quad +    9\eta R K  12    \pare{48R^2\eta^4 K^5 L^2+432 R^4 \eta^6 K^7        L^4 } G  { \log(2RK/\nu)}   +   9\eta R K    \pare{216     R^3 \eta^4 K^5 L^2 \frac{\delta^2}{N}+ 6\eta^2 R K^3 \delta^2} .
    \end{align*}
    Plugging $T_1$ and $T_2$ back yields:
     \begin{align*}
    &   \E [ F_{\mask}(\bw_{e+1}) ]  \\
        &\leq \E [ F_{\mask}(\bw_e)] - \pare{\frac{1}{2}\eta RK - 3   L\eta^2 R^2 K^2 - 9\eta RK\pare{ 12 \pare{ 6R^3\eta^4 K^5 L^2+  18R^5  \eta^6 K^7    L^4  } 
 + 48\eta^2 R K^3}} \E \norm{ \nabla  F_{\mask}(\bw_e)}^2 \\
        &\quad   +   128  {L}^2 \eta^{3 } K^{3 } R^2    {G   {  \log(2RK/\nu)} }  \\
        &\quad +  9\eta R K     \cdot 48\eta^2 R K^3   \zeta  +    9\eta R K  12    \pare{48R^2\eta^4 K^5 L^2+432 R^4 \eta^6 K^7        L^4 } G  { \log(2RK/\nu)}  \\ 
            &\quad   +   9\eta R K    \pare{216     R^3 \eta^4 K^5 L^2 \frac{\delta^2}{N}+ 6\eta^2 R K^3 \delta^2} + \frac{L}{2}\E \norm{\bw_{e+1} - \bw_e}^2.
    \end{align*}

From (\ref{eq: iterate difference bound }) we know 
    \begin{align*}
     \E\norm{\bw_{e+1} - \bw_e}^2   
            & \leq \frac{5}{16L}\eta RK \E\norm{    \nabla  F_{\mask}(  \bw_{e } ) }^2\\
 & \quad +  \pare{324 \eta^2 RKL^2    \pare{48R^2\eta^4 K^5 L^2+432 R^4 \eta^6 K^7        L^4 } + 432 L^2 \eta^4 R^3 K^4 } G  { \log(2RK/\nu)}\\ 
       & \quad +  27 \eta^2 RKL^2 \pare{ 216     R^3 \eta^4 K^5 L^2 \frac{\delta^2}{N}+ 6\eta^2 R K^3 \delta^2 }  + {27 }    \eta^2 R^2 K^2 \frac{\delta^2}{N}\\ 
       & \quad + 1296 \eta^4 R^2 K^4 L^2         \zeta^2. 
\end{align*}
Putting pieces together yields:
     \begin{align*}
     &  \E [ F_{\mask}(\bw_{e+1}) ]  \\
        &\leq \E [ F_{\mask}(\bw_e)] - \pare{\frac{3}{16}\eta RK - 3   L\eta^2 R^2 K^2  - 9 \pare{ 12 \pare{ 6R^4\eta^5 K^6 L^2+  18R^6  \eta^7 K^8    L^4  } 
 + 48\eta^3R^2 K^4}} \E \norm{ \nabla  F_{\mask}(\bw_e)}^2 \\
        &\quad   +   128  {L}^2 \eta^{3 } K^{3 } R^2    {G   {  \log(2RK/\nu)} }   \\
        &\quad +  9    \cdot 48\eta^3 R^2 K^4   \zeta  +   108    \pare{48R^3\eta^5 K^6 L^2+432 R^5 \eta^7 K^8        L^4 } G  { \log(2RK/\nu)}  \\ 
            &\quad   +   9    \pare{216     R^4 \eta^5 K^6 L^2 \frac{\delta^2}{N}+ 6\eta^3 R^2 K^4 \delta^2} \\ 
 & \quad +  \frac{L}{2}\pare{324 \eta^2 RKL^2    \pare{48R^2\eta^4 K^5 L^2+432 R^4 \eta^6 K^7        L^4 } + 432 L^2 \eta^4 R^3 K^4 } G  { \log(2RK/\nu)}\\ 
       & \quad +  \frac{L}{2} 27 \eta^2 RKL^2 \pare{ 216     R^3 \eta^4 K^5 L^2 \frac{\delta^2}{N}+ 6\eta^2 R K^3 \delta^2 }  + {27 }    \eta^2 R^2 K^2 \frac{\delta^2}{N}\\ 
       & \quad +   648 \eta^4 R^2 K^4 L^3         \zeta^2. 
    \end{align*}
 Since $\eta \leq  {\frac{1}{c\cdot L\sqrt{ RKT}}}$ for some sufficiently large $c$, we know
    \begin{align*}
       3\eta^2 R^2 K^2 L \leq \frac{1}{64} \eta RK,\\
       648\eta^5 R^4 K^6 L^2 \leq \frac{1}{64} \eta RK,\\
       162 \eta^7 R^6 K^8 L^4 \leq \frac{1}{64} \eta RK,\\
       432 \eta^3 R^2 K^4 \leq \frac{1}{64} \eta RK.
    \end{align*}
   Hence we have
         \begin{align*}
  \frac{1}{T}\sum_{e=1}^T  \E \norm{ \nabla  F_{\mask}(\bw_e)}^2   \
        &\leq O\pare{\frac{\E [ F_{\mask}(\bw_0)]}{\eta RK T} }    + O\pare{      {L}^2 \eta^{2 } K^{2 } R     {G   {  \log(2RK/\nu)} } +  \eta^2 R  K^3   \zeta^2 +    \eta  R  K  \frac{\delta^2}{N} } 
    \end{align*}
    Since we choose $\eta = \Theta\pare{\frac{1}{L\sqrt{ RKT}}}$ we have
      \begin{align*}
  \frac{1}{T}\sum_{e=1}^T  \E \norm{ \nabla  F_{\mask}(\bw_e)}^2   \
        &\leq O\pare{\frac{L\E [ F_{\mask}(\bw_0)]}{\sqrt{ RK T}}      +  \frac{KG  \log(2RK/\nu)}{ T}        +    \frac{K^2   \zeta^2}{  TL^2}    +      \frac{\delta^2}{N\sqrt{ RKT} L}   } .
    \end{align*}
   \end{proof}

\section{Stability of Masked Training Method}\label{app:stability_masked_training}
In this section, we will present the proof of Theorem~\ref{thm:stab random cvx}. We first introduce the following technical lemma.
\begin{lemma}\label{lem:model deviation cvx stability}
For Algorithm~\ref{algorithm: Masked FedAvg}, under the condition of Theorem~\ref{thm:randomly masked fedavg}, the following statement holds true:
\begin{align*}
    \E\norm{ \tilde \bw_{r,k}^i -  \bw_r  }^2  &  \leq  5K\pare{ 4\eta^2 K \E\norm{       \nabla  F_{\bp}(   \bw_{r } )  }^2    + 4\eta^2 K\zeta_{\max}^2 +  \eta^2 K  {\delta^2} } 
\end{align*}

\begin{proof}
According to local updating rule we have:
    \begin{align*}
&         \E\norm{ \tilde \bw_{r,k}^i -  \bw_r  }^2  \\
       &=  (1+\frac{1}{K-1})\E\norm{ \tilde \bw_{r,k-1}^i -  \bw_r  }^2 + K \E\norm{  \eta    \tilde \bg_{r,k-1}^i  }^2\\
      &  \leq (1+\frac{1}{K-1})\E\norm{ \tilde \bw_{r,k-1}^i -  \bw_r  }^2 + K \E\norm{  \eta  \mask_i^r\odot  \nabla   f_i(\mask_i^r \odot\tilde \bw_{r,k-1}^i)  }^2  +  \eta^2 K  {\delta^2} \\ 
       &  \leq (1+\frac{1}{K-1})\E\norm{ \tilde \bw_{r,k-1}^i -  \bw_r  }^2 + 2K \E\norm{  \eta    \mask_i^r \odot \nabla   f_i(\mask_i^r \odot  \bw_{r } )  }^2 + 2\eta^2 \tilde L^2 K \norm{  \tilde \bw_{r,k-1}^i  -    \bw_{r} }^2 +  \eta^2 K  {\delta^2} \\ 
       &  \leq (1+\frac{2}{K-1})\E\norm{ \tilde \bw_{r,k-1}^i -  \bw_r  }^2 + 4\eta^2 K \E\norm{       \nabla F_{\bp}(   \bw_{r } )  }^2    + 4\eta^2 K\zeta_{\max}^2 +  \eta^2 K  {\delta^2} \\ 
       &  \leq  \sum_{j=1}^{k} (1+\frac{2}{K-1})^{k-j}\pare{   4\eta^2 K \E\norm{       \nabla F_{\bp}(   \bw_{r } )  }^2    + 4\eta^2 K\zeta_{\max}^2 +  \eta^2 K  {\delta^2} }\\ 
       &  \leq  5K\pare{   4\eta^2 K \E\norm{       \nabla F_{\bp}(   \bw_{r } )  }^2    + 4\eta^2 K\zeta_{\max}^2 +  \eta^2 K  {\delta^2} }\\ 
    \end{align*}
    where the fourth step is due to $2\eta^2 \tilde L^2 K \leq \frac{1}{K-1}$,
    which conclude the proof. 
    \end{proof}

\end{lemma}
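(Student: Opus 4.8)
The plan is to derive a one-step recursion for the per-client deviation $\E\norm{\tilde\bw_{r,k}^i - \bw_r}^2$ and then unroll it, mirroring the argument behind Lemma~\ref{lem:model deviation random noncvx} but keeping the client index $i$ fixed instead of averaging over clients. First I would invoke the local update rule $\tilde\bw_{r,k}^i = \tilde\bw_{r,k-1}^i - \eta\,\tilde\bg_{r,k-1}^i$ together with the elementary inequality $\norm{\ba+\bb}^2 \leq (1+\tfrac{1}{K-1})\norm{\ba}^2 + K\norm{\bb}^2$ (Young's inequality with parameter $K-1$), which yields $\E\norm{\tilde\bw_{r,k}^i - \bw_r}^2 \leq (1+\tfrac{1}{K-1})\E\norm{\tilde\bw_{r,k-1}^i - \bw_r}^2 + \eta^2 K\,\E\norm{\tilde\bg_{r,k-1}^i}^2$.

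Next I would control the gradient term. Using the bounded-variance Assumption~\ref{assumption: bounded var} to peel off the stochastic noise contributes $\eta^2 K\delta^2$ and replaces $\tilde\bg_{r,k-1}^i$ by the full masked gradient $\mask_i^r\odot\nabla f_i(\mask_i^r\odot\tilde\bw_{r,k-1}^i)$. I would then add and subtract the masked gradient evaluated at $\bw_r$: the smoothness of the masked objective (effective constant $\tilde L = \max_i p_i L$) converts the difference into a term bounded by $2\eta^2\tilde L^2 K\norm{\tilde\bw_{r,k-1}^i - \bw_r}^2$, while a further splitting around $\nabla F_\bp(\bw_r)$ produces $4\eta^2 K\norm{\nabla F_\bp(\bw_r)}^2$ plus a dissimilarity contribution bounded by $4\eta^2 K\zeta_{\max}^2$, where $\zeta_{\max}^2$ is the pointwise (single-client) masked gradient dissimilarity.

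Absorbing the smoothness term is the crux: it requires the step-size condition $2\eta^2\tilde L^2 K \leq \tfrac{1}{K-1}$ (guaranteed by $\eta=\Theta(1/(L\sqrt{KR}))$ under the conditions of Theorem~\ref{thm:randomly masked fedavg}), which upgrades the expansion factor to $(1+\tfrac{2}{K-1})$ and gives the clean recursion $\E\norm{\tilde\bw_{r,k}^i - \bw_r}^2 \leq (1+\tfrac{2}{K-1})\E\norm{\tilde\bw_{r,k-1}^i - \bw_r}^2 + 4\eta^2 K\norm{\nabla F_\bp(\bw_r)}^2 + 4\eta^2 K\zeta_{\max}^2 + \eta^2 K\delta^2$. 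Unrolling from $k$ down to $0$, using $\tilde\bw_{r,0}^i = \bw_r$ so the boundary term vanishes, and bounding the geometric sum $\sum_{j=1}^{k}(1+\tfrac{2}{K-1})^{k-j} \leq 5K$ for $k\leq K$ (since $(1+\tfrac{2}{K-1})^{k}\leq e^2$) then delivers the claimed bound.

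I expect the main obstacle to be bookkeeping rather than anything conceptual: one must verify that the smoothness-induced deviation term is genuinely dominated by the recursion coefficient, which is exactly where the step-size restriction enters, and one must be careful that the relevant constants are those of the \emph{masked} gradient (effective smoothness $\tilde L$ and variance $\delta^2$) rather than the raw gradient. Because the statement is for a fixed client $i$, no averaging over $i$ is available, so the dissimilarity must be taken in its worst-case form $\zeta_{\max}^2$; otherwise the argument is identical to the averaged version.
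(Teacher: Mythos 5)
Your proposal matches the paper's proof step for step: the same Young's-inequality split with parameter $K-1$, the same variance peeling via Assumption~\ref{assumption: bounded var}, the same smoothness decomposition around $\bw_r$ with constant $\tilde L$, the same absorption of the $2\eta^2\tilde L^2 K$ term into the recursion coefficient via the step-size condition, the same splitting around $\nabla F_\bp(\bw_r)$ yielding the worst-case dissimilarity $\zeta_{\max}^2$, and the same unrolling with the geometric sum bounded by $5K$. The argument is correct and is essentially the paper's own proof.
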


\subsection{Proof of Theorem~\ref{thm:stab random cvx}}
Recall the output of Algorithm~\ref{algorithm: Masked FedAvg}:
\begin{align*}
    &\E \norm{\hat\bw - \hat\bw'} \\
    &= \E\norm{\cP_{\cW}\pare{ \bw_R - (1/L) \frac{1}{N}\sum_{i=1}^N \mask_i \odot \nabla f_i(\mask_i\odot\bw_R ) } - \cP_{\cW}\pare{ \bw'_R - (1/L) \frac{1}{N}\sum_{i=1}^N \mask_i \odot \nabla f_i(\mask_i\odot\bw'_R ) }  }\\
    &\leq \E\norm{  \bw_R -   \bw'_R - (1/L) \frac{1}{N}\sum_{i=1}^N \mask_i \odot \nabla f_i(\mask_i\odot\bw_R )   + (1/L) \frac{1}{N}\sum_{i=1}^N \mask_i \odot \nabla f_i(\mask_i\odot\bw'_R )   }\\
    &\leq 2\E\norm{  \bw_R -   \bw'_R }.
\end{align*}

Hence it suffices to bound $\E\norm{  \bw_R -   \bw'_R }$.

For the client $i$ with perturbed data, by the updating rule we know with probability $1-\frac{1}{n}$:
\begin{align*}
  \E\norm{\bw^i_{r,k+1 }  - {\bw'}^i_{r,k+1 } }  & \leq   \E \norm{\bw^i_{r,k }  - {\bw'}^i_{r,k }  - \eta   \pare{\mask_i^r \odot \nabla f_i(\mask_i^r\odot\bw_{r,k }^i;\xi_{r,k }^i)  -  \mask_i^r \odot \nabla f_i(\mask_i^r\odot {\bw'}_{r,k }^i ;\xi_{r,k }^i) } }   \\ 
     & \leq  \E \norm{\bw_{r,k }  - \bw'_{r,k }      } .  
\end{align*}
With probability $\frac{1}{n}$, we have
\begin{align*}
  \E\norm{\bw^i_{r,k+1 }  - {\bw'}^i_{r,k+1 } }  & \leq   \E \norm{\bw^i_{r,k }  - {\bw'}^i_{r,k }  - \eta   \pare{\mask_i^r \odot \nabla f_i(\mask_i^r\odot\bw_{r,k }^i;\xi_{r,k }^i)  -  \mask_i^r \odot \nabla f_i(\mask_i^r\odot {\bw'}_{r,k }^i ;{\xi'}_{r,k }^i) } }   \\ 
  & \leq   \E \norm{\bw^i_{r,k }  - {\bw'}^i_{r,k }  - \eta   \pare{\mask_i^r \odot \nabla f_i(\mask_i^r\odot\bw_{r,k }^i;\xi_{r,k }^i)  -  \mask_i^r \odot \nabla f_i(\mask_i^r\odot {\bw'}_{r,k }^i ;{\xi}_{r,k }^i) } } \\
  &\quad + \norm{\eta   \pare{\mask_i^r \odot \nabla f_i(\mask_i^r\odot{\bw'}_{r,k }^i;\xi_{r,k }^i)  -  \mask_i^r \odot \nabla f_i(\mask_i^r\odot {\bw'}_{r,k }^i ;{\xi'}_{r,k }^i) }} \\ 
     & \leq   \E \norm{\bw^i_{r,k }  - {\bw'}^i_{r,k }     } + 
    \eta \E\norm{    \mask_i^r \odot \nabla f_i(\mask_i^r\odot{\bw'}_{r,k }^i;\xi_{r,k }^i) } + \eta\E\norm{ \mask_i^r \odot \nabla f_i(\mask_i^r\odot {\bw'}_{r,k }^i ;{\xi'}_{r,k }^i) } \\
    &\leq  \E \norm{\bw^i_{r,k }  - {\bw'}^i_{r,k }      } + 
    \eta \E\norm{      \nabla  F_{\bp}( {\bw}_{r }  ) }  + \eta\E\norm{  \nabla  F_{\bp}( {\bw'}_{r }  ) } \\
    &\quad + 2\eta \norm{ {\bw'}_{r } -  {\bw'}_{r,k }^i } + 2\eta \delta     + 2\eta \zeta_i   
\end{align*}
For $j \neq i$, we have $\E\norm{\bw^j_{r+1 }  - {\bw'}^j_{r+1 } }  \leq  \E\norm{\bw^j_{r,K-1 }  - {\bw'}^j_{r,K-1 } } \leq \E\norm{\bw^j_{r }  - {\bw'}^j_{r } }$.
Combining two cases we have
\begin{align*}
   \frac{1}{N}\sum_{j=1}^N\E\norm{\bw^j_{r,k+1 }  - {\bw'}^j_{r,k+1 } }  
      &\leq  \frac{1}{N}\sum_{j=1}^N \E \norm{\bw^j_{r,k }  - {\bw'}^j_{r,k }     }  + \frac{2\eta \delta}{Nn}+ \frac{2\eta \zeta_i}{Nn} + \frac{1}{Nn}2\eta \norm{ {\bw'}_{r } -  {\bw'}_{r,k }^i }  \\
      &\quad + \frac{1}{Nn}
    \eta \E\norm{      \nabla  F_{\bp}( {\bw}_{r }  ) }  + \frac{1}{Nn}\eta\E\norm{  \nabla F_{\bp} ( {\bw'}_{r }  ) }  .
\end{align*}
Performing telescoping sum from $k=K-1$ to $0$ yields:
\begin{align*}
  \E\norm{\bw_{r+1 }  - {\bw'}_{r+1 } }   
     & \leq  \E \norm{\bw_{r}  - {\bw'}_{r}     }  + \frac{2\eta K\delta}{Nn}+ \frac{2\eta K\zeta_{\max}}{Nn}+ \frac{2\eta}{Nn} \sum_{k=1}^K\E\norm{ {\bw'}_{r } -  {\bw'}_{r,k }^i }  \\
      &\quad + \frac{1}{Nn}
    \eta K\E\norm{      \nabla F_{\bp}( {\bw}_{r }  ) }  + \frac{1}{Nn}\eta K\E\norm{  \nabla F_{\bp}( {\bw'}_{r }  ) }  .
\end{align*}
Performing telescoping sum from $r=R-1$ to $0$, and using the fact $\bw_0 = \bw'_0$ yields:

\begin{align*}
  \E\norm{\bw_{R }  - {\bw'}_{R } }   
     & \leq    \frac{2\eta RK\delta}{Nn}+ \frac{2\eta RK\zeta_{\max}}{Nn}+ \frac{2\eta}{Nn} \sum_{r=1}^R\sum_{k=1}^K\E\norm{ {\bw'}_{r } -  {\bw'}_{r,k }^i }  \\
      &\quad + \frac{1}{Nn}
    \eta K\sum_{r=1}^R\E\norm{      \nabla F_{\bp}( {\bw}_{r }  ) }  + \frac{1}{Nn}\eta K\sum_{r=1}^R\E\norm{  \nabla F_{\bp} ( {\bw'}_{r }  ) }  \\
    & \leq    \frac{2\eta RK\delta}{Nn}+ \frac{2\eta RK\zeta_{\max}}{Nn}+ \frac{2\eta}{Nn} \sum_{r=1}^R\sum_{k=1}^K\E\norm{ {\bw'}_{r } -  {\bw'}_{r,k }^i }  \\
      &\quad + \frac{1}{Nn}
    \eta RK\sqrt{\frac{1}{R}\sum_{r=1}^R\E\norm{      \nabla F_{\bp}( {\bw}_{r }  ) }^2 } + \frac{1}{Nn}\eta RK\sqrt{\frac{1}{R}\sum_{r=1}^R\E\norm{  \nabla F_{\bp}( {\bw'}_{r }  ) }^2 }.
\end{align*}

To bound $\E\norm{ \bw_{r,k }^i   -  {\bw}_{r,k }    } $, evoking Lemma~\ref{lem:model deviation cvx stability} gives:
\begin{align*}
     \E\norm{ \bw_{r,k }^i   -  {\bw}_{r,k }    } &\leq     \sqrt{\E\norm{ \bw_{r,k }^i   -  {\bw}_{r,k }    }^2 }   \leq  \sqrt{ 5K\pare{ 4\eta^2 K \E\norm{       \nabla F_{\bp}(   \bw_{r } )  }^2    + 4\eta^2 K\zeta_{\max}^2 +  \eta^2 K  {\delta^2} }  }.
\end{align*}

Hence we have
\begin{align}
  \E\norm{\bw_{R }  - {\bw'}_{R } }    
    & \leq    \frac{2\eta RK\delta}{Nn}+ \frac{2\eta RK\zeta_i}{Nn}+ \frac{2\eta}{Nn} \sum_{r=1}^R\sum_{k=1}^K  \sqrt{ 5K\pare{ 4\eta^2 K \E\norm{       \nabla F_{\bp}(   \bw_{r } )  }^2    + 4\eta^2 K\zeta_{\max}^2 +  \eta^2 K  {\delta^2} }  } \nonumber  \\
      &\quad + \frac{1}{Nn}
    \eta RK\sqrt{\frac{1}{R}\sum_{r=1}^R\E\norm{      \nabla F_{\bp}( {\bw}_{r }  ) }^2 } + \frac{1}{Nn}\eta RK\sqrt{\frac{1}{R}\sum_{r=1}^R\E\norm{  \nabla F_{\bp}( {\bw'}_{r }  ) }^2 }.\label{eq:stab 0}
\end{align}

Now we will bound the gradient norm $\E\norm{ \nabla F_{\bp}(   \bw_{r } )  }^2 $.
Due to Eq.(\ref{eq:point convergence}) we have

 \begin{align*}
       \E\norm{ \bw_{r+1} - \bw^* }^2   
           & \leq   \E \norm{\bw_r  - \bw^*}^2 - \frac{1}{8}\eta  K     \pare{   F_{\bp}(  \bw_r ) -   F_{\bp}(  \bw^*)  }   \\
         &\quad + 2 \eta \tilde L\cdot 5K^2\pare{  4\eta^2 K   \sigma_{*}^2  +  \eta^2 K  {\delta^2} } + \frac{K \eta^2 \delta^2 }{N}.
    \end{align*}
    Due to the fact $\norm{\nabla F_\bp(\bw_r)  }^2 \leq 2L\pare{   F_{\bp}(  \bw_r ) -   F_{\bp}(  \bw^*)  }$ we have
 \begin{align*}
       \E\norm{ \bw_{r+1} - \bw^* }^2   
           & \leq   \E \norm{\bw_r  - \bw^*}^2 - \frac{1}{16L}\eta  K      \E\norm{\nabla F_{\bp}(\bw_r)  }^2  \\
         &\quad + 2 \eta \tilde L\cdot 5K^2\pare{  4\eta^2 K \sigma_{*}^2  +  \eta^2 K  {\delta^2} } + \frac{K \eta^2 \delta^2 }{N}.
    \end{align*}
Summing over $r=1$ to $R$  and dividing both sides by $R$ yields:
 \begin{align*}
      \frac{1}{16R\tilde L}\eta  K   \sum_{r=1}^R   \E\norm{\nabla  F_{\bp}(\bw_r)  }^2     
           & \leq \frac{\E \norm{\bw_1  - \bw^*}^2}{R}    + 2 \eta \tilde L\cdot 5K^2\pare{  4\eta^2 K  \sigma_{*}^2  +  \eta^2 K  {\delta^2} } + \frac{K \eta^2 \delta^2 }{N}\\
            \Longleftrightarrow \frac{1}{ R  }  \sum_{r=1}^R   \E\norm{\nabla  F_{\bp}(\bw_r)  }^2     
           & \leq \frac{16\tilde L\E \norm{\bw_1  - \bw^*}^2}{\eta KR}    +32   \tilde L^2\cdot 5K \pare{  4\eta^2 K  \sigma_*^2  +  \eta^2 K  {\delta^2} } + \frac{ 32L \eta  \delta^2 }{N}.
    \end{align*}
    
Plugging above bound back to (\ref{eq:stab 0}) yields:
\begin{align*}
 & \E\norm{\bw_{R }  - {\bw'}_{R } } \\   
     & \leq    \frac{2\eta RK\delta}{Nn}+ \frac{2\eta RK\zeta_{\max}}{Nn}+ \frac{2\eta}{Nn} \sum_{r=1}^R K\sqrt{ 5K\pare{ 4\eta^2 K \E\norm{       \nabla  F_{\bp}(   \bw_{r } )  }^2    + 4\eta^2 K\zeta_{\max}^2 +  \eta^2 K  {\delta^2} }  }  \\
      &\quad + 2\frac{1}{Nn}
    \eta RK \sqrt{ \frac{16\tilde L\E \norm{\bw_1  - \bw^*}^2}{\eta KR}    +32   \tilde L^2\cdot 5K \pare{  4\eta^2 K  \sigma_*^2  +  \eta^2 K  {\delta^2} } + \frac{ 32L \eta  \delta^2 }{N}  } \\
    & \leq   O\pare{  \frac{ \eta RK(\delta+\zeta_{\max})}{Nn} + \frac{ RK\eta}{Nn} \sqrt{    \eta^2 K^2 \pare{\frac{ \tilde L }{\eta KR}    +   \tilde L^2  K \pare{   \eta^2 K  \sigma_*^2  +  \eta^2 K  {\delta^2} } + \frac{  L \eta  \delta^2 }{N}}    +  \eta^2 K^2(\zeta_{\max}^2 +   {\delta^2}) }  }  \\
      &\quad +  \frac{\eta RK}{Nn}
     O\pare{ \sqrt{ \frac{ \tilde L }{\eta KR}    + \tilde L^2 K \pare{   \eta^2 K  \sigma_*^2  +  \eta^2 K  {\delta^2} } + \frac{ L \eta  \delta^2 }{N}  }}.
\end{align*}
Choosing $\eta = \frac{\sqrt{Nn}}{RK}$ and $\frac{R}{\sqrt{Nn}} \geq \tilde L $ will conclude the proof:
\begin{align*}
  \E\norm{\bw_{R }  - {\bw'}_{R } }    
    & \leq   O\pare{  \frac{   \delta+\zeta_{\max} }{\sqrt{Nn}} +  \frac{1}{\sqrt{Nn}}
      \pare{ \sqrt{ \frac{ \tilde L }{\sqrt{Nn}}    +     \tilde L^2  \frac{ {Nn}}{R^2 }    \sigma_*^2  +  \tilde L^2 \frac{ {Nn}}{R^2 }    {\delta^2}   + \frac{\sqrt{Nn}}{RK}\frac{ L   \delta^2 }{N}  }} }\\  
      & \leq   O\pare{  \frac{   \delta+\zeta_{\max} }{\sqrt{Nn}} +  \frac{1}{\sqrt{Nn}}
      \sqrt{ \frac{ \tilde L }{\sqrt{Nn}}    +          \sigma_*^2  +    {\delta^2}   +  \frac{     \delta^2 }{KN}  }  }.   
\end{align*}
Finally, applying the fact that $\zeta_{\max} \leq GD_{\max}$ (See~\cite{sun2024understanding}, Lemma 1) will conclude the proof.

\subsection{Proof of Theorem~\ref{thm:stab rolling cvx}}
Recall the output of Algorithm~\ref{algorithm: Rolling Masked FedAvg}:
\begin{align*}
    &\E \norm{\hat\bw - \hat\bw'} = \E\left\|\cP_{\cW}\pare{ \bw_{T } - \frac{1}{L} \frac{1}{N}\sum_{i=1}^N \frac{1}{R}\sum_{r=1}^R\mask_i^{\sigma_e(r)} \odot \nabla f_i(\mask_i^{\sigma_e(r)}\odot\bw_{T } ) } \right. \\
    &\qquad  \qquad \qquad \left.- \cP_{\cW}\pare{ \bw'_{T } - \frac{1}{L} \frac{1}{N}\sum_{i=1}^N \frac{1}{R}\sum_{r=1}^R\mask_i^{\sigma_e(r)} \odot \nabla f_i(\mask_i^{\sigma_e(r)}\odot\bw'_{T } ) } \right\|\\
    &\qquad  \qquad \qquad \leq 2\E\norm{  \bw_T -   \bw'_T }.
\end{align*}

Hence it suffices to bound $\E\norm{  \bw_T -   \bw'_T }$.

For the client $i$ with perturbed data, by the updating rule and convexity of $f_i$ we know with probability $1-\frac{1}{n}$:
\begin{align*}
  &\E\norm{\bw^i_{e,r,k+1 }  - {\bw'}^i_{e,r,k+1 } }  \\
  & \leq   \E \norm{\bw^i_{e,r,k }  - {\bw'}^i_{e,r,k }  - \eta   \pare{\mask_i^{\sigma_e(r)} \odot \nabla f_i(\mask_i^{\sigma_e(r)} \odot\bw_{e,r,k }^i;\xi_{e,r,k }^i)  -  \mask_i^{\sigma_e(r)} \odot \nabla f_i(\mask_i^{\sigma_e(r)}\odot {\bw'}_{e,r,k }^i ;\xi_{e,r,k }^i) } }   \\ 
     & \leq  \E \norm{\bw_{e,r,k }  - \bw'_{e,r,k }      } .  
\end{align*}
With probability $\frac{1}{n}$, we have
\begin{align*}
 & \E\norm{\bw^i_{e,r,k+1 }  - {\bw'}^i_{e,r,k+1 } }  \\
  & \leq   \E \norm{\bw^i_{e,r,k }  - {\bw'}^i_{e,r,k }  - \eta   \pare{\mask_i^{\sigma_e(r)} \odot \nabla f_i(\mask_i^{\sigma_e(r)}\odot\bw_{e,r,k }^i;\xi_{e,r,k }^i)  -  \mask_i^{\sigma_e(r)} \odot \nabla f_i(\mask_i^{\sigma_e(r)}\odot {\bw'}_{e,r,k }^i ;{\xi'}_{e,r,k }^i) } }   \\ 
    & \leq   \E \norm{\bw^i_{e,r,k }  - {\bw'}^i_{e,r,k }}   + \eta  \norm{ \mask_i^{\sigma_e(r)} \odot \nabla f_i(\mask_i^{\sigma_e(r)}\odot\bw_{e,r,k }^i;\xi_{e,r,k }^i)  }+ \eta \norm{  \mask_i^{\sigma_e(r)} \odot \nabla f_i(\mask_i^{\sigma_e(r)}\odot {\bw'}_{e,r,k }^i ;{\xi'}_{e,r,k }^i) }     \\ 
    & \leq   \E \norm{\bw^i_{e,r,k }  - {\bw'}^i_{e,r,k }} + 2\eta \delta   + \eta  \norm{  \mask_i^{\sigma_e(r)} \odot \nabla f_i(\mask_i^{\sigma_e(r)}\odot\bw_{e,r,k }^i ) }+ \eta \norm{  \mask_i^{\sigma_e(r)} \odot \nabla f'_i(\mask_i^{\sigma_e(r)}\odot {\bw'}_{e,r,k }^i ) }     \\ 
     & \leq   \E \norm{\bw^i_{e,r,k }  - {\bw'}^i_{e,r,k }} + 2\eta \delta   + \eta  \norm{   \nabla F_{\mask}( \bw_{e,r,k }^i ) }+ \eta \norm{ F_{\mask}'(  {\bw'}_{e,r,k }^i ) }    + \eta \zeta_i + \eta \zeta_i'  \\
    &\leq  \E \norm{\bw^i_{e,r,k }  - {\bw'}^i_{e,r,k }      } + 
    \eta \E\norm{      \nabla  F_{\mask}( {\bw}_{e }  ) }  + \eta\E\norm{  \nabla  F'_{\mask}( {\bw'}_{e}  ) }  + \eta L \norm{ {\bw}_{e} -  {\bw}_{e,r,k  }^i } + \eta L \norm{ {\bw'}_{e } -  {\bw'}_{e,r,k  }^i } \\
    &\quad + 2\eta \delta     + 2\eta G D_{\max}\\
    &\leq  \E \norm{\bw^i_{e,r,k }  - {\bw'}^i_{e,r,k }      } + 
    \eta \E\norm{      \nabla  F_{\mask}( {\bw}_{e }  ) }  + \eta\E\norm{  \nabla  F'_{\mask}( {\bw'}_{e}  ) }  + \eta L \norm{ {\bw}_{e,r} -  {\bw}_{e,r,k  }^i } + \eta L \norm{ {\bw'}_{e,r} -  {\bw'}_{e,r,k  }^i } \\
    &\quad + 2\eta \delta     + 2\eta G D_{\max}\\
    &\quad +   \eta L \norm{ {\bw}_{e} -  {\bw}_{e,r} } + \eta L \norm{ {\bw'}_{e } -  {\bw'}_{e,r} }.
\end{align*}

To bound $\E\norm{ {\bw}_{e} -  {\bw}_{e,r} } $ and $ \E\norm{ {\bw'}_{e } -  {\bw'}_{e,r} }$, we evoke \eqref{eq:w_er - w_e}:
  \begin{align}
       \E \norm{\bw_{e,r} - \bw_{e}}^2    & \leq  \pare{48r\eta^2 K^2+ 432 r^3 \eta^4 K^4        L^2 } G  { \log(2RK/\nu)}  + \pare{ 6r^2\eta^2 K^2+  18 r^4  \eta^4 K^4    L^2  }\norm{\nabla  F_{\mask}(\bw_e)}^2  \nonumber\\ 
            &\quad + 18\eta^2 r \sum_{p=0}^{r-1}  KL^2\sum_{k=0}^{K-1} \frac{1}{N}\sum_{i=1}^N  \E \norm{\bw^i_{e,p,k} - \bw_{e,p}   }^2+ 18 r^2 \eta^2 K^2 \frac{\delta^2}{N}. 
    \end{align}
    Hence we have
\begin{align*}
  \E\norm{\bw^i_{e,r,k+1 }  - {\bw'}^i_{e,r,k+1 } }    &\leq  \E \norm{\bw^i_{e,r,k }  - {\bw'}^i_{e,r,k }      } + 
   \pare{ \eta + \eta L  \sqrt{ 6r^2\eta^2 K^2+  18 r^4  \eta^4 K^4    L^2  } }\E\norm{      \nabla  F_{\mask}( {\bw}_{e }  ) }  \\
   & \quad +   \pare{ \eta + \eta L  \sqrt{ 6r^2\eta^2 K^2+  18 r^4  \eta^4 K^4    L^2  } }\E\norm{      \nabla  F'_{\mask}( {\bw'}_{e }  ) }  \\
   &\quad  + \eta L \sqrt{ 18\eta^2 r \sum_{p=0}^{r-1}  KL^2\sum_{k=0}^{K-1} \frac{1}{N}\sum_{i=1}^N  \E \norm{\bw^i_{e,p,k} - \bw_{e,p}   }^2 + 18 r^2 \eta^2 K^2 \frac{\delta^2}{N}} \\
   &\quad  + \eta L \sqrt{ 18\eta^2 r \sum_{p=0}^{r-1}  KL^2\sum_{k=0}^{K-1} \frac{1}{N}\sum_{i=1}^N  \E \norm{{\bw'}^i_{e,p,k} - \bw'_{e,p}   }^2 + 18 r^2 \eta^2 K^2 \frac{\delta^2}{N}} \\
   &\quad  + 4\eta L \sqrt{  \pare{48r\eta^2 K^2+ 432 r^3 \eta^4 K^4        L^2 } G  { \log(2RK/\nu)}    } \\
   &\quad + \eta L   \E\norm{ {\bw}_{e,r} -  {\bw}_{e,r,k  }^i } + \eta L  \E\norm{ {\bw'}_{e,r} -  {\bw'}_{e,r,k  }^i } + 2\eta \delta     + 2\eta G D_{\max} .
\end{align*}

For $j \neq i$, we have $\E\norm{\bw^j_{e,r,k+1 }  - {\bw'}^j_{e,r,k+1} }  \leq  \E\norm{\bw^j_{e,r,k }  - {\bw'}^j_{e,r,k} } $.
Combining two cases we have
\begin{align*}
    \frac{1}{N}\sum_{j=1}^N\E\norm{\bw^j_{e,r,k+1 }  - {\bw'}^j_{e,r,k+1 } }  &\leq  \frac{1}{N}\sum_{j=1}^N \E \norm{\bw^j_{e,r,k }  - {\bw'}^j_{e,r,k }     }  \nonumber\\
      & \   + \frac{1}{Nn}\pare{ \eta + \eta L  \sqrt{ 6r^2\eta^2 K^2+  18 r^4  \eta^4 K^4    L^2  } }\pare{\E\norm{      \nabla  F_{\mask}( {\bw}_{e }  ) }  + \E\norm{\nabla  F'_{\mask}( {\bw'}_{e }  )} }\\
       & \   + \frac{1}{Nn} \eta L \sqrt{ 18\eta^2 r \sum_{p=0}^{r-1}  KL^2\sum_{k=0}^{K-1} \frac{1}{N}\sum_{i=1}^N  \E \norm{\bw^i_{e,p,k} - \bw_{e,p}   }^2 + 18 r^2 \eta^2 K^2 \frac{\delta^2}{N}} \\
   & \ +  \frac{1}{Nn}\eta L \sqrt{ 18\eta^2 r \sum_{p=0}^{r-1}  KL^2\sum_{k=0}^{K-1} \frac{1}{N}\sum_{i=1}^N  \E \norm{{\bw'}^i_{e,p,k} - \bw'_{e,p}   }^2 + 18 r^2 \eta^2 K^2 \frac{\delta^2}{N}} \\
   & \   + 4 \frac{1}{Nn}\eta L \sqrt{  \pare{48r\eta^2 K^2+ 432 r^3 \eta^4 K^4        L^2 } G  { \log(2RK/\nu)}    } \\
   & \  +  \frac{1}{Nn} \eta L   \E\norm{ {\bw}_{e,r} -  {\bw}_{e,r,k  }^i } +  \frac{1}{Nn}\eta L  \E\norm{ {\bw'}_{e,r} -  {\bw'}_{e,r,k  }^i } + 2 \frac{1}{Nn}\eta \delta     + 2 \frac{1}{Nn}\eta G D_{\max} .
\end{align*}
Performing telescoping sum from $k=K-1$ to $0$ yields:
\begin{align*}
   \E\norm{\bw_{e,r+1}  - {\bw'}_{e,r+1 } }  
      &\leq   \E \norm{\bw_{e,r }  - {\bw'}_{e,r }     }  \nonumber\\
      &\  + \frac{1}{Nn}K \pare{ \eta + \eta L  \sqrt{ 6r^2\eta^2 K^2+  18 r^4  \eta^4 K^4    L^2  } }\pare{\E\norm{      \nabla  F_{\mask}( {\bw}_{e }  ) }  + \E\norm{\nabla  F'_{\mask}( {\bw'}_{e }  )} }\\
       &\   + \frac{1}{Nn} K\eta L \sqrt{ 18\eta^2 r \sum_{p=0}^{r-1}  KL^2\sum_{k=0}^{K-1} \frac{1}{N}\sum_{i=1}^N  \E \norm{\bw^i_{e,p,k} - \bw_{e,p}   }^2 + 18 r^2 \eta^2 K^2 \frac{\delta^2}{N}} \\
   &\   +  \frac{1}{Nn}K\eta L \sqrt{ 18\eta^2 r \sum_{p=0}^{r-1}  KL^2\sum_{k=0}^{K-1} \frac{1}{N}\sum_{i=1}^N  \E \norm{{\bw'}^i_{e,p,k} - \bw'_{e,p}   }^2 + 18 r^2 \eta^2 K^2 \frac{\delta^2}{N}} \\
   &\   + 4 \frac{1}{Nn}K\eta L \sqrt{  \pare{48r\eta^2 K^2+ 432 r^3 \eta^4 K^4        L^2 } G  { \log(2RK/\nu)}    } \\
   &\  +  \frac{1}{Nn}  \eta L \sum_{k=0}^{K-1}  \E\norm{ {\bw}_{e,r} -  {\bw}_{e,r,k  }^i } +  \frac{1}{Nn}\eta L  \sum_{k=0}^{K-1} \E\norm{ {\bw'}_{e,r} -  {\bw'}_{e,r,k  }^i } + 2 \frac{K}{Nn}\eta \delta     + 2 \frac{K}{Nn}\eta G D_{\max} .
\end{align*}
Performing telescoping sum from $r=R-1$ to $0$, and using the fact $\bw_0 = \bw'_0$ yields:

\begin{align*}
   \E\norm{\bw_{e+1}  - {\bw'}_{e+1} }  
      &\leq    \E\norm{\bw_{e}  - {\bw'}_{e} }  \\
      & \ +   \frac{1}{Nn}RK \pare{ \eta + \eta L  \sqrt{ 6R^2\eta^2 K^2+  18 R^4  \eta^4 K^4    L^2  } } \pare{\E\norm{      \nabla  F_{\mask}( {\bw}_{e }  ) }  + \E\norm{\nabla  F'_{\mask}( {\bw'}_{e }  )} }\\
       &\  + \frac{1}{Nn} RK\eta L  \sqrt{ 18\eta^2 RKL^2 \sum_{p=0}^{R-1}  \sum_{k=0}^{K-1} \frac{1}{N}\sum_{i=1}^N  \E \norm{\bw^i_{e,p,k} - \bw_{e,p}   }^2 + 18 R^2 \eta^2 K^2 \frac{\delta^2}{N}} \\
   &\  +  \frac{1}{Nn}RK\eta L  \sqrt{ 18\eta^2 RKL^2\sum_{p=0}^{R-1}  \sum_{k=0}^{K-1} \frac{1}{N}\sum_{i=1}^N  \E \norm{{\bw'}^i_{e,p,k} - \bw'_{e,p}   }^2 + 18 R^2 \eta^2 K^2 \frac{\delta^2}{N}} \\
   &\  + 4 \frac{1}{Nn}RK\eta L \sqrt{  \pare{48R\eta^2 K^2+ 432 R^3 \eta^4 K^4        L^2 } G  { \log(2RK/\nu)}    } \\
   &\ +  \frac{1}{Nn}  \eta L\sum_{r=0}^{R-1} \sum_{k=0}^{K-1}  \E\norm{ {\bw}_{e,r} -  {\bw}_{e,r,k  }^i } +  \frac{1}{Nn}\eta L  \sum_{r=0}^{R-1}\sum_{k=0}^{K-1} \E\norm{ {\bw'}_{e,r} -  {\bw'}_{e,r,k  }^i } \\
   &\ + 2 \frac{RK}{Nn}\eta \delta     + 2 \frac{RK}{Nn}\eta G D_{\max} .
\end{align*}

Note the fact that
\begin{align*}
    \sum_{r=0}^{R-1} \sum_{k=0}^{K-1}  \E\norm{ {\bw}_{e,r} -  {\bw}_{e,r,k  }^i } &= RK\cdot \frac{1}{RK}\sum_{r=0}^{R-1} \sum_{k=0}^{K-1}  \E\sqrt{\norm{ {\bw}_{e,r} -  {\bw}_{e,r,k  }^i }^2} \\
    &\leq RK\cdot \sqrt{ \frac{1}{RK}\sum_{r=0}^{R-1} \sum_{k=0}^{K-1}  \E\norm{ {\bw}_{e,r} -  {\bw}_{e,r,k  }^i }^2}, 
\end{align*}
where we apply the Jensen's inequality and concavity of square root function. Hence we have
\begin{align*}
  & \E\norm{\bw_{e+1}  - {\bw'}_{e+1} }  
      \leq    \E\norm{\bw_{e}  - {\bw'}_{e} } \\
      &\ +   \frac{1}{Nn}RK \pare{ \eta + \eta L  \sqrt{ 6R^2\eta^2 K^2+  18 R^4  \eta^4 K^4    L^2  } } \pare{\E\norm{      \nabla  F_{\mask}( {\bw}_{e }  ) }  + \E\norm{\nabla  F'_{\mask}( {\bw'}_{e }  )} }\\
       &\  + \frac{1}{Nn} RK\eta L  \pare{\sqrt{ 18\eta^2 RKL^2 \sum_{p=0}^{R-1}  \sum_{k=0}^{K-1} \frac{1}{N}\sum_{i=1}^N  \E \norm{\bw^i_{e,p,k} - \bw_{e,p}   }^2 }+\sqrt{ 18 R^2 \eta^2 K^2 \frac{\delta^2}{N}}} \\
   &\  +  \frac{1}{Nn}RK\eta L \pare{ \sqrt{ 18\eta^2 RKL^2\sum_{p=0}^{R-1}  \sum_{k=0}^{K-1} \frac{1}{N}\sum_{i=1}^N  \E \norm{{\bw'}^i_{e,p,k} - \bw'_{e,p}   }^2 }+ \sqrt{ 18 R^2 \eta^2 K^2 \frac{\delta^2}{N}}} \\
   &\  + 4 \frac{1}{Nn}RK\eta L \sqrt{  \pare{48R\eta^2 K^2+ 432 R^3 \eta^4 K^4        L^2 } G  { \log(2RK/\nu)}    } \\
   &\  +  \frac{1}{Nn} RK \eta L \sqrt{\frac{1}{RK}\sum_{r=0}^{R-1} \sum_{k=0}^{K-1}  \E\norm{ {\bw}_{e,r} -  {\bw}_{e,r,k  }^i }^2} +  \frac{1}{Nn}RK\eta L  \sqrt{\frac{1}{RK}\sum_{r=0}^{R-1}\sum_{k=0}^{K-1} \E\norm{ {\bw'}_{e,r} -  {\bw'}_{e,r,k  }^i }^2 } \\
   &\  + 2 \frac{RK}{Nn}\eta \delta     + 2 \frac{RK}{Nn}\eta G D_{\max} \\
   &=  \E\norm{\bw_{e}  - {\bw'}_{e} }  +   \frac{1}{Nn}RK \pare{ \eta + \eta L  \sqrt{ 6R^2\eta^2 K^2+  18 R^4  \eta^4 K^4    L^2  } } \pare{\E\norm{      \nabla  F_{\mask}( {\bw}_{e }  ) }  + \E\norm{\nabla  F'_{\mask}( {\bw'}_{e }  )} }\\
   &\  + \frac{RK \eta L }{Nn}\pare{ 1+   \sqrt{18\eta^2 RKL^2}  }\pare{\sqrt{\frac{1}{RK}\sum_{r=0}^{R-1} \sum_{k=0}^{K-1}  \E\norm{ {\bw}_{e,r} -  {\bw}_{e,r,k  }^i }^2} + \sqrt{\frac{1}{RK}\sum_{r=0}^{R-1}\sum_{k=0}^{K-1} \E\norm{ {\bw'}_{e,r} -  {\bw'}_{e,r,k  }^i }^2 } } \\ 
   &\  +   \frac{2 RK\eta L}{Nn}    \sqrt{ 18 R^2 \eta^2 K^2 \frac{\delta^2}{N}}+ 2 \frac{RK}{Nn}\eta (\delta     +   G D_{\max})  +  \frac{4RK\eta L}{Nn} \sqrt{  \pare{48R\eta^2 K^2+ 432 R^3 \eta^4 K^4        L^2 } G  { \log(2RK/\nu)}    } .
\end{align*}

Now we plug in Lemma~\ref{lem:model deviation}:

\begin{align*}
 &  \E\norm{\bw_{e+1}  - {\bw'}_{e+1} }  
      \leq     \E\norm{\bw_{e}  - {\bw'}_{e} }\\
      & +   \frac{ \eta RK}{Nn} \pare{ 1 +   L  \sqrt{  R^2\eta^2 K^2+   R^4  \eta^4 K^4    L^2  }  +    L  \pare{ 1+   \sqrt{ \eta^2 RKL^2}  }\sqrt{    R^3\eta^4 K^5 L^2+  R^5  \eta^6 K^7    L^4  
 +  \eta^2 R K^3 } }\\
 & \ \times \pare{\sqrt{\E\norm{      \nabla  F_{\mask}( {\bw}_{e }  ) }^2 } + \sqrt{\E\norm{\nabla  F'_{\mask}( {\bw'}_{e }  )}^2}}\\
   & + \frac{RK \eta L }{Nn} \\
   & \ \times \pare{ 1+   \sqrt{ \eta^2 RKL^2}  }\pare{  \sqrt{ \eta^2 R K^3   \zeta  +     \pare{ R^2\eta^4 K^5 L^2+  R^4 \eta^6 K^7        L^4 } G  { \log(2RK/\nu)}+    R^3 \eta^4 K^5 L^2 \frac{\delta^2}{N}+  \eta^2 R K^3 \delta^2 } } \\ 
   &+   \frac{  RK\eta L}{Nn}    \sqrt{   R^2 \eta^2 K^2 \frac{\delta^2}{N}}+  \frac{RK}{Nn}\eta (\delta     +   G D_{\max})  +  \frac{ RK\eta L}{Nn} \sqrt{  \pare{ R\eta^2 K^2+   R^3 \eta^4 K^4        L^2 } G  { \log(2RK/\nu)}    } .
\end{align*}

Performing telescoping sum yields:

\begin{align*}
&    \E\norm{\bw_{T}  - {\bw'}_{T} } \\ 
  &    \leq          \frac{ \eta TRK}{Nn} \pare{ 1 +   L  \sqrt{  R^2\eta^2 K^2+   R^4  \eta^4 K^4    L^2  }  +    L  \pare{ 1+   \sqrt{ \eta^2 RKL^2}  }\sqrt{    R^3\eta^4 K^5 L^2+  R^5  \eta^6 K^7    L^4  
 +  \eta^2 R K^3 } }\\
 & \times \frac{1}{T}\sum_{t=1}^T\pare{\sqrt{\E\norm{      \nabla  F_{\mask}( {\bw}_{e }  ) }^2 } + \sqrt{\E\norm{\nabla  F'_{\mask}( {\bw'}_{e }  )}^2}}\\
   & + \frac{TRK \eta L }{Nn}\pare{ 1+   \sqrt{ \eta^2 RKL^2}  }\pare{  \sqrt{ \eta^2 R K^3   \zeta  +     \pare{ R^2\eta^4 K^5 L^2+  R^4 \eta^6 K^7        L^4 } G  { \log(2RK/\nu)}+    R^3 \eta^4 K^5 L^2 \frac{\delta^2}{N}+  \eta^2 R K^3 \delta^2 } } \\ 
   &+   \frac{ T RK\eta L}{Nn}    \sqrt{   R^2 \eta^2 K^2 \frac{\delta^2}{N}}+  \frac{TRK}{Nn}\eta (\delta     +   G D_{\max})  +  \frac{ TRK\eta L}{Nn} \sqrt{  \pare{ R\eta^2 K^2+   R^3 \eta^4 K^4        L^2 } G  { \log(2RK/\nu)}    }    \\
\end{align*}

Now we will bound the gradient norm $\E\norm{ \nabla F_{\mask}(   \bw_{r } )  }^2 $.
Due to Eq.(\ref{eq:point convergence}) we have
 \begin{align*}
  \frac{1}{T}\sum_{e=1}^T  \E \norm{ \nabla  F_{\mask}(\bw_e)}^2   \
        &\leq O\pare{\frac{\E [ F_{\mask}(\bw_0)]}{\eta RK T} }    + O\pare{      {L}^2 \eta^{2 } K^{2 } R     {G   {  \log(2RK/\nu)} } +  \eta^2 R  K^3   \zeta^2 +    \eta  R  K  \frac{\delta^2}{N} } 
    \end{align*}
 
Choosing $\eta = \frac{\sqrt{Nn}}{TRK}$ and $T \geq \sqrt{\frac{n}{N}} $ will conclude the proof:
\begin{align*}
  \E\norm{\bw_{T }  - {\bw'}_{T } }    
    & \leq   O\pare{  \frac{   \delta+GD_{\max} }{\sqrt{Nn}} +  \frac{1}{\sqrt{Nn}}
      \pare{ \sqrt{ \frac{\E [ F_{\mask}(\bw_0)]}{\sqrt{Nn}} }    +     \frac{{L}^2         {G   {  \log(2RK/\nu)} }}{T^2 R }   + \frac{  K    \zeta^2 }{T^2 R  K }    +       \frac{\sqrt{Nn}\delta^2}{TN}     } }\\  
      & \leq   O\pare{  \frac{   \delta+GD_{\max} }{\sqrt{Nn}} +  \frac{1}{\sqrt{Nn}}
      \pare{ \sqrt{ \frac{\E [ F_{\mask}(\bw_0)]}{\sqrt{Nn}} }    +     \frac{{L}^2         {G   {  \log(2RK/\nu)} }}{T^2 R }   + \frac{   K    \zeta^2 }{T^2 R }    +       \frac{\sqrt{Nn}\delta^2}{TN}     }  }.   
\end{align*}

\end{document}